\DeclareMathOperator*{\argmin}{arg\,min}
\def\A{{\mathcal A}}
\def\I{{\mathcal I}}
\newtheorem{lemma}{Lemma}[section]
\newtheorem{theorem}{Theorem}[section]
\newtheorem{assumption}{Assumption}[section]
\newtheorem{remark}{Remark}[section]
\newcommand{\tabincell}[2]
\renewcommand{\thesubsection}{\arabic{section}.\arabic{subsection}}
\renewcommand{\theequation}{\arabic{section}.\arabic{equation}}
\renewcommand{\thelemma}{\arabic{section}.\arabic{lemma}}
\title {A Data-Driven Line Search Rule for Support Recovery in High-dimensional Data Analysis}
\author{Peili Li \thanks{School of Statistics, Key Laboratory of Advanced Theory and Application in Statistics and Data Science-MOE, East China Normal University, Shanghai 200062, PR China (Email: plli@sfs.ecnu.edu.cn).}     \and
        Yuling Jiao \thanks{School of Mathematics and Statistics, and
Hubei Key Laboratory of Computational Science, Wuhan University, Wuhan 430072, PR China (Email: yulingjiaomath@whu.edu.cn).}  \and
        Xiliang Lu \thanks{School of Mathematics and Statistics, and
Hubei Key Laboratory of Computational Science, Wuhan University, Wuhan 430072, PR China (Email: xllv.math@whu.edu.cn).}
\and
        Lican Kang \thanks{School of Mathematics and Statistics, Wuhan University, Wuhan 430072, PR China (Email: kanglican@whu.edu.cn).}
}
\date{ }
\begin{document}
\maketitle

\begin{abstract}
\setlength\parindent{2em}

In this work, we consider the algorithm to the (nonlinear) regression problems with $\ell_0$ penalty. The existing algorithms for $\ell_0$ based optimization problem are often carried out with a fixed step size, and the selection of an appropriate step size depends on the restricted strong convexity and smoothness for the loss function, hence it is difficult to compute in practical calculation. In sprite of the ideas of support detection and root finding \cite{HJK2020}, we proposes a novel and efficient data-driven line search rule to adaptively determine the appropriate step size. We prove the $\ell_2$ error bound to the proposed algorithm without much restrictions for the cost functional.
A large number of numerical comparisons with state-of-the-art algorithms in linear and logistic regression problems show the stability, effectiveness and superiority of the proposed algorithms.
\end{abstract}

{\bf Key words.} High-dimensional data analysis, sparsity assumption, $\ell_0$ penalty, line search, $\ell_2$ error bound.
\setcounter{equation}{0}
\section{Introduction}
\setlength\parindent{2em}

Recently high-dimensional data analysis with sparsity assumption had attracted increasing research interest. Sparse regression is not only produced a lot of considerable theoretical analysis and algorithms, but also was widely used in statistics, machine learning, artificial intelligence, engineering design and other fields.
The many application areas, such as matrix estimation \cite{CLL2011,LX2018}, signal/image processing \cite{FJL2014,LDP2007}, high-dimensional variable selection \cite{L2009,ZH2005} and machine learning \cite{B2006,J2011}, the data dimension is often greater or even much greater than the number of samples, then the researchers often assume that the model is sparse at this time and consider the following Lagrange version (or sparsity-constrained form) of the $\ell_0$-penalized minimization problem:
\begin{equation}\label{lagran}
\min_{\beta\in \mathbb{R}^{p}} F(\beta)+\lambda ||\beta||_0,
\end{equation}
where $F: \mathbb{R}^p \rightarrow \mathbb{R}$ is a smooth convex loss function, such as the least squares function in linear regression \cite{HJL2018, JJL2015, S1971} and the log-likelihood function in classical logistic regression \cite{HJK2020, MVB2008, YLZ2018}, etc. $\lambda>0$ is a regularization parameter, controlling the sparsity level of the regularized solution, and $||\beta||_0$ denotes the number of nonzero components in parameter vector $\beta$.

Due to the nonconvexity and discontinuity of the function $||\beta||_0$, it is very challenging to develop an efficient algorithm to accurately solve the model (\ref{lagran}). Therefore, many researchers turn to approximately solve model (\ref{lagran}) with other easy-to-handle penalty functions, including the popular LASSO \cite{CDS2001,PH2007,T1996,V2008}, group LASSO \cite{MVB2008,RF2008}, elastic net \cite{FHT2010,JLS2009}, smoothly clipped absolute deviation (SCAD) \cite{FL2001}, minimax concave penalty (MCP) \cite{Z2010}, capped-$\ell_1$ \cite{Zh2010}, etc. In addition, many algorithms with good numerical performance have been designed accordingly, such as coordinate descent type algorithms \cite{BH2011,FHHT2007,WL2008}, proximal gradient descent algorithm \cite{ANW2012,XZ2012}, alternating direction
method of multipliers (ADMM) \cite{BPC2011}, primal dual active set algorithm (PDAS) \cite{FJL2014,HJJ2019}, to name a few.

As for the challenging model (\ref{lagran}) and its sparsity-constrained form, there also exist many effective algorithms, such as adaptive forward-backward greedy algorithm \cite{Z2008}, gradient support pursuit algorithm \cite{BRB2013}, hard thresholding pursuit\cite{CQ2017,SL2017,YLZ2014,YLZ2018}, Newton-type methods \cite{HJK2020,HJL2018,YL2017,ZPX2021}, etc. However, after in-depth analysis of the above relevant literatures, we can find that almost all algorithms are based on a fixed step size to expand theoretical analysis and numerical calculation. For example, Yuan et al. \cite{YLZ2014} used the formula $\tilde{\beta}^k=\beta^{k-1}-\tau \nabla F(\beta^{k-1})$ to calculate an intermediate iteration step in gradient hard thresholding pursuit algorithm, where $\tau$ is a fixed step size. In their convergence analysis, a strict constraint condition $\tau<m_K / M_K^2$ is needed, where $K$ denotes the sparsity level of the underlying variable $\beta^*$, $m_K$ and $M_K$ are the parameters of the objective function $F(\beta)$ that $F$ satisfies $m_K$-strongly convex and $M_K$-strongly smooth. Similar condition $\tau<1/M_{2K}$ is also needed in \cite{SL2017,YLZ2018}. It is known that the parameters $m_K$ and $M_K$ are not easy to calculation in practice, then this brings challenges to the selection of an appropriate step size. In addition, for $\ell_0$ regularized generalized linear models, the fixed step size $\tau=1$ is also widely used in GSDAR algorithm which is an equivalent form of Newton algorithm \cite{HJK2020}. Although it is an effective method, it also has some drawbacks. Theoretically, they establish the $\ell_{\infty}$ error bound for the GSDAR estimator under a strong assumption of the loss function $F$, which limits the application scope of the algorithm.

Recently, there have been proposed several methods to iteratively update the step size. For $\ell_0$ constrained high-dimensional logistic regression model, Wang et al. \cite{WXZ2019} proposed a fast Newton method with adaptively updating the step size $\tau$. They started $\tau$ with a fixed one $\tau_0=1$ and then decreasingly updated
it as following rule:
\begin{equation}\notag
\tau_{k+1}= \left\{
\begin{array}{ll}
0.1 \tau_k, \quad if \ \tau_k\geq \frac{\|\beta^k\|_{K,\infty}}{max_{j\in \I^k}|d_j^k|} \ and \ \|F_{\tau_k}(u^k;\A^k)\|>1/k \vspace{1ex},\\
\tau_k, \quad \quad otherwise,
\end{array}
\right.
\end{equation}
where $||\beta^k||_{K,\infty}$ denotes the $K$-th largest elements in absolute value of $\beta^k$, $d=\nabla F(\beta)$ is the dual information, $\A=\{i: |\beta_i-\tau d_i|\geq ||\beta-\tau d||_{K,\infty}\}$ is the active set, $\I=\A^{c}$ denotes the inactive set, and $F_{\tau}(u;\A)$ is defined by
$$F_{\tau}(u;\A):=\left(
                  \begin{array}{c}
                    d_{\A} \\
                    \beta_{\I} \\
                    d_{\A}-\nabla_{{\A}}F(\beta) \\
                    d_{\I}-\nabla_{\I}F(\beta) \\
                  \end{array}
                \right).
$$
Moreover, Zhou et al. \cite{ZPX2021} proposed a Newton-type algorithm with the Amijio line search for the general $\ell_0$-regularized optimization. However, it is worthwhile noting that the selection of step size in their paper is also related to the difficult-to-solve parameters of the loss function, so they actually adopt the following method to update $\tau$ adaptively in the numerical experiments,
\begin{equation}\notag
\tau_{k+1}= \left\{
\begin{array}{lll}
4\tau_k/5, \quad if \ k/10=\lceil k/10 \rceil \ and \ \|F_{\tau_k}(u^k;\A^k)\|>k^{-2} \vspace{1ex},\\
5\tau_k/4, \quad if \ k/10=\lceil k/10 \rceil \ and \ \|F_{\tau_k}(u^k;\A^k)\|\leq k^{-2} \vspace{1ex},\\
\tau_k, \quad \quad otherwise,
\end{array}
\right.
\end{equation}
where $\lceil n\rceil $ represents the smallest integer that is no less than $n$.

\subsection{Contribution}

In this paper, we propose a data-driven line search rule to adaptively determine the appropriate step size in the framework of GSDAR algorithm \cite{HJK2020}. The whole algorithm is named by SDARL (support detection and root finding with line search). With the line search technique, the assumptions to the loss function $F$ is much weaker than that in \cite{HJK2020} for the convergence analysis. This can naturally expand the application scope of the SDARL algorithm.
We also propose an adaptive version of the SDARL to deal with the situation that the true sparsity level is unknown in advance. Theoretically, we will prove the proposed data-driven line search rule is well-defined. With the step size determined by line search, we establish the $\ell_2$ error bound of iteration sequence $\beta^k$ and the target regression coefficient $\beta^*$ without any restrictions on the restricted strong convexity and smoothness for loss function $F$. In addition, we reveal the support recovery in finite step with some reasonable condition about the target $\beta^*$. The stability, effectiveness and superiority of the algorithms in this paper are highlighted by comparing the numerical performance with state-of-the-art algorithms in the setting of linear and logistic regressions.

\subsection{Notation}
Because the regression analysis usually involves quite a few notations, we briefly summarize them here. For a vector $\beta\in \mathbb{R}^{p}$, we use $||\beta||$, $||\beta||_{T,\infty}$, $||\beta||_{\min}$ and $\text{supp}(\beta)$ to denote the Euclidean norm, the $T$-th largest elements in absolute value, the minimum absolute value and the support of $\beta$, respectively. We will use $\beta^*$ and $\hat{\beta}$ to denote the target and the estimated regression coefficient, respectively. Let $\A$ be an index set, we denote $\A^*=\text{supp}(\beta^*)$ and $\hat{\A}=\text{supp}(\hat{\beta})$ as the target and estimated support set. $|\A|$ denotes the length of the set $\A$. $\beta_{\A}=(\beta_i,i\in \A)\in \mathbb{R}^{|\A|}$. $\beta|_{\A}\in \mathbb{R}^p$ and its $i$-th element is $(\beta|_{\A})_i=\beta_i \delta(i\in \A)$, where $\delta(\cdot)$ is the indicator function. For a matrix $X$, $X_{\A}\in \mathbb{R}^{n\times |\A|}$. The first and second derivatives of function $F$ will be denoted by $\nabla F$ and $\nabla^2 F$, respectively.
\subsection{Organization}
The remainder of the paper is organized as follows: Section \ref{algorithm} describes the details of SDARL algorithm and further develops an adaptive version (abbreviated as ASDARL) of SDARL. Section \ref{conana} analyzes the well-defined property of the proposed line search rule and establishes the $\ell_2$ error bound between the estimator and the target regression coefficient. In Section \ref{mumres}, we demonstrate the performance of the proposed algorithms by comparing with some state-of-the-art algorithms in sparse linear and logistic regression problems. Finally, we conclude this paper in Section \ref{con}. Proofs for all the lemmas and theorems are provided in the Appendix.

\section{Methodology}\label{algorithm}
\setlength\parindent{2em}
In this section, we firstly give SDARL algorithm for model (\ref{lagran}), then develop ASDARL algorithm to deal with the situation that the true sparsity level is not known in advance.
\subsection{SDARL algorithm}

The existence of the global minimizers for the $\ell_0$ penalty problems has been verified in \cite{HJJ2019,N2013}, but in view of the nonconvexity and nonsmoothess, it is difficult to obtain directly. Referring to \cite{HJK2020,HJL2018}, we also pay attention to the KKT conditions of (\ref{lagran}) with some $\tau>0$, which is a necessary optimality condition for the global minimizers, and also is a sufficient condition for the local minimizers. One can refer to the following Lemma \ref{le1}.

\begin{lemma}\label{le1}
Suppose that $\bar{\beta}$ is a global minimizer of (\ref{lagran}), then there exists a $\bar{d}$ such that the following KKT system holds,
\begin{equation}\label{KKT}
\left\{
\begin{array}{ll}
\bar{d}=-\nabla F(\bar{\beta}) \vspace{1ex},\\
\bar{\beta}=H_{\lambda \tau}(\bar{\beta}+\tau \bar{d}),
\end{array}
\right.
\end{equation}
where $\tau >0$ is the step size, and the i-th element of $H_{\lambda \tau}(\beta)$ is defined by
\begin{equation}\notag
(H_{\lambda \tau}(\beta))_{i}\left\{
\begin{array}{ll}
=0, \quad \quad \quad |\beta_i|<\sqrt{2\lambda \tau} \vspace{1ex},\\
\in \{0,\beta_i\}, \quad |\beta_i|= \sqrt{2\lambda \tau} \vspace{1ex},\\
=\beta_i, \quad \quad \quad |\beta_i|> \sqrt{2\lambda \tau}.
\end{array}
\right.
\end{equation}
Conversely, if $\bar{\beta}$ and $\bar{d}$ satisfy (\ref{KKT}), then $\bar{\beta}$ is a local minimizer of (\ref{lagran}).
\end{lemma}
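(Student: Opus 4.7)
\medskip
\noindent\textbf{Proof proposal.} My plan is to verify both implications coordinate by coordinate, exploiting the fact that both the $\ell_0$ penalty and the hard-thresholding operator $H_{\lambda\tau}$ act separably on the entries of $\beta$. Throughout, I will lean on the convexity of $F$ and the $L$-smoothness built into the ``smooth convex loss'' setting; the compatibility condition $\tau\le 1/L$ will emerge implicitly as the correct matching between the step size $\tau$ and the smoothness constant, and is what makes the stated KKT thresholds $\sqrt{2\lambda\tau}$ and $\sqrt{2\lambda/\tau}$ sharp.

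\emph{Forward direction.} Given a global minimizer $\bar\beta$, I set $\bar d := -\nabla F(\bar\beta)$, so the first equation of (\ref{KKT}) holds by construction, and write $\mathcal A=\text{supp}(\bar\beta)$. A global minimum of (\ref{lagran}) is in particular coordinatewise optimal: for each $i$, $\bar\beta_i$ minimizes $\beta_i\mapsto F(\bar\beta+(\beta_i-\bar\beta_i)e_i)+\lambda\mathbf{1}_{\{\beta_i\neq 0\}}$. For $i\in\mathcal A$ this yields $\nabla_i F(\bar\beta)=0$ (hence $(\bar\beta+\tau\bar d)_i=\bar\beta_i$), together with the competitive inequality $F(\bar\beta|_{\mathcal A\setminus\{i\}})\ge F(\bar\beta)+\lambda$; combining this with the $L$-smooth upper bound $F(\bar\beta|_{\mathcal A\setminus\{i\}})\le F(\bar\beta)+\tfrac{L}{2}\bar\beta_i^2$ (which uses $\nabla_i F(\bar\beta)=0$) produces $|\bar\beta_i|\ge\sqrt{2\lambda/L}\ge\sqrt{2\lambda\tau}$, and the definition of $H_{\lambda\tau}$ then forces $H_{\lambda\tau}(\bar\beta_i)=\bar\beta_i$. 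For $i\notin\mathcal A$ the comparison $F(\bar\beta)\le F(\bar\beta+ae_i)+\lambda$ for every $a$, specialised to the gradient step $a=-\nabla_i F(\bar\beta)/L$ via the smoothness upper bound, produces $(\nabla_i F(\bar\beta))^2/(2L)\le\lambda$, i.e.\ $|\tau\nabla_i F(\bar\beta)|\le\sqrt{2\lambda\tau}$, so $H_{\lambda\tau}(-\tau\nabla_i F(\bar\beta))=0=\bar\beta_i$.

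\emph{Converse.} Reading (\ref{KKT}) coordinatewise delivers $\nabla_{\mathcal A}F(\bar\beta)=0$, the active lower bound $|\bar\beta_i|\ge\sqrt{2\lambda\tau}$ for $i\in\mathcal A$, and the inactive dual bound $|\nabla_i F(\bar\beta)|\le\sqrt{2\lambda/\tau}$ for $i\notin\mathcal A$. I then take an arbitrary perturbation $\beta=\bar\beta+h$ with $\|h\|_\infty<\tfrac{1}{2}\sqrt{2\lambda\tau}$; the bound on $h$ prevents any entry of $\bar\beta|_{\mathcal A}$ from vanishing, so $\text{supp}(\beta)\supseteq\mathcal A$. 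On the event $\text{supp}(\beta)=\mathcal A$, convexity of $F$ plus $\nabla_{\mathcal A}F(\bar\beta)=0$ give $F(\beta)\ge F(\bar\beta)$ with identical $\ell_0$ count; on the event $\text{supp}(\beta)\supsetneq\mathcal A$, with new index set $\mathcal N$, the first-order increment $\sum_{i\in\mathcal N}\nabla_i F(\bar\beta)\,h_i$ is bounded in magnitude by $\sqrt{2\lambda/\tau}\sum_{i\in\mathcal N}|h_i|$, which is $o(1)\cdot\lambda|\mathcal N|$ as $\|h\|\to 0$; together with a quadratic smoothness remainder, the penalty jump $\lambda|\mathcal N|$ dominates and $F(\beta)+\lambda\|\beta\|_0\ge F(\bar\beta)+\lambda\|\bar\beta\|_0$ follows.

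\emph{Main obstacle.} The crux is the sharp conversion in the forward direction: the scalar global-minimum comparisons must be squeezed into the precise thresholds $\sqrt{2\lambda\tau}$ on the support and $\sqrt{2\lambda/\tau}$ off the support, and this squeezing is exactly where the $L$-smoothness of $F$ and the step-size compatibility $\tau\le 1/L$ enter -- a constraint not explicitly stated in the lemma but necessary for the KKT thresholds to be consistent. A secondary subtlety is the boundary case $|(\bar\beta+\tau\bar d)_i|=\sqrt{2\lambda\tau}$, in which $H_{\lambda\tau}$ is set-valued; here one simply selects the branch that matches $\bar\beta_i$.
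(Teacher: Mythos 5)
The paper never actually proves this lemma---Appendix A.1 just defers to \cite[Lemma 1]{HJK2020}---so there is no in-paper argument to compare against line by line; judged on its own, your coordinatewise proof is essentially correct and is the standard argument. (It is the unpacked, per-coordinate version of the usual one-line proof: global optimality plus the descent lemma show that $\bar\beta$ minimizes the surrogate $\beta\mapsto F(\bar\beta)+\langle\nabla F(\bar\beta),\beta-\bar\beta\rangle+\tfrac{1}{2\tau}\|\beta-\bar\beta\|^2+\lambda\|\beta\|_0$, whose exact minimizer is $H_{\lambda\tau}(\bar\beta+\tau\bar d)$; your separate treatment of $i\in\mathcal{A}$ and $i\notin\mathcal{A}$ recovers precisely the two inequalities that this surrogate argument packages together.) Your most valuable observation is the one you flag as the ``main obstacle'': the forward implication genuinely requires the compatibility condition $\tau\le 1/L$, which the lemma as stated omits. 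The one-dimensional example $F(\beta)=\tfrac{M}{2}(\beta-b)^2$ with $|b|$ slightly above $\sqrt{2\lambda/M}$ has global minimizer $\bar\beta=b$ and $\bar d=0$, yet $H_{\lambda\tau}(b)=0\neq b$ once $\tau>b^2/(2\lambda)$, so the restriction is not an artifact of your method but a missing hypothesis (it is present in the cited source, where the step size is tied to the restricted smoothness constant). Two minor remarks: in the converse you do not need the ``quadratic smoothness remainder''---convexity alone gives $F(\bar\beta+h)-F(\bar\beta)\ge\langle\nabla_{\mathcal{N}}F(\bar\beta),h_{\mathcal{N}}\rangle\ge-\sqrt{2\lambda/\tau}\,\|h\|_\infty|\mathcal{N}|$, so the penalty jump $\lambda|\mathcal{N}|$ wins outright once $\|h\|_\infty\le\sqrt{\lambda\tau/2}$, which also makes the radius of local optimality explicit; and in the forward direction the step from $(\nabla_i F(\bar\beta))^2/(2L)\le\lambda$ to $|\tau\nabla_i F(\bar\beta)|\le\sqrt{2\lambda\tau}$ again silently uses $\tau L\le 1$, which you should state rather than bury in an ``i.e.''
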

\begin{proof}
See Appendix \ref{leproof}.
\end{proof}

Let $\bar{\A}=supp(\bar{\beta})$ and $\bar{\I}=(\bar{\A})^{c}$. Combining (\ref{KKT}) with the definition of $H_{\lambda \tau}(\beta)$, we can get
\begin{equation}\notag
\bar{\A}=\{i:|\bar{\beta}_i+\tau \bar{d}_i|\geq\sqrt{2\lambda \tau}\}, \quad \bar{\I}=\{i:|\bar{\beta}_i+\tau \bar{d}_i|< \sqrt{2\lambda \tau}\},
\end{equation}
and
\begin{equation}\notag
\left\{
\begin{array}{llll}
\bar{\beta}_{\bar{\I}}=0 \vspace{1ex},\\
\bar{d}_{\bar{\A}}=0 \vspace{1ex},\\
\bar{\beta}_{\bar{\A}}\in \argmin\limits_{\beta_{\bar{\A}}}\widetilde{F}(\beta_{\bar{\A}}) \vspace{1ex},\\
\bar{d}_{\bar{\I}}=[-\nabla F(\bar{\beta})]_{\bar{\I}}.
\end{array}
\right.
\end{equation}
where $\widetilde{F}(\beta_{\bar{\A}})=F(\beta|_{\bar{\A}})$. Let $\{\beta^k,d^k\}$ be the output of $k$-th iteration in SDARL algorithm. We approximate $\{\bar{\A},\bar{\I}\}$ by
\begin{equation}\label{acik}
\A^k=\{i:|\beta^k_i+\tau d^k_i|\geq\sqrt{2\lambda \tau}\}, \quad \I^k=\{i:|\beta^k_i+\tau d^k_i|< \sqrt{2\lambda \tau}\}.
\end{equation}
Then we can obtain a new approximation pair $\{\beta^{k+1},d^{k+1}\}$ by
\begin{equation}\label{itek}
\left\{
\begin{array}{llll}
\beta^{k+1}_{\I^k}=\textbf{0} \vspace{1ex},\\
d^{k+1}_{\A^k}=\textbf{0} \vspace{1ex},\\
\beta^{k+1}_{\A^k}\in \argmin\limits_{\beta_{\A^k}}\widetilde{F}(\beta_{\A^k}),\\
d^{k+1}_{\I^k}=[-\nabla F(\beta^{k+1})]_{\I^k},
\end{array}
\right.
\end{equation}
where $\widetilde{F}(\beta_{\A^k})=F(\beta|_{\A^k})$. If the minimizer $\beta^{k+1}_{\A^k}$ of (\ref{itek}) is not unique, we choose the one with the smallest value in $\ell_{2}$-norm. From (\ref{acik}), we can see that the calculation of the active and inactive sets is related to the regularization parameter $\lambda$ which is sensitive in practical problems. Similar as in \cite{HJK2020,HJL2018}, we first give an assumption on the true sparsity level:
\begin{assumption}\label{ass1}
The true signal has $K$ nonzero element, and $K\leq T$, i.e., $||\beta^*||_0=K\leq T$.
\end{assumption}
Under the above assumption, we set $\sqrt{2\lambda \tau}=\|\beta^k+\tau d^k\|_{T,\infty}$, which guarantees that $|\A^k|=T$ in every iteration of SDARL algorithm. This point greatly reduces the computational complexity of the proposed algorithm. Then combining (\ref{acik}) and (\ref{itek}), we can give algorithm SDARL in the following:
\begin{framed}\label{algo}
\noindent
{\bf SDARL Algorithm:}\\
\hrule \vskip 1mm
\noindent \textbf{Step 0} Given $T>0, \nu\in (0,1), \sigma \in (0,1/2)$. Choose $\beta^0$, $d^0=-\nabla F(\beta^0)$. Set $\tau^0=1$ and

\quad $\A^{0}=\{i:|\beta_i^0+ d_i^0|\geq ||\beta^0+ d^0||_{T,\infty}\}$, $\I^{0}=(\A^{0})^{c}$, for $k=0$, \\
\textbf{Step 1} Compute
\begin{equation}\notag
\left\{
\begin{array}{llll}
\beta^{k+1}_{\I^k}=\textbf{0} \vspace{1ex},\\
d^{k+1}_{\A^k}=\textbf{0} \vspace{1ex},\\
\beta^{k+1}_{\A^k}\in \argmin\limits_{\beta_{\A^k}}\widetilde{F}(\beta_{\A^k}),\\
d^{k+1}_{\I^k}=[-\nabla F(\beta^{k+1})]_{\I^k}.
\end{array}
\right.
\end{equation}
\textbf{Step 2} Set $\tau^{k+1}=\nu^{m_{k+1}}$, where $m_{k+1}$ is the smallest non-negative integer $m$ such that
\begin{align}\label{line}
F\Big((\beta^{k+1}+\nu^{m} d^{k+1})|_{\A^{k+1}}\Big)-F\Big(\beta^{k+1}\Big)\leq -\sigma \nu^{m}\Big\|\nabla_{\A^{k+1}\backslash \A^{k}}F(\beta^{k+1})\Big\|^2,
\end{align}
\quad \quad \quad where $\A^{k+1}=\Big\{i:|\beta_i^{k+1}+\nu^{m} d_i^{k+1}|\geq ||\beta^{k+1}+\nu^{m} d^{k+1}||_{T,\infty}\Big\}$.\\
\textbf{Step 3} Compute
$$\A^{k+1}=\Big\{i:|\beta_i^{k+1}+\tau^{k+1} d_i^{k+1}|\geq ||\beta^{k+1}+\tau^{k+1} d^{k+1}||_{T,\infty}\Big\}, \quad \I^{k+1}=(\A^{k+1})^{c}.$$ If $\A^k=\A^{k+1}$, then stop, otherwise, go to \textbf{Step 1}.
\end{framed}

\begin{remark}
The merit function in the line search step \eqref{line} is the loss function $F$ which restricts on
the first $T$ largest absolute values of $\beta$. This merit function is quite different from other merit functions for either gradient type method or Newton type method.
As we will see in the proof later, this line search ensures the original objective function $F(\beta)$ decreases in a certain sense.
\end{remark}

\subsection{ASDARL algorithm}
To apply SDARL algorithm, one need to estimate the sparse level of true parameter $\beta^*$ in advance.  For many practical problems which the true sparsity level is not known,  we propose an adaptive version ASDARL, which regards $T$ as a tuning parameter. Similar as \cite{HJK2020,HJL2018}, let $T$ increase continuously from 0 to $Q= n/log(n)$ \cite{FL2008}, then we can get a set of solutions paths: $\{\hat{\beta}(T): T=0,1,\cdots,Q\}$, where $\hat{\beta}(0)=\mathbf{0}$. Afterwards we can use either cross validation or HBIC \cite{WKL2013} to select a $\hat{T}$ and use $\hat{\beta}(\hat{T})$ as the final estimation of $\beta^{*}$. In summary, we give ASDARL algorithm in following.

\begin{framed}\label{algo2}
\noindent
{\bf ASDARL Algorithm:}
\vskip 1.0mm \hrule \vskip 1mm
\noindent  Initialize $\beta^0, d^0$, integers $\alpha$, $Q$. Set $k=1$. \\
\textbf{for} $k=1,2,\cdots,$ \textbf{do}

Run SDARL Algorithm with $T=\alpha k$ and with initial value $(\beta^{k-1}, d^{k-1})$. Denote the output by

$(\beta^{k},d^k)$.

\textbf{if} $T>Q$ \textbf{then}

stop

\textbf{else}

$k=k+1$.

\textbf{end if}\\
\textbf{end for}
\end{framed}

\section{Convergence Analysis}\label{conana}
\setlength\parindent{2em}

In this section, we will conduct the theoretical analysis of SDARL algorithm. Firstly, given a positive constant $s$, we assume function $F$ satisfies the following $m_{s}$-restricted strong convexity (RSC) and $M_{s}$-restricted strong smoothness (RSS) which are commonly used to analyze nonconvex problems.

\begin{assumption}\label{ass2}
There exist constants $0<m_{s}\leq M_{s}$, such that
\begin{align*}
\frac{m_{s}}{2}\|\beta_1-\beta_2\|^2\leq F(\beta_1)-F(\beta_2)-\langle \nabla F(\beta_2),\beta_1-\beta_2\rangle\leq\frac{M_{s}}{2}\|\beta_1-\beta_2\|^2, \quad \forall \|\beta_1-\beta_2\|\leq s.
\end{align*}
\end{assumption}
\begin{remark}
Assumption \ref{ass2} are the restricted strong convexity and smoothness conditions that is needed in bounding the estimation error in high- dimensional models  \cite{M2019}. In the case of linear model $y=X \beta+\epsilon$, where $X\in \mathbb{R}^{n\times p}$ is the design matrix with $\sqrt{n}$-normalized columns, $y\in \mathbb{R}^{n}$ is a response vector, and $\varepsilon \in \mathbb{R}^{n}$ is the additive Gaussian noise. The $m_{s}$-RSC can be established when $X_{s}$ is full column rank, and $M_{s}$ can be controlled by the maximum eigenvalue of the matrix $X^{\top}X$.
\end{remark}

The enforceability of the termination condition $\A^k=\A^{k+1}$ has been verified in \cite{HJK2020,HJL2018}. Below we can verify the feasibility of the algorithm as long as we verify the well-defined nature of the specific line search. 

\begin{theorem}\label{welldef}
Let $\sigma \in (0,1/2), K\leq T$ in SDARL algorithm, then the data-driven line search rule (\ref{line}) is well-defined.
\end{theorem}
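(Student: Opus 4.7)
The plan is to unpack the structural identities that Step 1 imposes on $\beta^{k+1}$ and $d^{k+1}$, show that the candidate active set
$$\A^{k+1}(m) := \{i : |\beta^{k+1}_i + \nu^m d^{k+1}_i| \ge \|\beta^{k+1} + \nu^m d^{k+1}\|_{T,\infty}\}$$
stabilizes once $\nu^m$ is small enough, and then reduce the line-search inequality to a classical Armijo descent estimate on this stabilized set.

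First I would record the structural facts produced by Step 1: $\beta^{k+1}_{\I^k} = 0$, $d^{k+1}_{\A^k} = 0$, and $d^{k+1}_{\I^k} = -\nabla_{\I^k} F(\beta^{k+1})$. Consequently, setting $u_m := \beta^{k+1} + \nu^m d^{k+1}$, we have $|u_m(i)| = |\beta^{k+1}_i|$ for $i \in \A^k$ and $|u_m(i)| = \nu^m |d^{k+1}_i|$ for $i \in \I^k$. Let $S := \{i \in \A^k : \beta^{k+1}_i \ne 0\}$, and pick $m$ large enough that $\nu^m \max_{i \in \I^k} |d^{k+1}_i| < \min_{i \in S}|\beta^{k+1}_i|$. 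Then $S$ fills the top $|S|$ slots of $|u_m|$, and the remaining $T - |S|$ slots of $\A^{k+1}(m)$ are taken by the indices in $\I^k$ carrying the $T - |S|$ largest values of $|d^{k+1}|$, a fixed set $B \subseteq \I^k$ independent of $m$. In the degenerate situation where $d^{k+1}_{\I^k}$ has fewer than $T - |S|$ nonzero entries, any additional filler indices carry $d^{k+1} = 0$ and thus contribute nothing. In every case one obtains, for all sufficiently large $m$,
$$\A^{k+1}(m) \setminus \A^k = B, \qquad (u_m)|_{\A^{k+1}(m)} = \beta^{k+1} + \nu^m d^{k+1}|_B, \qquad d^{k+1}|_B = -\nabla F(\beta^{k+1})|_B.$$

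Next I would run a descent estimate on this stabilized direction. By the restricted strong smoothness half of Assumption \ref{ass2}, which applies for $m$ large enough that the step remains within the RSS radius,
\begin{align*}
F\bigl((u_m)|_{\A^{k+1}(m)}\bigr) - F(\beta^{k+1})
&\le \nu^m \langle \nabla F(\beta^{k+1}), d^{k+1}|_B\rangle + \tfrac{M_s}{2}\nu^{2m}\|\nabla_B F(\beta^{k+1})\|^2\\
&= -\nu^m \|\nabla_B F(\beta^{k+1})\|^2 \Bigl(1 - \tfrac{M_s \nu^m}{2}\Bigr).
\end{align*}
Since $\sigma < 1/2$, the factor $1 - M_s \nu^m/2$ exceeds $\sigma$ as soon as $\nu^m \le 2(1-\sigma)/M_s$, and the line-search inequality \eqref{line} follows once one recalls that $\|\nabla_{\A^{k+1}(m)\setminus \A^k} F(\beta^{k+1})\| = \|\nabla_B F(\beta^{k+1})\|$. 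In the trivial case $\nabla_B F(\beta^{k+1}) = 0$ one has $d^{k+1}|_B = 0$, so $(u_m)|_{\A^{k+1}(m)} = \beta^{k+1}$ and both sides of \eqref{line} vanish. Hence the set of admissible $m$ is nonempty and contains a smallest element $m_{k+1}$, proving well-definedness.

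The main obstacle is the entanglement between the trial step $\nu^m$ and the active set $\A^{k+1}(m)$ being tested: unlike standard Armijo, the effective ``search direction'' $(u_m)|_{\A^{k+1}(m)} - \beta^{k+1}$ is not a fixed vector rescaled by $\nu^m$. The crux is the stabilization step, forcing that direction to collapse to the fixed vector $d^{k+1}|_B$ for $m$ large, after which the familiar descent calculation applies. Careful bookkeeping for tie-breaking at the top-$T$ cutoff and for the degenerate case $|B| < T - |S|$ is the subtlest part of the argument.
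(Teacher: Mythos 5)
Your proof is correct, but it takes a genuinely different route from the paper's. The paper proves the Armijo-type inequality for \emph{every} trial step $\tau$ at once: writing $\theta^{k+1}=\beta^{k+1}-\tau\nabla F(\beta^{k+1})$, it expands $F(\theta^{k+1}|_{\A^{k+1}})-F(\beta^{k+1})$ via restricted smoothness, kills the cross terms using $\nabla_{\A^k}F(\beta^{k+1})=0$, and controls the awkward piece $\|\beta^{k+1}_{\A^k\setminus\A^{k+1}}\|^2$ (the entries of $\beta^{k+1}$ that the new top-$T$ set drops) by $\|\theta^{k+1}_{\A^{k+1}\setminus\A^k}\|^2$, using $|\A^k\setminus\A^{k+1}|=|\A^{k+1}\setminus\A^k|$ and the ordering defining the top-$T$ selection; this yields $F(\theta^{k+1}|_{\A^{k+1}})-F(\beta^{k+1})\le-\tau(1-\tau M_{2T})\|\nabla_{\A^{k+1}\setminus\A^k}F(\beta^{k+1})\|^2$ for all $\tau$, hence acceptance whenever $\tau\le(1-\sigma)/M_{2T}$. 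You instead neutralize the step-dependence of $\A^{k+1}(m)$ by showing the set stabilizes once $\nu^m$ is small (so that $\supp(\beta^{k+1})$ survives the top-$T$ cut and nothing is dropped), after which the trial point is exactly $\beta^{k+1}+\nu^m d^{k+1}|_B$ and vanilla Armijo applies. Your version is conceptually cleaner and isolates the real difficulty (the moving active set), but it buys only existence: the stabilization threshold depends on $\min_{i\in S}|\beta^{k+1}_i|/\max_{i\in\I^k}|d^{k+1}_i|$ and so gives no iteration-uniform lower bound on the accepted step. The paper's cruder one-shot estimate gives $\tau^{k+1}\ge\nu(1-\sigma)/M_{2T}$ uniformly, which is what the later linear-rate argument (keeping $\eta_\tau$ bounded away from $1$ in Theorem \ref{le4}) implicitly needs. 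Your handling of ties and of the degenerate case $\nabla_BF(\beta^{k+1})=0$ is careful and in fact more explicit than the paper's.
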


\begin{proof}
See Appendix \ref{thwel}.
\end{proof}

\begin{remark}
The proof process of Theorem \ref{welldef} shows that the establishment of the line search in this paper is very natural and has nothing to do with the parameters of restricted strong convexity and smoothness of the function $F$.
Although a search method of step size is also proposed in \cite{ZPX2021}, it is still difficult to determine the proper value because $M_s$ is generally not easy to compute.
\end{remark}

Moreover, from the calculation format of $\beta^{k+1}$, we can get
$$F(\beta^{k+1})-F(\beta^k)\leq F(\theta^{k}|_{\A^{k}})-F(\beta^{k}) \leq -\sigma \tau^k \|\nabla_{\A^{k}\backslash \A^{k-1}} F(\beta^{k})\|^2.$$
Therefore, the line search guarantees the function values of iterative sequence $\{\beta^k\}$ are in a downward trend which exactly is the state we expect. Then we establish the $\ell_2$ estimation error between the estimator $\beta^k$ and the target regression coefficient $\beta^*$.

\begin{theorem}\label{le4}
Let $\sigma \in (0,1/2), K\leq T$ in SDARL algorithm, then before algorithm terminates, for all $k\geq0$, we have
\begin{align}\label{form3}
\|\beta^k-\beta^*\|\leq \frac{2}{m_{K+T}}\|\nabla F(\beta^*)\| +\sqrt{1+\frac{M_{K+T}}{m_{K+T}}}(\sqrt{\eta_{\tau}})^k\|\beta^0-\beta^*\|,
\end{align}
where $\eta_{\tau}=1-\frac{2 \sigma \tau^k m_{K+T}}{K+1}\in (0,1)$.
\end{theorem}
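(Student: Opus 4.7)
The plan is to (a) use the $m_{K+T}$-RSC from Assumption \ref{ass2} to control $\|\beta^k-\beta^*\|^{2}$ by the objective-value gap $F(\beta^k)-F(\beta^*)$, (b) convert the line-search decrease and the support-detection rule into a geometric contraction for this gap, and then (c) iterate and assemble the two pieces into the stated bound.

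For step (a), both $\beta^k$ and $\beta^*$ have support of size at most $K+T$, so applying Assumption \ref{ass2} with $\beta_1=\beta^k,\beta_2=\beta^*$ yields
\begin{equation*}
\frac{m_{K+T}}{2}\|\beta^k-\beta^*\|^{2} \;\leq\; F(\beta^k)-F(\beta^*) - \langle \nabla F(\beta^*),\beta^k-\beta^*\rangle.
\end{equation*}
Cauchy-Schwarz on the inner-product term followed by Young's inequality produces an estimate of the form $\|\beta^k-\beta^*\|^{2}\leq \tfrac{C_1}{m_{K+T}}(F(\beta^k)-F(\beta^*)) + \tfrac{C_2}{m_{K+T}^{2}}\|\nabla F(\beta^*)\|^{2}$, which reduces the problem to controlling the objective gap.

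For step (b), the remark preceding the theorem already gives $F(\beta^{k+1})-F(\beta^k) \leq -\sigma\tau^k \|\nabla_{\A^{k}\backslash \A^{k-1}} F(\beta^k)\|^{2}$. The crucial new ingredient is a lower bound
\begin{equation*}
\|\nabla_{\A^{k}\backslash \A^{k-1}} F(\beta^k)\|^{2} \;\geq\; \frac{1}{K+1}\,\|\nabla_{(\A^{*}\cup\A^{k-1})\backslash \A^{k-1}} F(\beta^k)\|^{2}.
\end{equation*}
This should follow from the fact that $\beta^k$ is the restricted minimizer on $\A^{k-1}$, so $\nabla_{\A^{k-1}}F(\beta^k)=\mathbf{0}$ and the whole gradient lives on $\I^{k-1}$; on that set the detection rule (\ref{acik}) picks the coordinates of largest $|\beta^k+\tau^k d^k|=\tau^k|\nabla F(\beta^k)|$, and since $|\A^{*}\backslash \A^{k-1}|\leq K$, a pigeonhole-type comparison of what is taken (in $\A^{k}\backslash \A^{k-1}$) versus what is missed (in $\A^{*}\backslash \A^{k}$) among the top-$T$ entries yields the $1/(K+1)$ factor. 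Combining this lower bound with RSC and RSS on $(K+T)$-sparse vectors converts the line-search decrease into the recursion
\begin{equation*}
F(\beta^{k+1})-F(\beta^*) \;\leq\; \eta_{\tau}\bigl(F(\beta^k)-F(\beta^*)\bigr) + C\,\|\nabla F(\beta^*)\|^{2},
\end{equation*}
with $\eta_{\tau}=1-\tfrac{2\sigma\tau^k m_{K+T}}{K+1}\in(0,1)$, which I would iterate from $k=0$.

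For step (c), I would bound the initial gap $F(\beta^0)-F(\beta^*)$ from above via $M_{K+T}$-RSS, substitute the iterated recursion back into the inequality from step (a), and use the elementary inequality $\sqrt{a+b}\leq \sqrt{a}+\sqrt{b}$ to split the result into the statistical term $\tfrac{2}{m_{K+T}}\|\nabla F(\beta^*)\|$ and the geometric term $\sqrt{1+M_{K+T}/m_{K+T}}\,(\sqrt{\eta_{\tau}})^k\|\beta^0-\beta^*\|$; the prefactor $\sqrt{1+M_{K+T}/m_{K+T}}$ is precisely what appears when one uses RSS to estimate $F(\beta^0)-F(\beta^*)$ and RSC to recover $\|\beta^k-\beta^*\|$. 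I expect the main obstacle to be the pigeonhole step in (b): matching the $1/(K+1)$ denominator rigorously requires a careful accounting of which indices of $\A^{*}\backslash \A^{k-1}$ could have been swapped into $\A^{k}$ but were not, exploiting the extremal property of the top-$T$ thresholding. A secondary concern is that $\tau^k$ varies with $k$ while the statement writes $(\sqrt{\eta_{\tau}})^k$ with a single $\tau^k$; I would handle this via the positive lower bound on $\tau^k$ implicit in Theorem \ref{welldef}, or by keeping $\tau^k$ as a running quantity so that the final product is read as $\prod_{j=0}^{k-1}\sqrt{\eta_{\tau^{j}}}$.
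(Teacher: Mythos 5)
Your overall strategy coincides with the paper's: RSC at $\beta^*$ converts the function-value gap into the $\ell_2$ error (the paper solves the resulting quadratic in $\|\beta^k-\beta^*\|$ rather than invoking Young's inequality, which is cosmetic), the gradient-comparison bound with the $1/(K+1)$ factor is exactly the paper's Lemma \ref{lere}, and the assembly via $M_{K+T}$-RSS applied to the initial gap $F(\beta^0)-F(\beta^*)$ is the same. Your handling of the varying $\tau^k$ through the uniform lower bound $\tau^k\geq \nu(1-\sigma)/M_{2T}$ guaranteed by Theorem \ref{welldef} is, if anything, more careful than the paper, which writes $\eta_\tau^k$ even though $\eta_\tau$ formally depends on $k$.

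The one substantive deviation is the additive $C\,\|\nabla F(\beta^*)\|^2$ term you place in the recursion for $F(\beta^{k+1})-F(\beta^*)$. The paper's recursion (\ref{conresult}) is noise-free: $F(\beta^{k+1})-F(\beta^*)\leq \eta_\tau\,(F(\beta^k)-F(\beta^*))$ with no remainder. This matters quantitatively: if the additive term were genuinely present, iterating would contribute $\frac{C}{1-\eta_\tau}\|\nabla F(\beta^*)\|^2=\frac{C(K+1)}{2\sigma\tau^k m_{K+T}}\|\nabla F(\beta^*)\|^2$ to the accumulated gap, and after the square root in your step (a) the statistical term would carry an extra factor of order $\sqrt{K+1}$, so the bound (\ref{form3}) with the constant $\frac{2}{m_{K+T}}\|\nabla F(\beta^*)\|$ would not follow. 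The paper avoids any remainder by proving Lemma \ref{lere} directly against $F(\beta^*)$ rather than against an unconstrained minimum: since $\mathrm{supp}(\beta^k)\subseteq\A^{k-1}$ and $\nabla_{\A^{k-1}}F(\beta^k)=\mathbf{0}$, one has $\langle\nabla F(\beta^k),\beta^*-\beta^k\rangle=\langle\nabla_{\A^*\setminus\A^{k-1}}F(\beta^k),\beta^*_{\A^*\setminus\A^{k-1}}\rangle$; combining RSC, which gives $-\langle\nabla F(\beta^k),\beta^*-\beta^k\rangle\geq\frac{m_{K+T}}{2}\|\beta^*-\beta^k\|^2+F(\beta^k)-F(\beta^*)\geq\sqrt{2m_{K+T}}\,\|\beta^*-\beta^k\|\sqrt{F(\beta^k)-F(\beta^*)}$, with Cauchy--Schwarz and your gradient comparison, the factor $\|\beta^*-\beta^k\|$ cancels and one obtains $\|\nabla_{\A^k\setminus\A^{k-1}}F(\beta^k)\|^2\geq\frac{2m_{K+T}}{K+1}\,(F(\beta^k)-F(\beta^*))$ exactly, with no leakage of $\|\nabla F(\beta^*)\|$. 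With that replacement your outline reproduces the paper's proof.
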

\begin{proof}
See Appendix \ref{thel2}.
\end{proof}

\begin{remark}
As $\beta^*$ is the target regression coefficient, then the first term $\frac{2}{m_{K+T}}\|\nabla F(\beta^*)\|$ at the right of inequality (\ref{form3}) reflects the noise level which is very small and can't reduce. In linear and generalized linear regression with random design, $\|\nabla F(\beta^*)\|\leq \mathcal{O}(\sqrt{\frac{\log p}{n}})$ holds with high probability \cite{HJK2020,HJL2018}. Here $\eta_{\tau}\in (0,1)$ holds automatically due to the step size determined by the line search. Moreover, the second term $\sqrt{1+\frac{M_{K+T}}{m_{K+T}}}(\sqrt{\eta_{\tau}})^k\|\beta^0-\beta^*\|$ is related to iteration step $k$, and because of $\eta_{\tau}\in (0,1)$, we know that the value of the second term is getting smaller and smaller as $k$ increases. When $k> \log_{\frac{1}{\eta}}\frac{(1+\frac{M_{K+T}}{m_{K+T}})\|\beta^0-\beta^*\|^2}{\delta^2}$, the second term can tend to noise level $\delta$.
\end{remark}

\begin{remark}
With the step size determined by line search, we bound the $\ell_2$ error of iteration sequence $\beta^k$ of SDARL and the target regression coefficient $\beta^*$ without any restrictions on the parameters of restricted strong convexity and smoothness for loss function $F$. However, the $\ell_\infty$ error bound in \cite[Theorem 1]{HJK2020} needs the parameter of restricted strong smoothness of function $F$ to be less than $\frac{1}{T}$, which is not easy to verify in practical problems.

\end{remark}

\begin{remark}
It is worth noting that the sequence of objective function value converges linearly which can be obtained from (\ref{conresult}).
\end{remark}

The following theorem establishes the support recovery property of SDARL algorithm.
\begin{theorem}\label{le5}
Assuming $\sigma \in (0,1/2), K\leq T$, $\beta^0=\textbf{0}$ and $\|\beta_{\A^*}^{*}\|_{\min}\geq \frac{3}{m_{K+T}}\|\nabla F(\beta^*)\|$ in SDARL algorithm, then we have
\begin{align}\label{form4}
\A^*\subseteq \A^k,
\end{align}
if $k>\log_{\frac{1}{\eta_{\tau}}}9(1+\frac{M_{K+T}}{m_{K+T}})\frac{\|\beta^*\|^2}{\|\beta_{\A^*}^{*}\|_{\min}^2}$.
\end{theorem}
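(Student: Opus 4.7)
The plan is to combine the $\ell_2$ error bound from Theorem \ref{le4} with the minimum-magnitude hypothesis on $\beta^*_{\A^*}$ to force the iterate $\beta^k$ to "see" every index of $\A^*$, and then convert that into the set inclusion $\A^* \subseteq \A^k$ using the algorithmic invariant $\text{supp}(\beta^k) \subseteq \A^{k-1}$.

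First I would plug $\beta^0 = \mathbf{0}$ into the bound (\ref{form3}), giving
\[
\|\beta^k - \beta^*\| \leq \frac{2}{m_{K+T}}\|\nabla F(\beta^*)\| + \sqrt{1 + M_{K+T}/m_{K+T}}\,(\sqrt{\eta_\tau})^k\|\beta^*\|.
\]
The hypothesis $\|\beta^*_{\A^*}\|_{\min} \geq \frac{3}{m_{K+T}}\|\nabla F(\beta^*)\|$ directly bounds the first (noise) term by $\frac{2}{3}\|\beta^*_{\A^*}\|_{\min}$. The iteration-count hypothesis is exactly the statement that $(\eta_\tau)^k < \frac{\|\beta^*_{\A^*}\|_{\min}^2}{9(1+M_{K+T}/m_{K+T})\|\beta^*\|^2}$, which after a square root translates to the second (geometric) term being strictly less than $\frac{1}{3}\|\beta^*_{\A^*}\|_{\min}$. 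Adding the two estimates gives $\|\beta^k - \beta^*\| < \|\beta^*_{\A^*}\|_{\min}$.

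Now I would convert this $\ell_2$ closeness into a support statement. For any $i \in \A^*$,
\[
|\beta^k_i - \beta^*_i| \leq \|\beta^k - \beta^*\| < \|\beta^*_{\A^*}\|_{\min} \leq |\beta^*_i|,
\]
so $\beta^k_i \neq 0$ and hence $\A^* \subseteq \text{supp}(\beta^k)$. Because the update (\ref{itek}) enforces $\beta^k_{\I^{k-1}} = \mathbf{0}$, one has $\text{supp}(\beta^k) \subseteq \A^{k-1}$, and since $|\A^{k-1}| = T \geq K = |\A^*|$, the inclusion $\A^* \subseteq \A^{k-1}$ follows. Propagating the same inequality one step forward (the bound in Theorem \ref{le4} holds for every iteration and the threshold on $k$ is also satisfied for $k+1$) gives $\A^* \subseteq \A^k$ as stated.

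The step I expect to be the main obstacle is precisely this last translation between $\ell_2$ control and top-$T$ selection, because $\A^k$ is defined by the largest entries of $|\beta^k + \tau^k d^k|$ rather than by $|\beta^k|$ alone. The indirect route through $\text{supp}(\beta^k) \subseteq \A^{k-1}$ circumvents this gracefully, but a more direct argument would have to compare $|\beta^k_i + \tau^k d^k_i|$ for $i \in \A^*$ against $|\beta^k_j + \tau^k d^k_j|$ for $j \notin \A^*$, splitting the latter into $\A^{k-1}\setminus\A^*$ (where $d^k_j = 0$ and $|\beta^k_j| \leq \|\beta^k-\beta^*\|$ since $\beta^*_j = 0$) and $\I^{k-1}$ (where $\beta^k_j = 0$ but $\tau^k |\nabla_j F(\beta^k)|$ must be controlled via Assumption \ref{ass2} and $\tau^k \leq 1$). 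This alternate route is more delicate but conceptually clarifies why the minimum-signal hypothesis is exactly what is needed to separate active indices from inactive ones in the final ranking.
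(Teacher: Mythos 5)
Your proof is correct and takes essentially the same route as the paper's: substitute $\beta^0=\mathbf{0}$ into the bound of Theorem \ref{le4}, use the minimum-signal hypothesis to absorb the noise term and the iteration threshold to kill the geometric term, and conclude $\|\beta^k-\beta^*\|<\|\beta^{*}_{\A^*}\|_{\min}$. The paper's own proof then simply asserts ``this implies $\A^*\subseteq\A^k$,'' whereas your explicit completion of that step --- $\A^*\subseteq\text{supp}(\beta^{k+1})\subseteq\A^{k}$ via the update rule $\beta^{k+1}_{\I^{k}}=\mathbf{0}$, with the error bound applied at iteration $k+1$ (whose threshold condition is implied by that for $k$) --- supplies a justification the paper leaves implicit, and correctly flags why a direct comparison against the top-$T$ entries of $|\beta^k+\tau^k d^k|$ would be more delicate.
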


\begin{proof}
See Appendix \ref{thsupp}.
\end{proof}
\begin{remark}
The condition $\|\beta_{\A^*}^{*}\|_{\min}\geq \frac{3}{m_{K+T}}\|\nabla F(\beta^*)\|$ is required for the target $\beta^*$ to be detectable \cite{M2019}.
\end{remark}

\section{Numerical results}\label{mumres}
\setlength\parindent{2em}

In this section, we provide several numerical examples to highlight the effectiveness and superiority of SDARL and ASDARL algorithms. These examples are linear and logistic regression problems. First we compare SDARL with GSDAR to show the advantage of the line search technique. Then we verify the superiority of the proposed algorithms by comparing with LASSO, MCP and SCAD methods which are implemented by the R package ``ncvreg" \cite{BH2011}. We use 10-fold cross validation to select a $\hat{\beta}$ from its output for comparing.

In terms of numerical comparison, we consider some commonly used indicators, such as the average relative error $\text{ARE}=\frac{1}{100}\sum \frac{||\hat{\beta}-\beta^*||}{||\beta^*||}$, average positive discovery rate $\text{APDR}=\frac{1}{100}\sum \frac{|\hat{\A}\bigcap \A^*|}{|\A^*|}$, average false discovery rate $\text{AFDR}=\frac{1}{100}\sum \frac{|\hat{\A}\bigcap \A^{*c}|}{|\hat{\A}|}$ and average combined discovery rate $\text{ACDR}=\text{APDR}+(1-\text{AFDR})$ \cite{HJK2020, LC2014}. The selection consistency of a algorithm means that $APDR=1, AFDR=0$ and $ACDR=2$. In addition, since the purpose of logistic regression problem is to classify, so we also consider average classification accuracy rate (ACAR) for comparison. We uniformly set the parameters in line search as $\nu=0.9$, $\sigma=0.1$, and other parameters will be given according to specific issues.

\subsection{Linear regression}

For the linear regression problems, we here consider the function $F(\beta)$ as the least square estimation function, i.e., $F(\beta)=\frac{1}{2n}\|X \beta+\beta_0-y\|^2$, where $X\in \mathbb{R}^{n\times p}$ is the design matrix with $\sqrt{n}$-normalized columns, $\beta_0$ is an $n$-dimensional intercept item with each component being 1, $y$ is the response variable. We next use an illustrative example to analyze the effectiveness of the proposed line search.

\subsubsection{An illustrative example}\label{lineexa}

In this simulation example, we first generate an $n\times p$ matrix $\bar{X}$ whose rows are drawn independently from $\mathcal{N}(0,\Sigma)$ with $\Sigma_{jk}=\rho^{|j-k|}, 1\leq j,k \leq p$, and then obtain coefficient matrix $X$ by normalizing its columns to the $\sqrt{n}$ length. In order to generate the target regression coefficient $\beta^*\in \mathbb{R}^{p}$, we randomly select a subset of $\{1,\cdots,p\}$ to form the active set $\A^*$ with $|\A^*|=K<n$. Let $R=m_2 / m_1$, where $m_2=\|\beta^*_{\A^*}\|_{max}$ and $m_1=\|\beta^*_{\A^*}\|_{min}=1$. Then the $K$ nonzero coefficients in $\beta^*$ are uniformly distributed in $(m_1,m_2)$. The response variable is generated by $y=X\beta^*+\beta_0+\varepsilon$ where $\varepsilon \in \mathbb{R}^{n}$ is the additive Gaussian noise and generated independently from $\mathcal{N}(0,\sigma_1^2)$. We consider the problem setting of $n=500$, $p=1000$, $\rho=0.2$, $\sigma_1=1$, $R=100$.

Firstly, we analyze the advantage of the line search designed in this paper by comparing the SDARL with its version of a fixed step size $\tau=1$, i.e., SDAR in \cite{HJL2018}. Based on 100 independent replications, the calculation results of the relative error, positive discovery rate, false discovery rate and combined discovery rate of this two algorithms are revealed in Figure \ref{fig:1}. It can be seen directly that, the SDARL algorithm has higher regression accuracy, more accurate variable selection and more stable performance in comparison with SDAR. This phenomenon is sufficient to illustrate the necessity and effectiveness of the line search proposed in this paper for linear regression problems.

\begin{figure}
\centering
\includegraphics[width=0.45\textwidth,height=4cm]{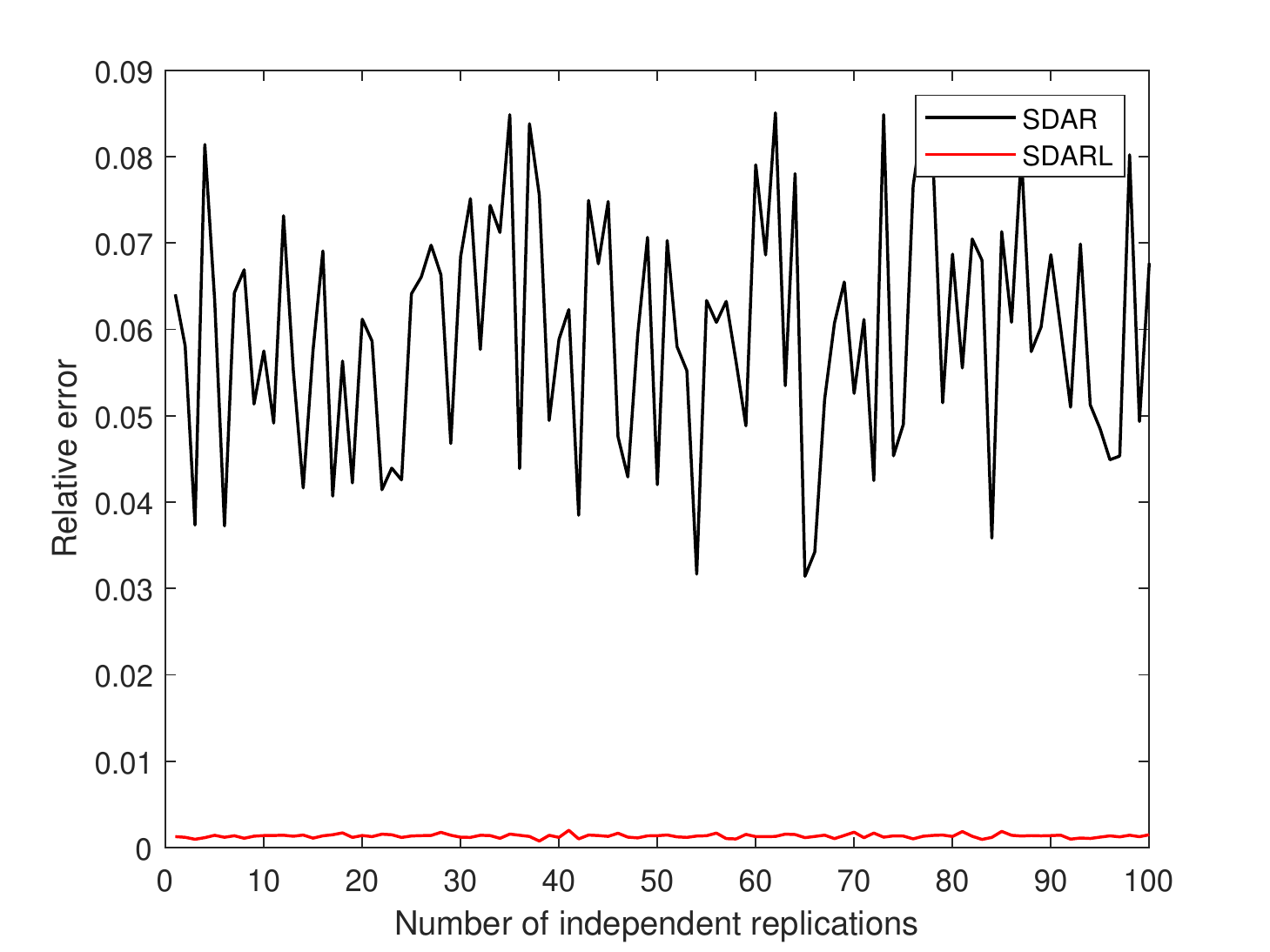}\hspace{-.4cm}
\includegraphics[width=0.45\textwidth,height=4cm]{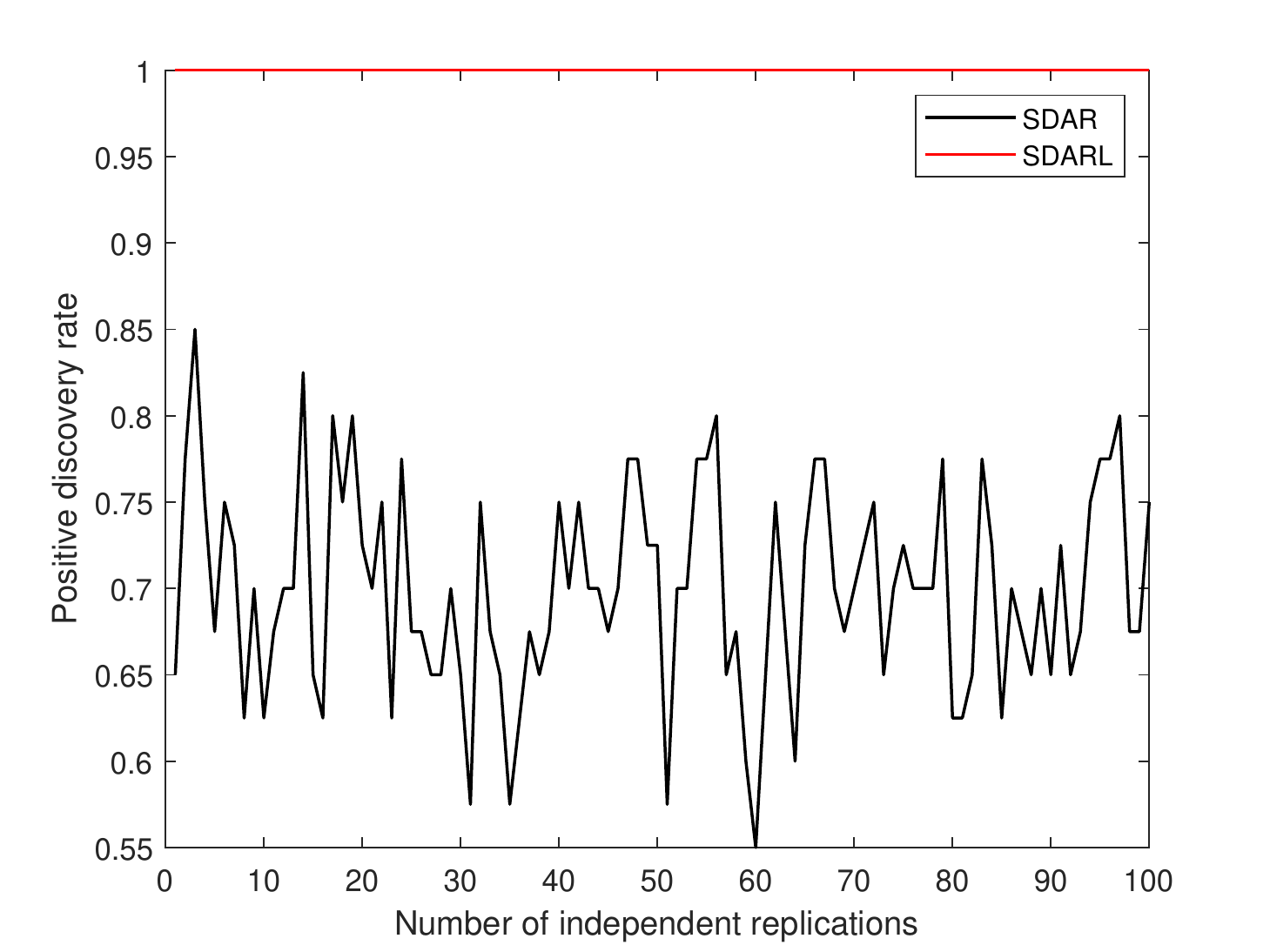}\\
\includegraphics[width=0.45\textwidth,height=4cm]{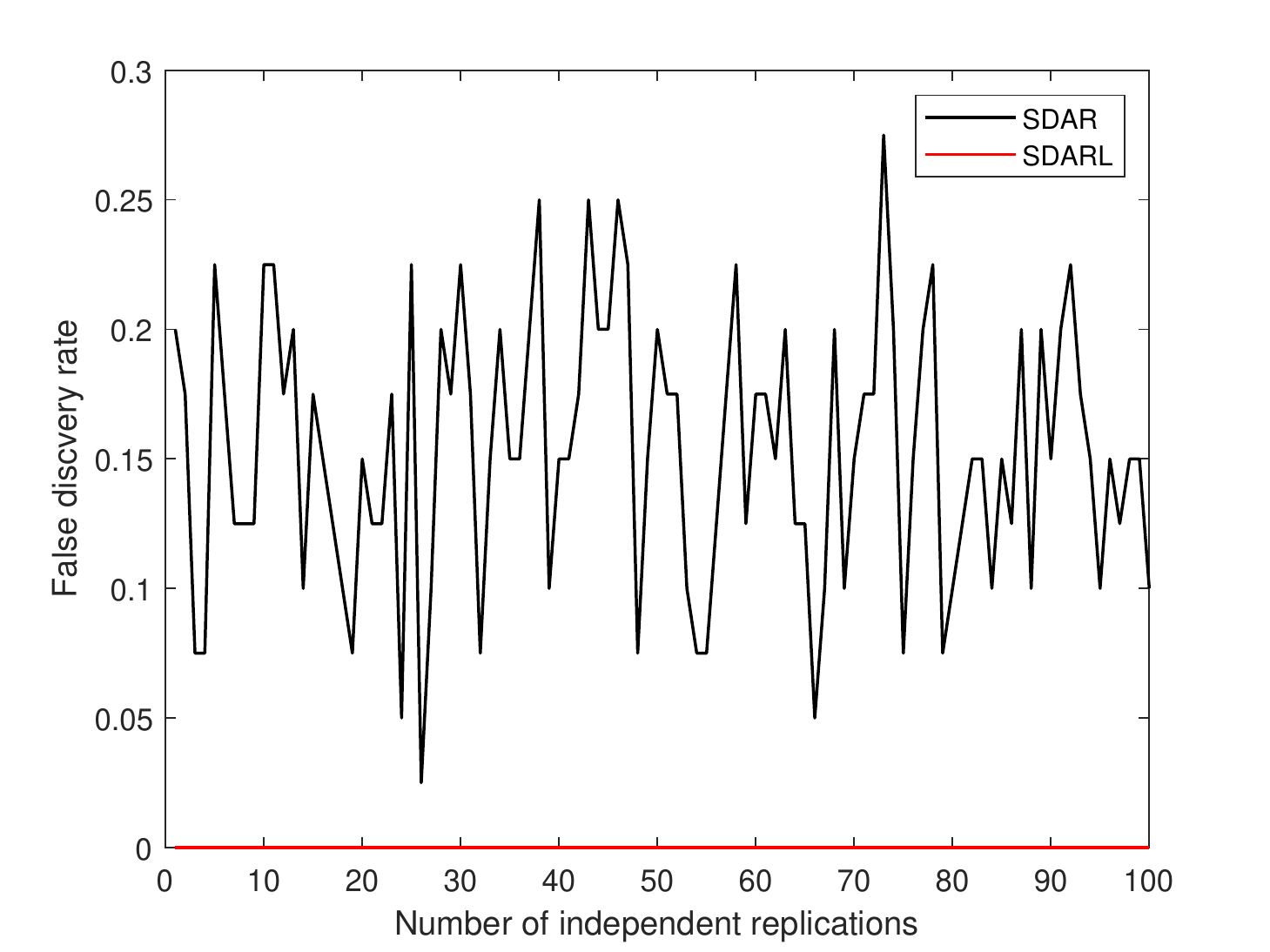}\hspace{-.4cm}
\includegraphics[width=0.45\textwidth,height=4cm]{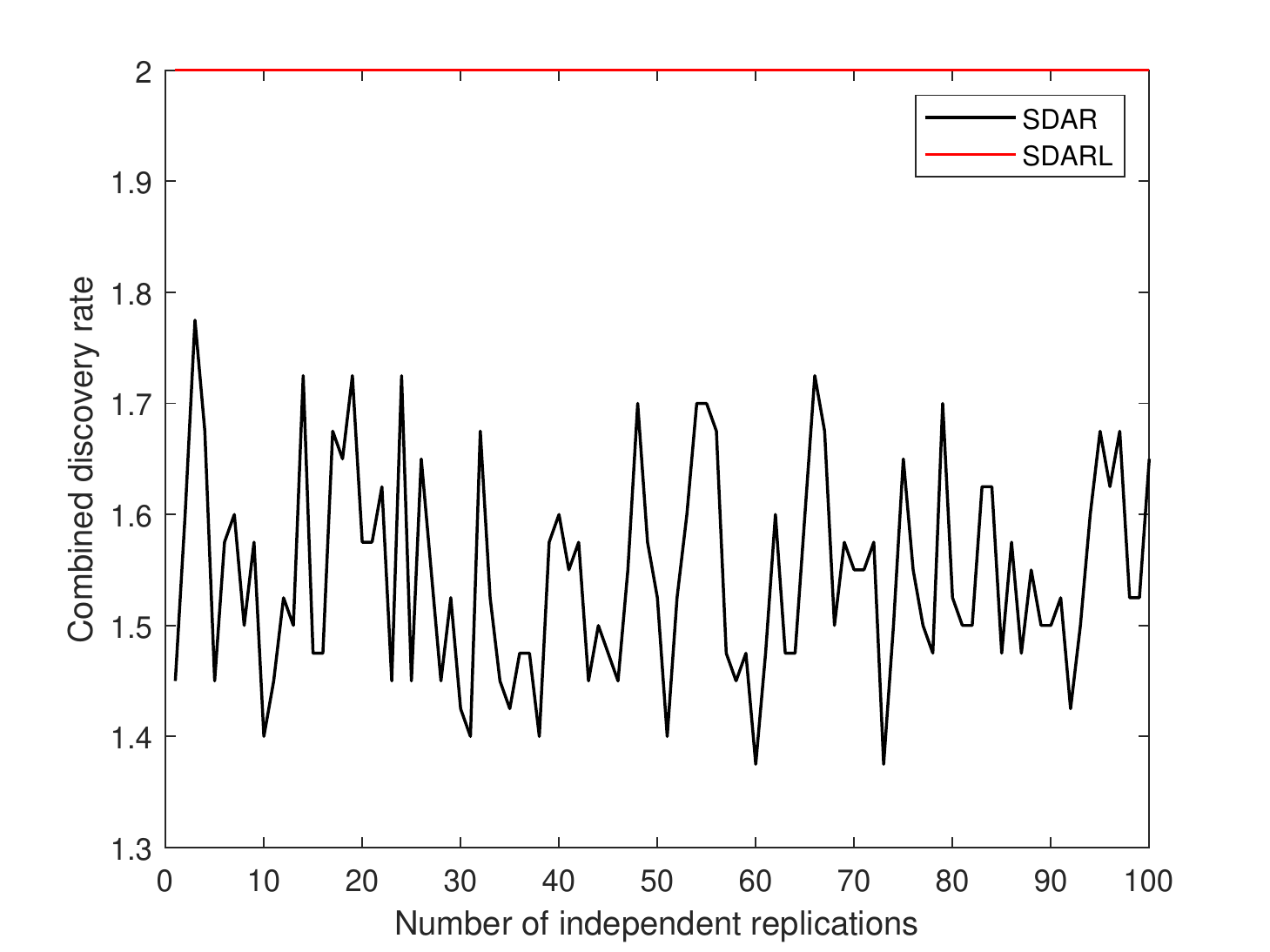}
\caption{{\small The comparison between SDARL and its version with a fixed step size $\tau=1$ in linear regression problems}}
\label{fig:1}
\end{figure}

In addition, we show the effectiveness of SDARL algorithm based on the average relative error and average number of iterations of 100 independent experiments with different sparsity levels $K=5:5:50$. We also consider the influence of the correlation level $\rho=0.2:0.3:0.8$, and the specific results are shown in Figure \ref{fig:2}. Firstly, from the values of the average relative error, it can be seen that the estimated coefficient $\hat{\beta}$ and the target coefficient $\beta^*$ have a very high degree of fusion. Moreover, the average number of iterations of SDARL is almost on the rise as the sparsity level increase from $5$ to $50$ for every $\rho$. But the average number of iterations only reaches about 5 when $\rho=0.2$ and $0.5$, and does not exceed 10 when $\rho=0.8$. This phenomenon fully illustrates the convergence speed of SDARL algorithm is fast.

\begin{figure}
\centering
\includegraphics[width=0.5\textwidth,height=5cm]{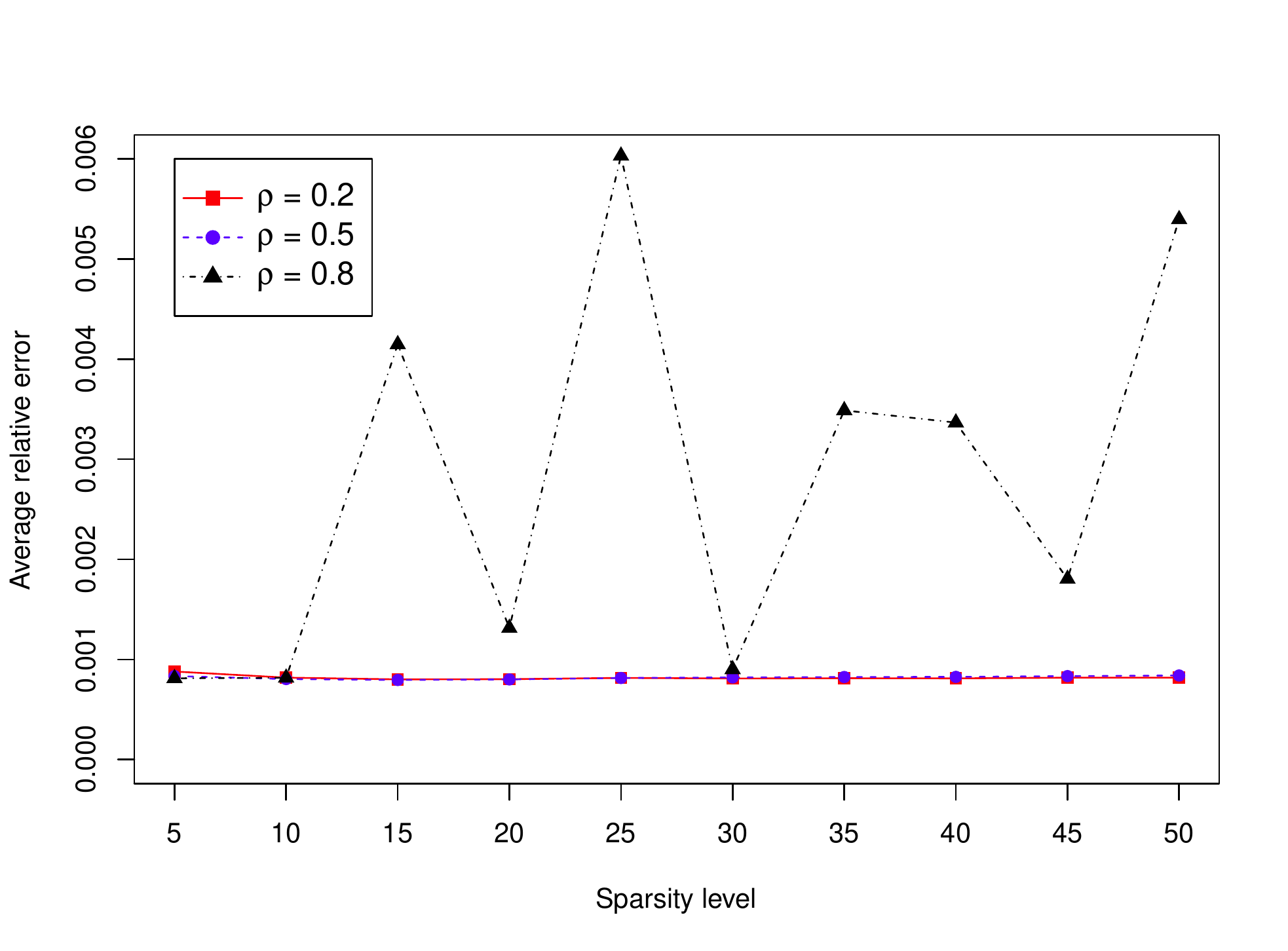}\hspace{-.2cm}
\includegraphics[width=0.5\textwidth,height=5cm]{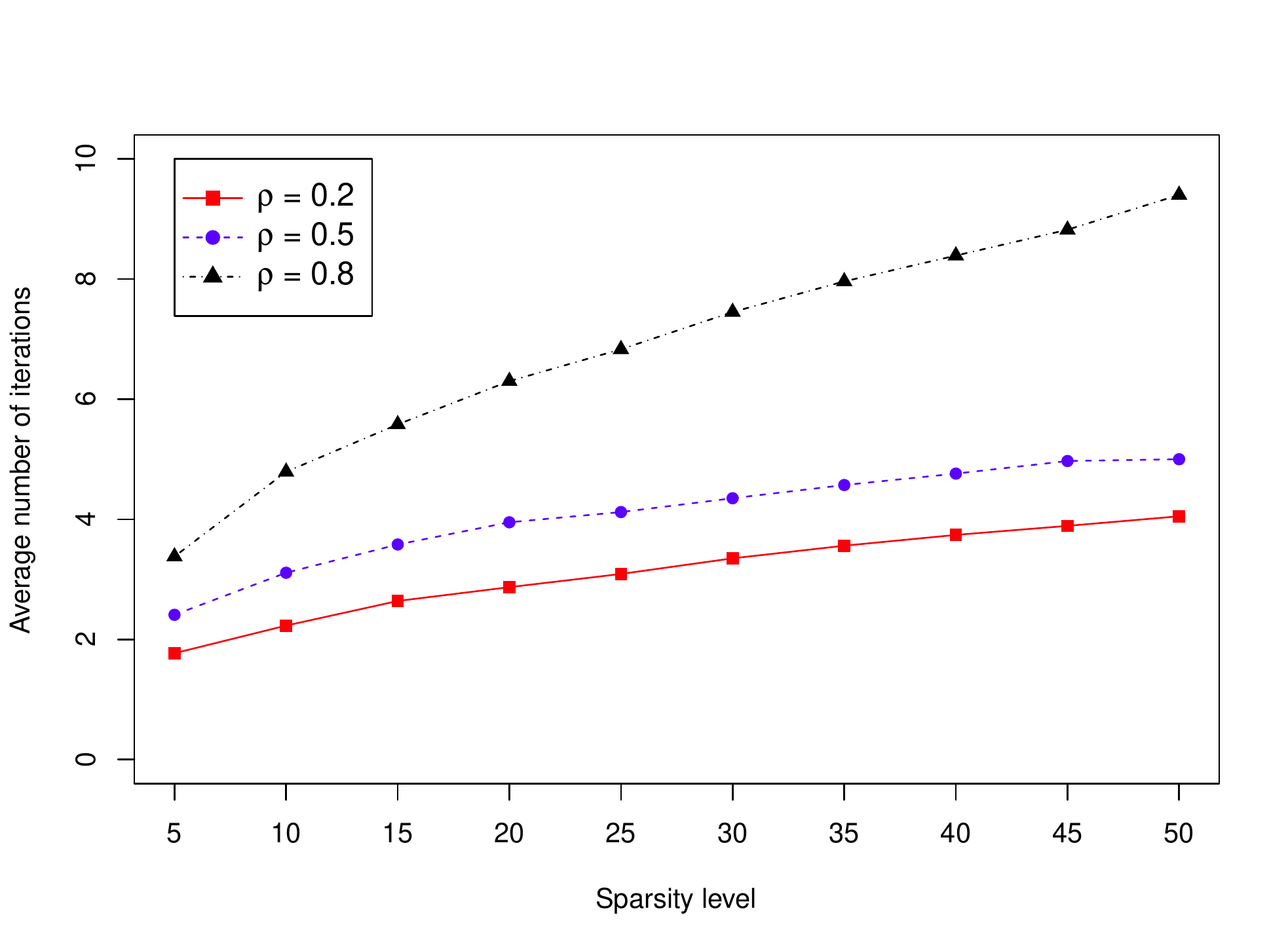}
\caption{{\small From left to right: the average relative error and average number of iterations of SDARL for linear regression as sparsity level $K$ increases.}}
\label{fig:2}
\end{figure}

\subsubsection{Influence of the model parameters}

In this part, we consider the influence of model parameters $\{n,p,K,\rho\}$ on APDR, AFDR and ACDR of ASDARL, LASSO, MCP and SCAD methods. We use the same method of data generation as the previous section. All simulation experiments in this subsection are based on 10 independent replications. The specific values of the parameters are shown below the corresponding figure. The simulation results are given in Figure \ref{fig:3}- Figure \ref{fig:6}. It can be seen that ASDARL can always have the largest values on APDR and ACDR, and have the least values on AFDR with the changing of each considered parameter. This phenomena fully illustrates that ASDARL algorithm is generally more accurate, more efficient and more stable than LASSO, MCP and SCAD.

\begin{figure}
\centering
\includegraphics[width=0.34\textwidth,height=4cm]{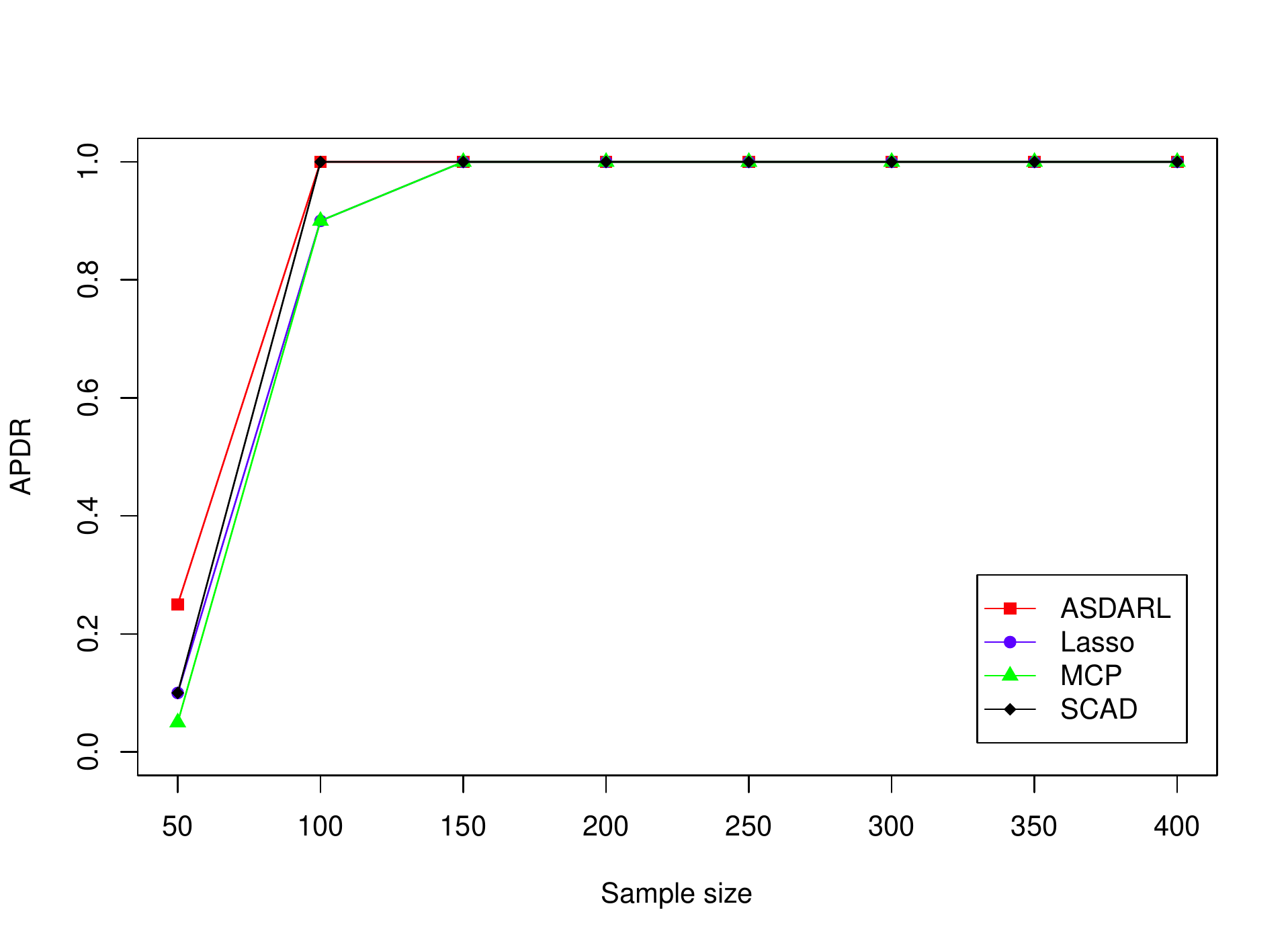}\hspace{-.3cm}
\includegraphics[width=0.34\textwidth,height=4cm]{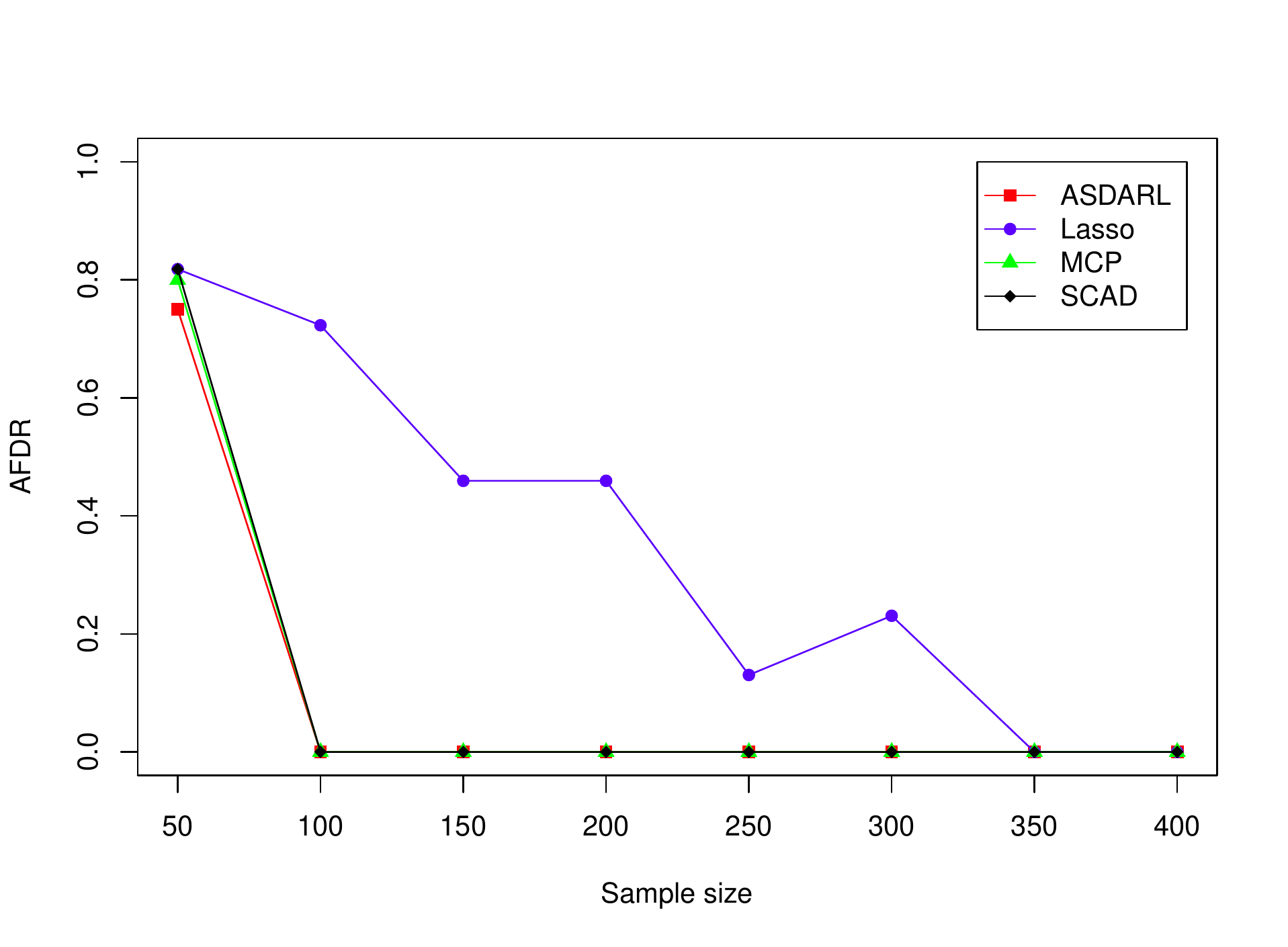}\hspace{-.3cm}
\includegraphics[width=0.34\textwidth,height=4cm]{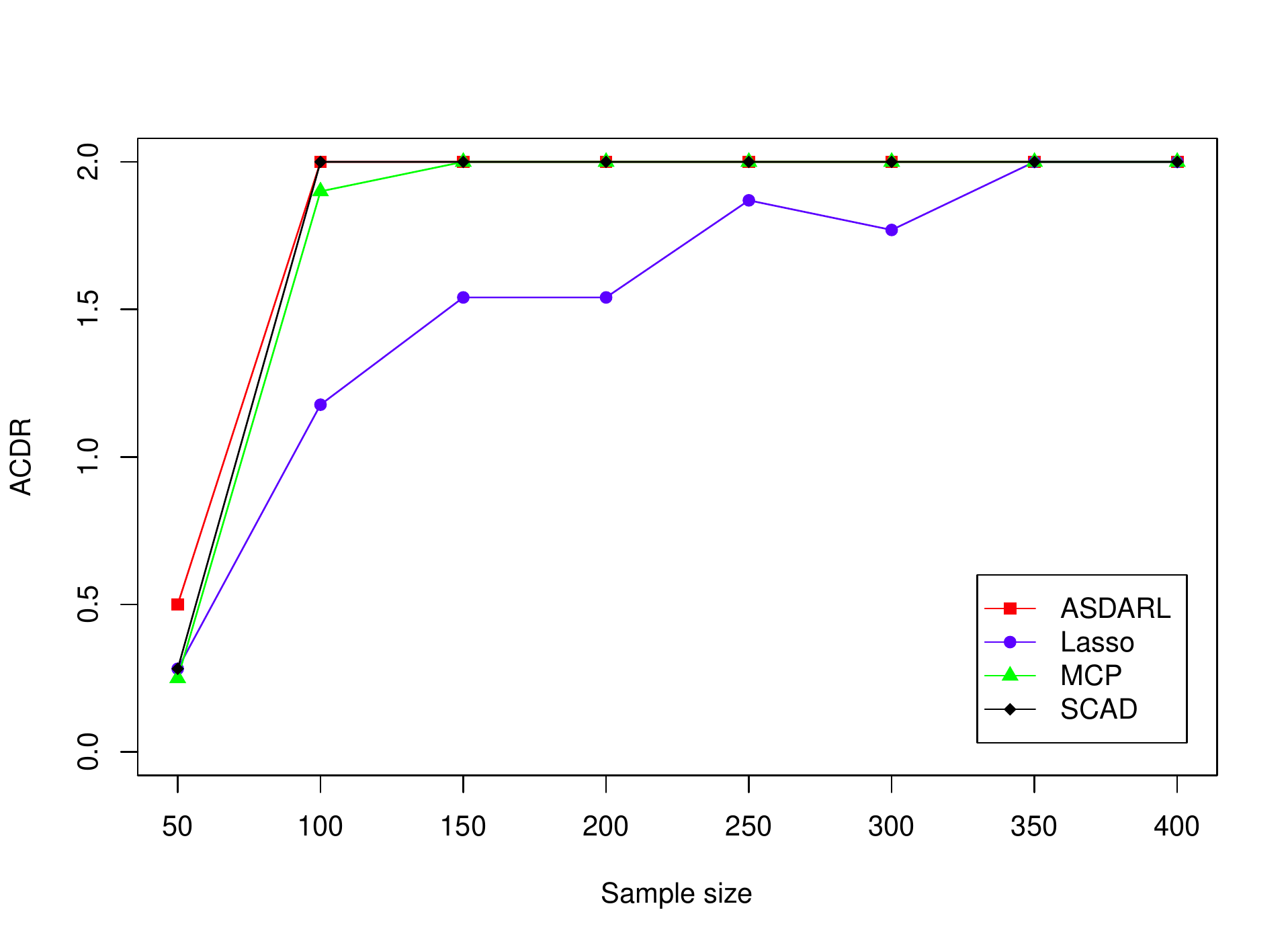}
\caption{{\small Numerical results (ARDR, AFDR, ACDR) of the influence of sample size in linear regression problems with $n=50:50:400$, $p=1000$, $K=20$, $R=5$, $\rho=0.2$.}}
\label{fig:3}
\end{figure}

\begin{figure}
\centering
\includegraphics[width=0.34\textwidth,height=4cm]{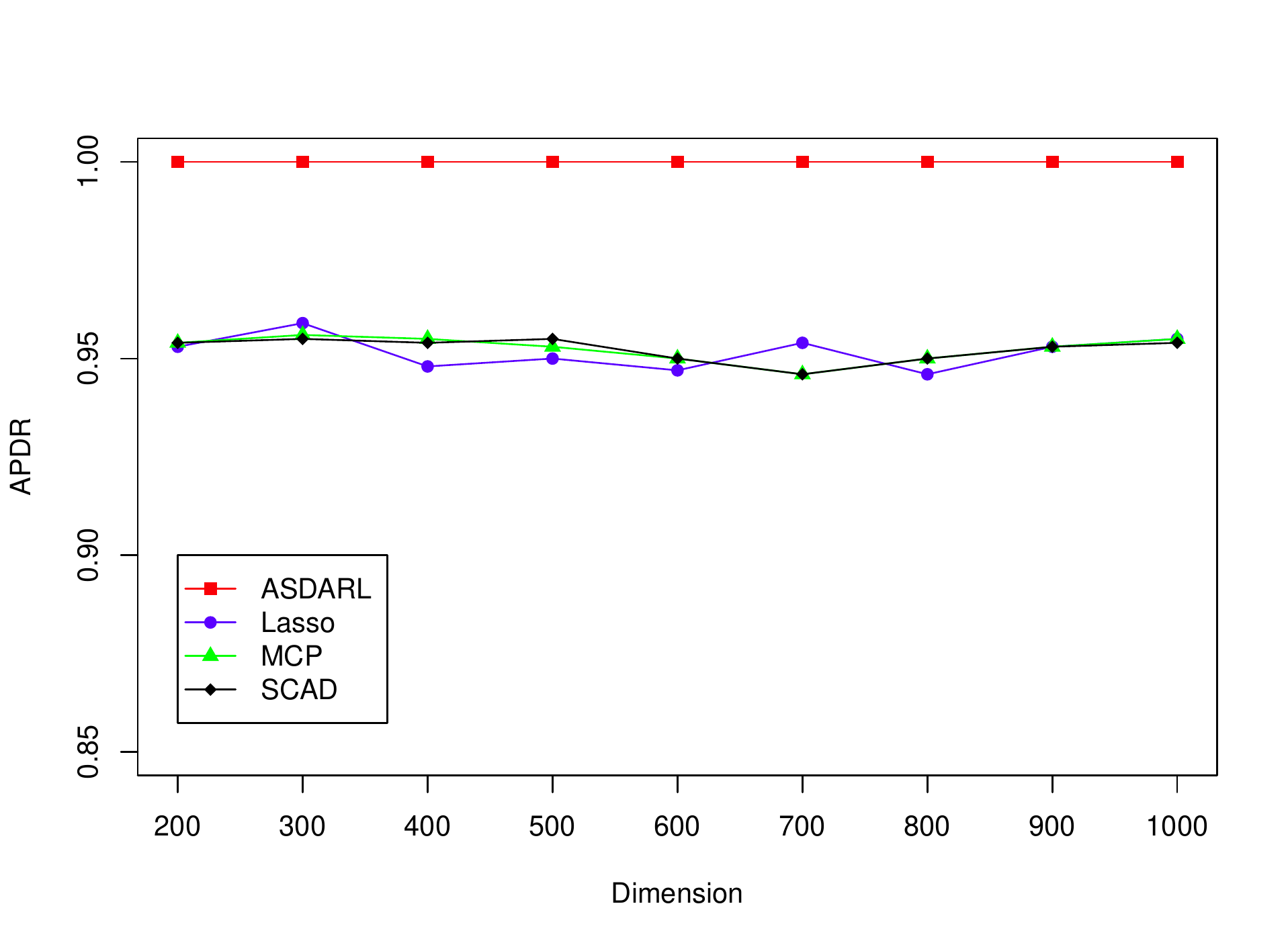}\hspace{-.3cm}
\includegraphics[width=0.34\textwidth,height=4cm]{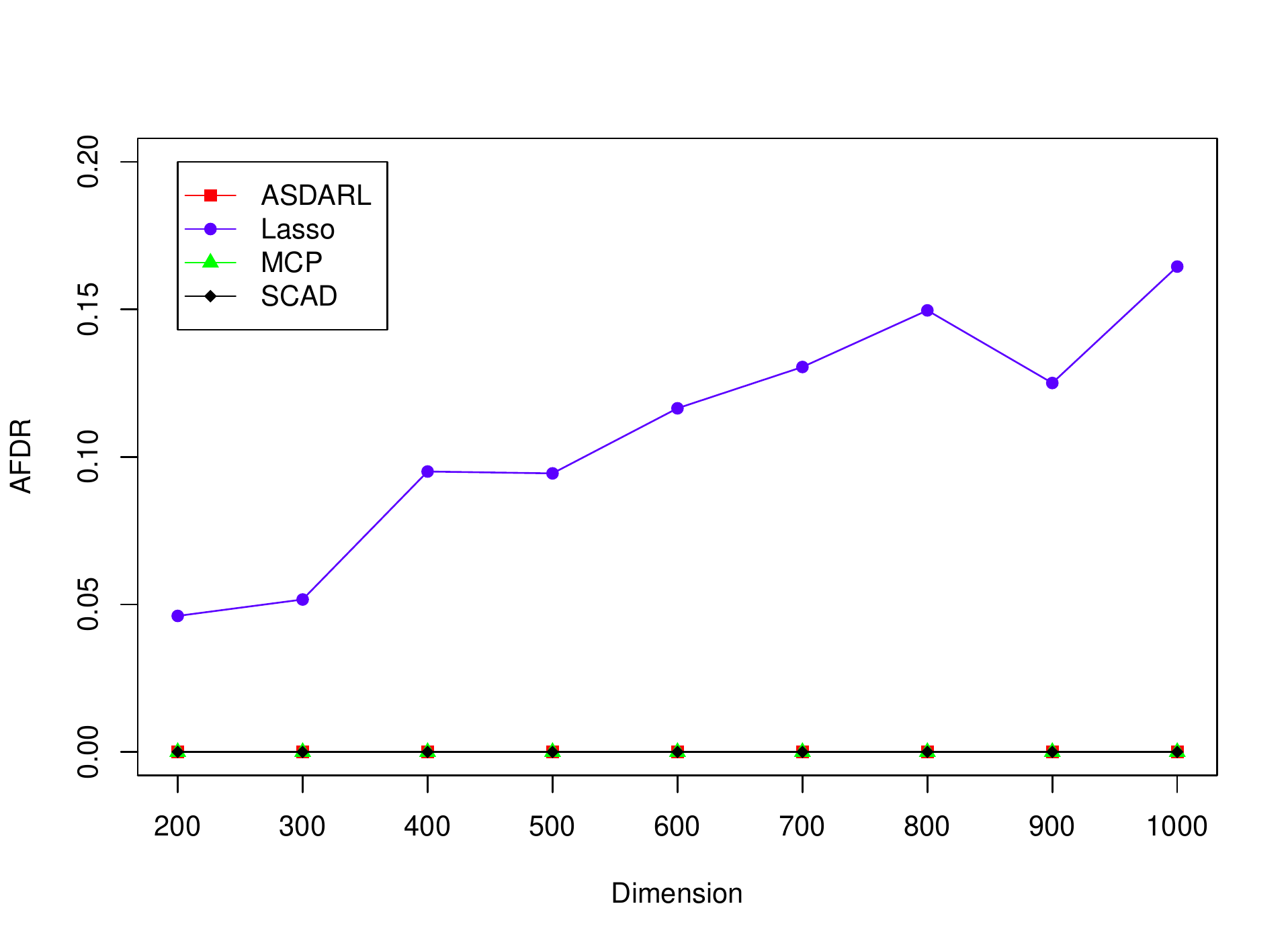}\hspace{-.3cm}
\includegraphics[width=0.34\textwidth,height=4cm]{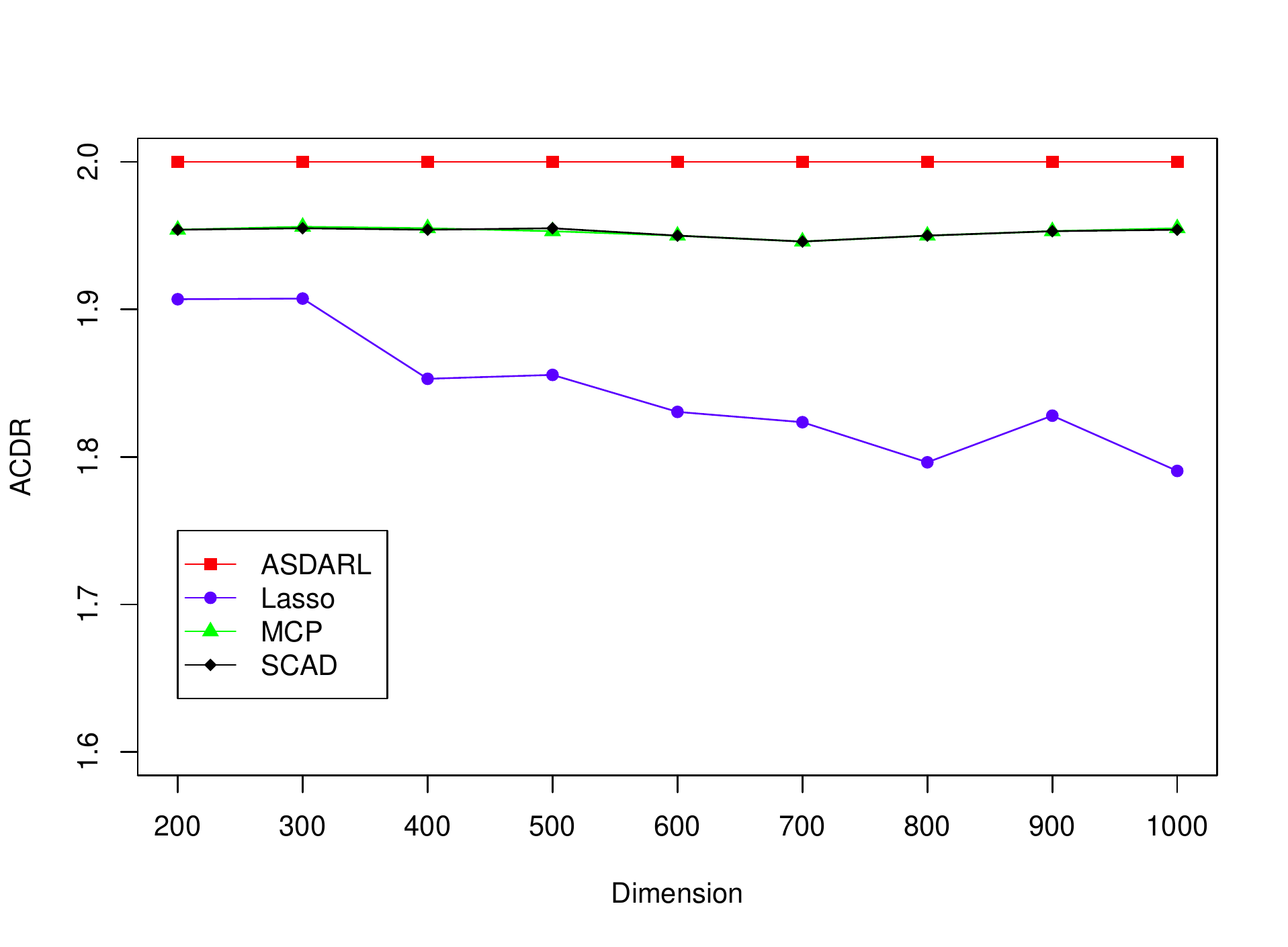}
\caption{{\small Numerical results (ARDR, AFDR, ACDR) of the influence of dimension in linear regression problems with $n=100$, $p=200:100:1000$, $K=10$, $R=100$, $\rho=0.2$.}}
\label{fig:4}
\end{figure}

\begin{figure}
\centering
\includegraphics[width=0.34\textwidth,height=4cm]{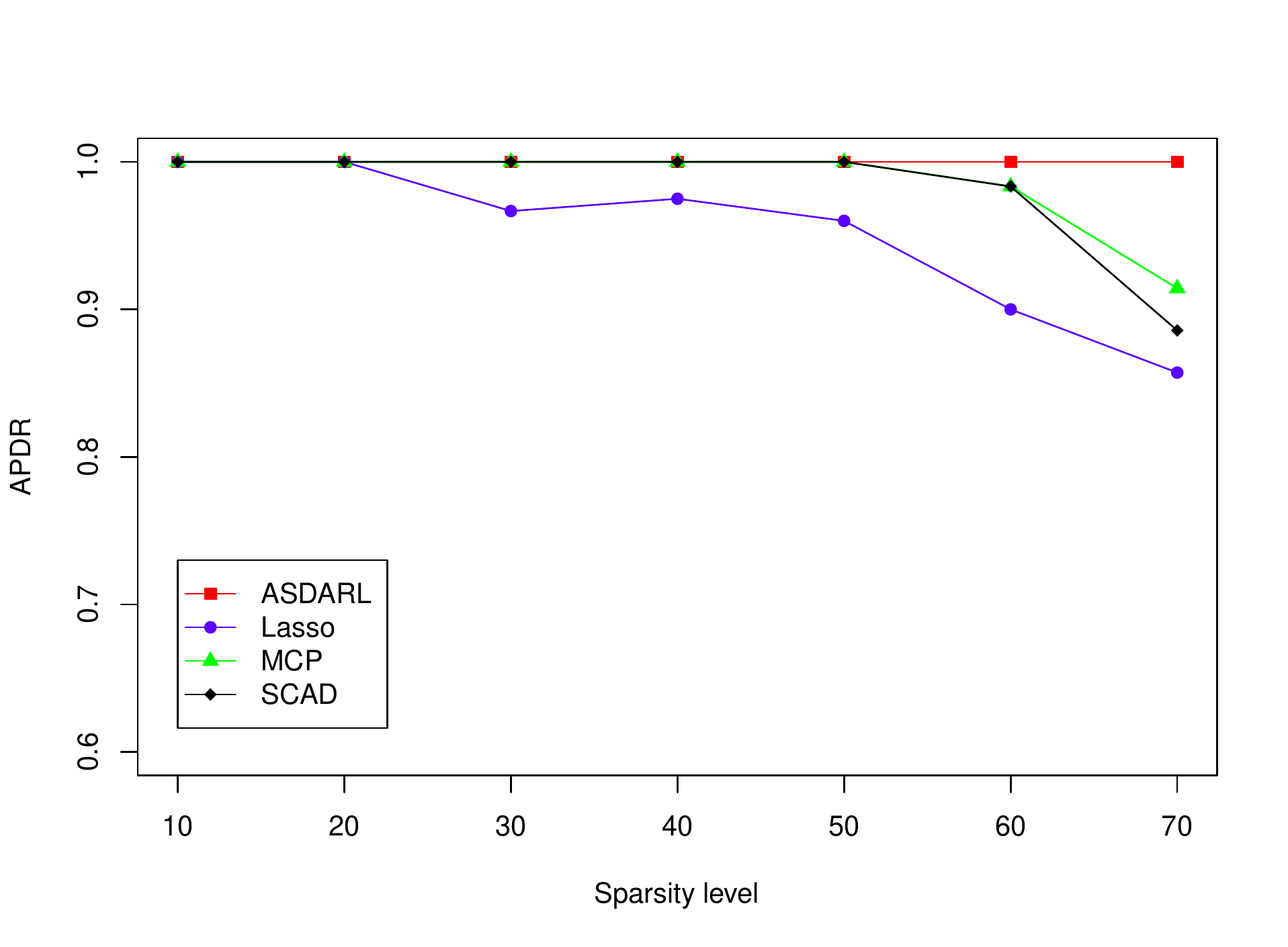}\hspace{-.3cm}
\includegraphics[width=0.34\textwidth,height=4cm]{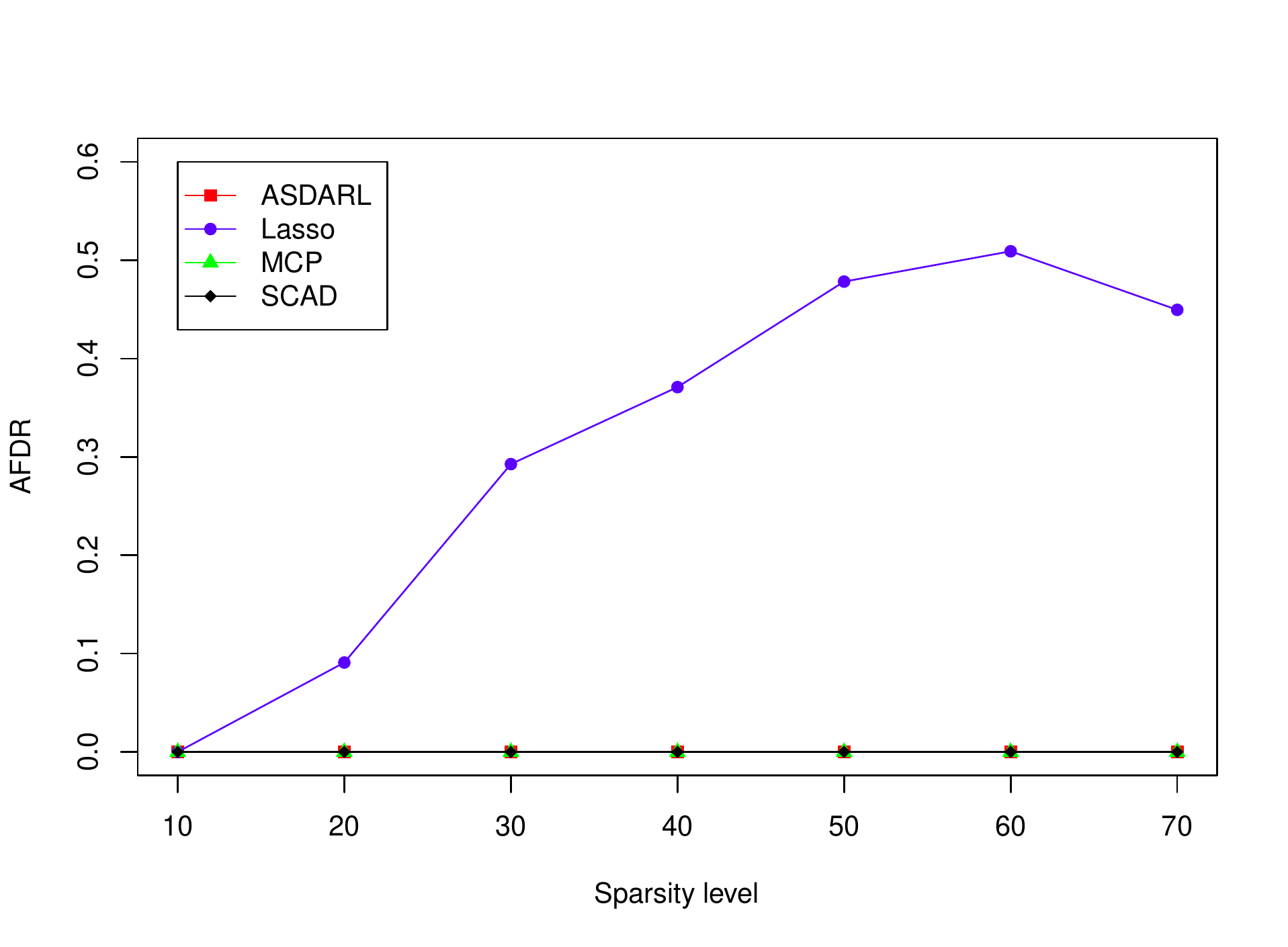}\hspace{-.3cm}
\includegraphics[width=0.34\textwidth,height=4cm]{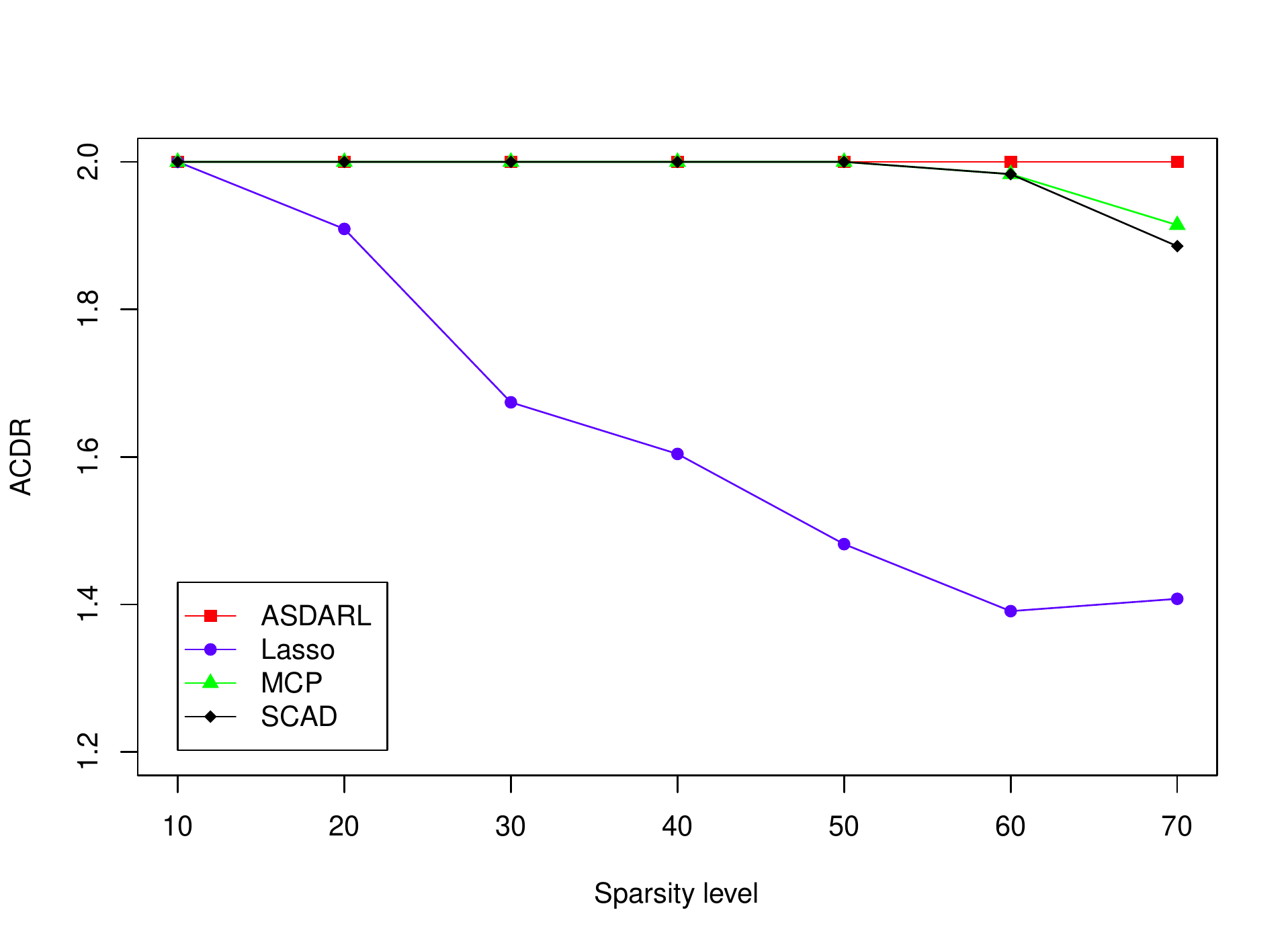}
\caption{{\small Numerical results (ARDR, AFDR, ACDR) of the influence of sparsity level in linear regression problems with $n=200$, $p=500$, $K=10:10:70$, $R=10$, $\rho=0.2$.}}
\label{fig:5}
\end{figure}

\begin{figure}
\centering
\includegraphics[width=0.34\textwidth,height=4cm]{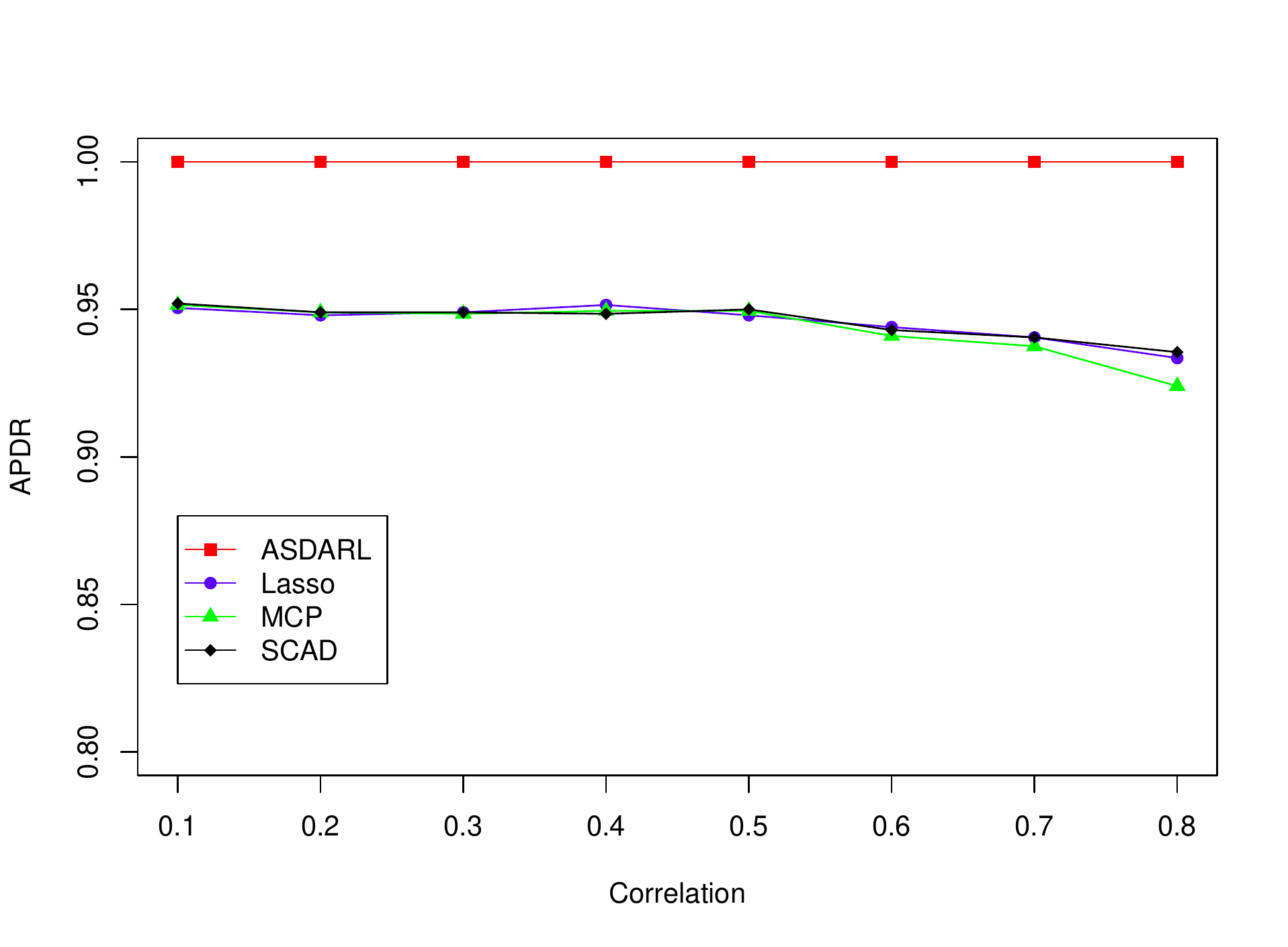}\hspace{-.3cm}
\includegraphics[width=0.34\textwidth,height=4cm]{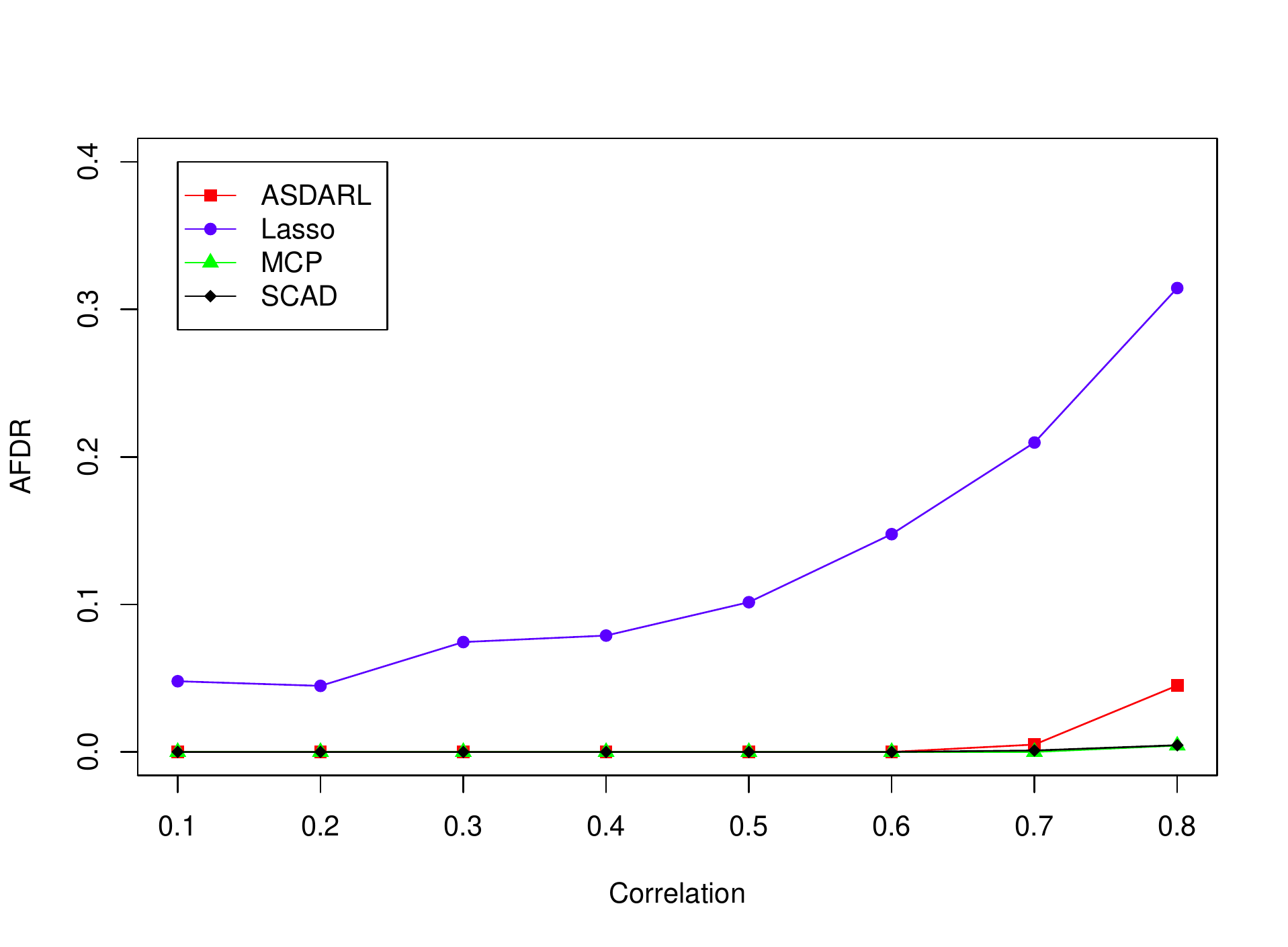}\hspace{-.3cm}
\includegraphics[width=0.34\textwidth,height=4cm]{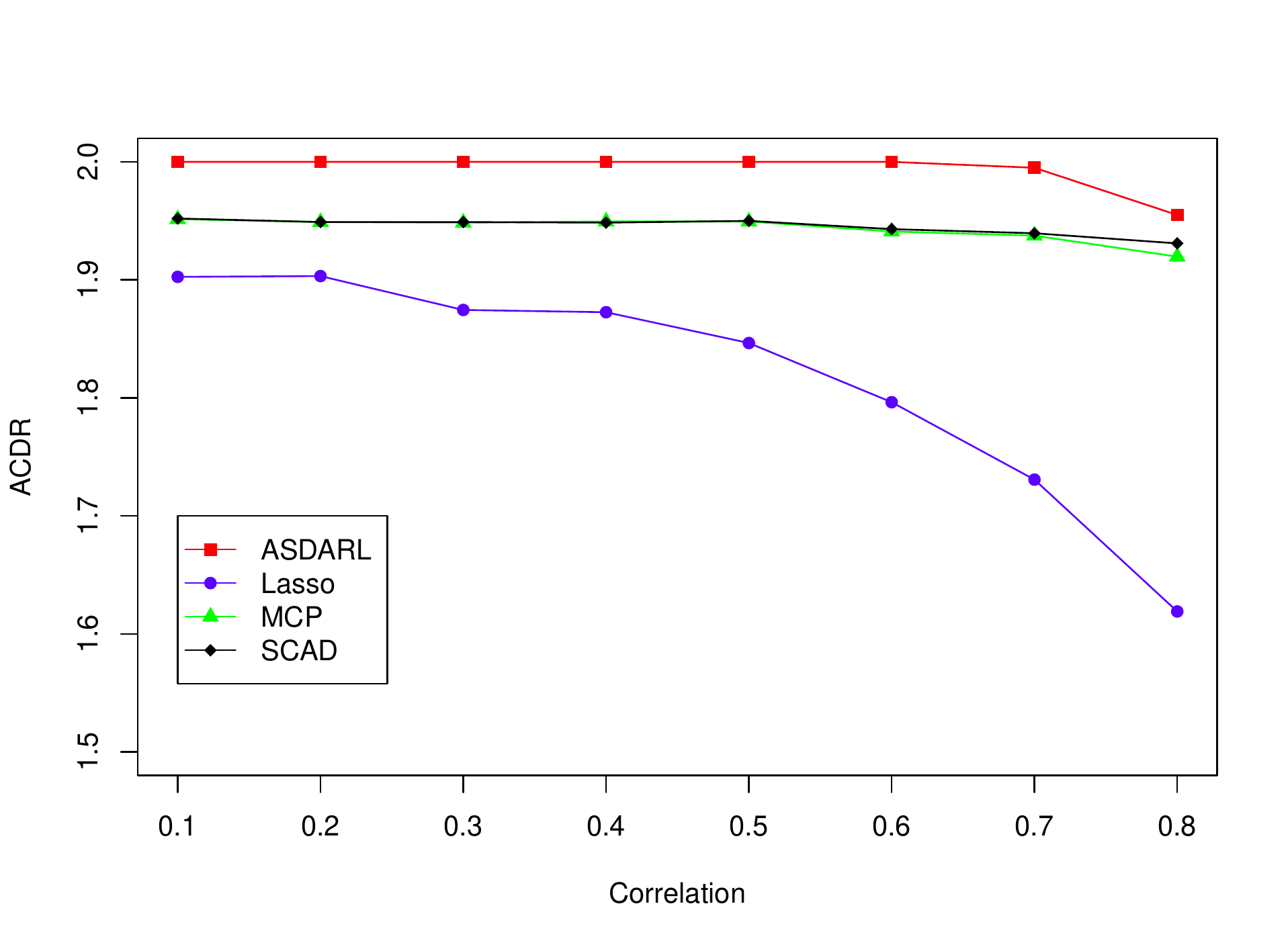}
\caption{{\small Numerical results (ARDR, AFDR, ACDR) of the influence of correlation in linear regression problems with $n=200$, $p=500$, $K=20$, $R=100$, $\rho=0.1:0.1:0.8$.}}
\label{fig:6}
\end{figure}

\subsubsection{Numerical comparison}\label{linecom}

In this subsection, we use random synthetic data to compare the accuracy and efficiency of SDARL, ASDARL, LASSO, MCP and SCAD methods. For the design matrix $X$, we first generate an $n\times p$ random Gaussian matrix $\bar{X}$ whose entries are i.i.d. $\mathcal{N}(0,1)$, and then normalize its columns to the $\sqrt{n}$ length. Then $X$ is generated with $X_1=\bar{X}_1$, $X_p=\bar{X}_p$ and $X_{j}=\bar{X}_j+\rho (\bar{X}_{j+1}+\bar{X}_{j-1}), j=2, \cdots, p-1$. The $K$ nonzero elements of underlying regression coefficient $\beta^*$ are uniformly distributed in $[m_1, m_2]$, where $m_1=5 \sqrt{2\log(p)/n}$ and $m_2=R*m_1$. The $K$ nonzero elements are randomly assigned to $K$ components of $\beta^*$. Then the response variable is generated by $y=X\beta^*+\beta_0+\varepsilon$. Here we consider the problem setting of $n = 800$, $p = 5000$, $K = 100$, $\sigma_1=1$, $R=100$. We run ASDARL with $\alpha=50$. We also take the influence of matrix correlation coefficient $\rho$ into account and set $\rho=0.2:0.3:0.8$. Based on 100 independent replications, we obtain the specific values of ARE, CPU time ($\text{Time(s)}$), APDR, AFDR and ACDR in Table \ref{table1}. The standard deviations of \text{ARE} and \text{Time(s)} are shown in the corresponding parentheses. For convenience, we mark the numbers in boldface to indicate the best performers (the tables below are the same).

\begin{table}[h]
\renewcommand\arraystretch{1.5}
\centering {
\setlength{\tabcolsep}{4mm}{
\begin{tabular}{|c||c|c|c|c|c|}
\Xhline{1.2pt}
$\rho$ & Algorithm & ARE  & Time(s)& (APDR, AFDR, ACDR)\\
\Xhline{0.8pt}
\multirow{5}{*}{0.2} & LASSO & 1.41e-1 (1.59e-2) & 6.35 (1.56e-1) & (0.9334, 0.2640, 1.6694) \\
& MCP & 3.64e-2 (7.26e-3)  & 6.51 (1.39e-1) & (0.9363, \textbf{0.0000}, 1.9363) \\
& SCAD & 5.03e-2 (9.62e-3) & 6.54 (1.48e-1) & (0.9356, \textbf{0.0000}, 1.9356) \\
& SDARL & \textbf{8.62e-4 (7.90e-5)} & \textbf{4.35 (4.72e-1)} & (\textbf{1.0000}, \textbf{0.0000}, \textbf{2.0000}) \\
& ASDARL & \textbf{8.62e-4 (7.90e-5)}  & 17.60 (1.52+0) & (\textbf{1.0000}, \textbf{0.0000}, \textbf{2.0000}) \\
\Xhline{0.8pt}
\multirow{5}{*}{0.5}  & LASSO & 1.61e-1 (2.27e-2) & 6.61 (1.66e-1)  & (0.9223, 0.3298, 1.5925) \\
& MCP & 1.56e-1 (1.24e-1)  & 6.45 (1.87e-1) & (0.9066, 0.0371, 1.8695) \\
& SCAD &  1.15e-1 (8.88e-2) & 6.79 (2.51e-1) & (0.9191, \textbf{0.0216}, 1.8975) \\
& SDARL & 7.50e-2 (9.92e-2) & \textbf{6.31 (7.42e-1)} & (0.9721, 0.0279, \textbf{1.9442}) \\
& ASDARL & \textbf{4.14e-3 (2.89e-2)}  & 22.89 (1.90+0) & (\textbf{0.9997}, 0.1502, 1.8495) \\
\Xhline{0.8pt}
\multirow{5}{*}{0.8}  & LASSO & 1.64e-1 (2.19e-2) & \textbf{6.82 (3.07e-1)}  & (0.9192, 0.3420, 1.5772) \\
& MCP & 5.53e-2 (2.68e-2)  & 6.56 (2.08e-2) & (0.9207, 0.0068, 1.9139) \\
& SCAD & 6.18e-2 (1.89e-2)   & 7.17 (4.52e-1) & (0.9268, 0.0056, 1.9212) \\
& SDARL & 8.21e-3 (2.66e-2)  & 7.46 (7.85e-1) & (0.9952, \textbf{0.0048}, \textbf{1.9904}) \\
& ASDARL & \textbf{7.81e-4 (3.47e-4)}  & 25.72 (2.36e+0) & (\textbf{1.0000}, 0.0700, 1.9300) \\
\Xhline{1.2pt}
\end{tabular}}
\small\caption{Numerical comparison in linear regression with $n = 800$, $p = 5000$, $K = 100$, $R=100$ and $\rho=0.2:0.3:0.8$.}
\label{table1}
}
\end{table}

From Table \ref{table1}, we can conclude that ASDARL can always get the smallest values of $\text{ARE}$ for these correlation coefficients $\rho$, and the relative errors of SDARL are also significantly smaller than those of LASSO, MCP and SCAD. In comparison, the SDARL algorithm has a faster calculation speed. Since ASDARL needs to adjust the parameter $T$, so it consumes more calculation time. The standard deviations in parentheses also illustrate the stability of the algorithms in this paper. Weighing the three values in the last column of Table \ref{table1}, we can find that the proposed algorithms have better performance in model selection. In summary, compared with the mentioned algorithms, SDARL and ASDARL algorithms have obvious numerical advantages for linear regression problems.

\subsection{Logistic regression}

In this section, we make some simulations and real data analysis in logistic regression model to illustrate the performance of SDARL and ASDARL algorithms. Here, we regard $F(\beta)$ as the negative logarithmic likelihood function, i.e., $F(\beta)=\frac{1}{n}\sum_{i=1}^{n}[\ln(1+e^{X_i^{\top}\beta})-y_i X_i^{\top}\beta]$. We firstly analyze the advantages of line search mentioned in this paper.

\subsubsection{An illustrative example}

In the example, we generate matrix $X$ and the underlying regression coefficient $\beta^*$ in the same way as described in Section \ref{lineexa}. Then the response variable is generated according to $y_i \sim \text{Binomial}(1,p_i)$, where $p_i=\frac{1}{1+\exp(-X_i^{\top}\beta^*)}, i=1, \cdots, n$. Since logistic regression model aims to classify, we randomly choose $80\%$ of the samples as the training set and the rest for the test set. Here we set $n=300, p=5000, K=10, \rho=0.2$. Based on 100 independent replications, we compare SDARL algorithm with its version of a fixed step size $\tau=1$, i.e., GSDAR in \cite{HJK2020} by classification accuracy rate, positive discovery rate, false discovery rate and combined discovery rate. The specific calculation results are shown in Figure \ref{fig:7}. It is clear that the classification accuracy rates of the SDARL algorithm are always much higher than those of the GSDAR algorithm. In addition, for SDARL, the values of positive discovery rate are always closer to 1, the values of false discovery rate are closer to 0, and the values of combined discovery rate are always closer to 2, while GSDAR are far inferior. These results show that the line search proposed in this paper is very necessary and effective for logistic regression problems.

\begin{figure}[h]
\centering
\includegraphics[width=0.45\textwidth,height=4cm]{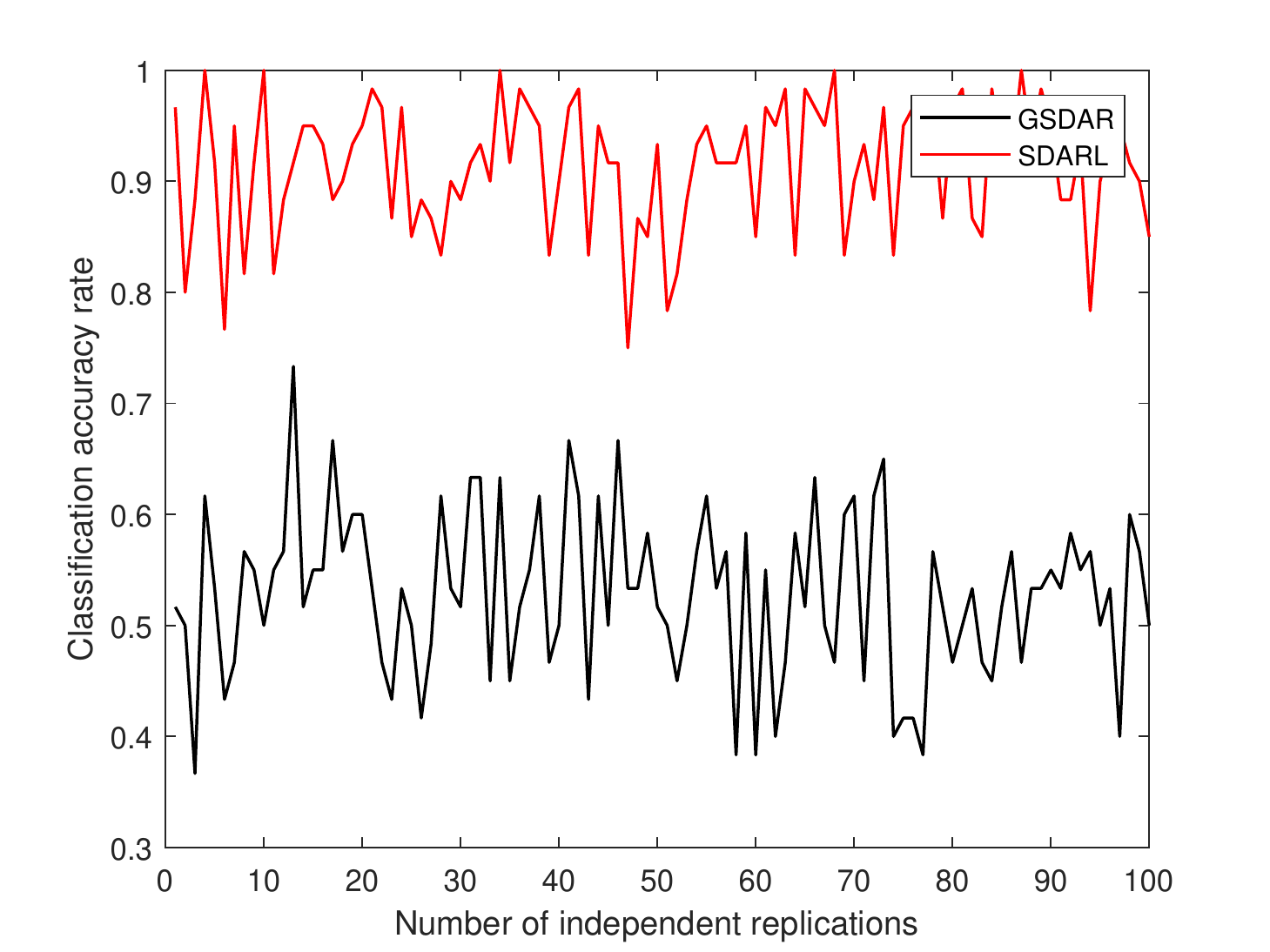}\hspace{-.4cm}
\includegraphics[width=0.45\textwidth,height=4cm]{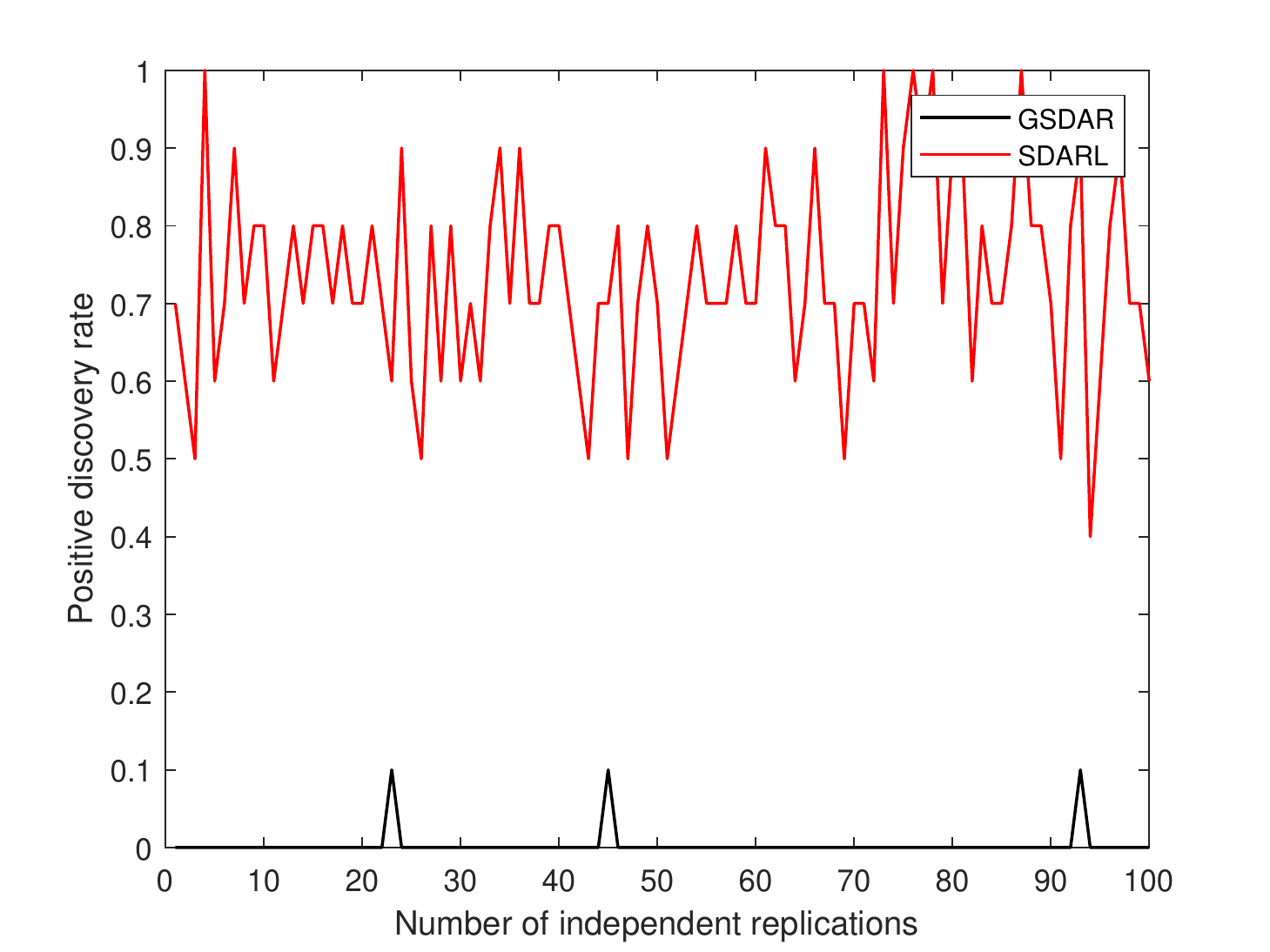}\\
\includegraphics[width=0.45\textwidth,height=4cm]{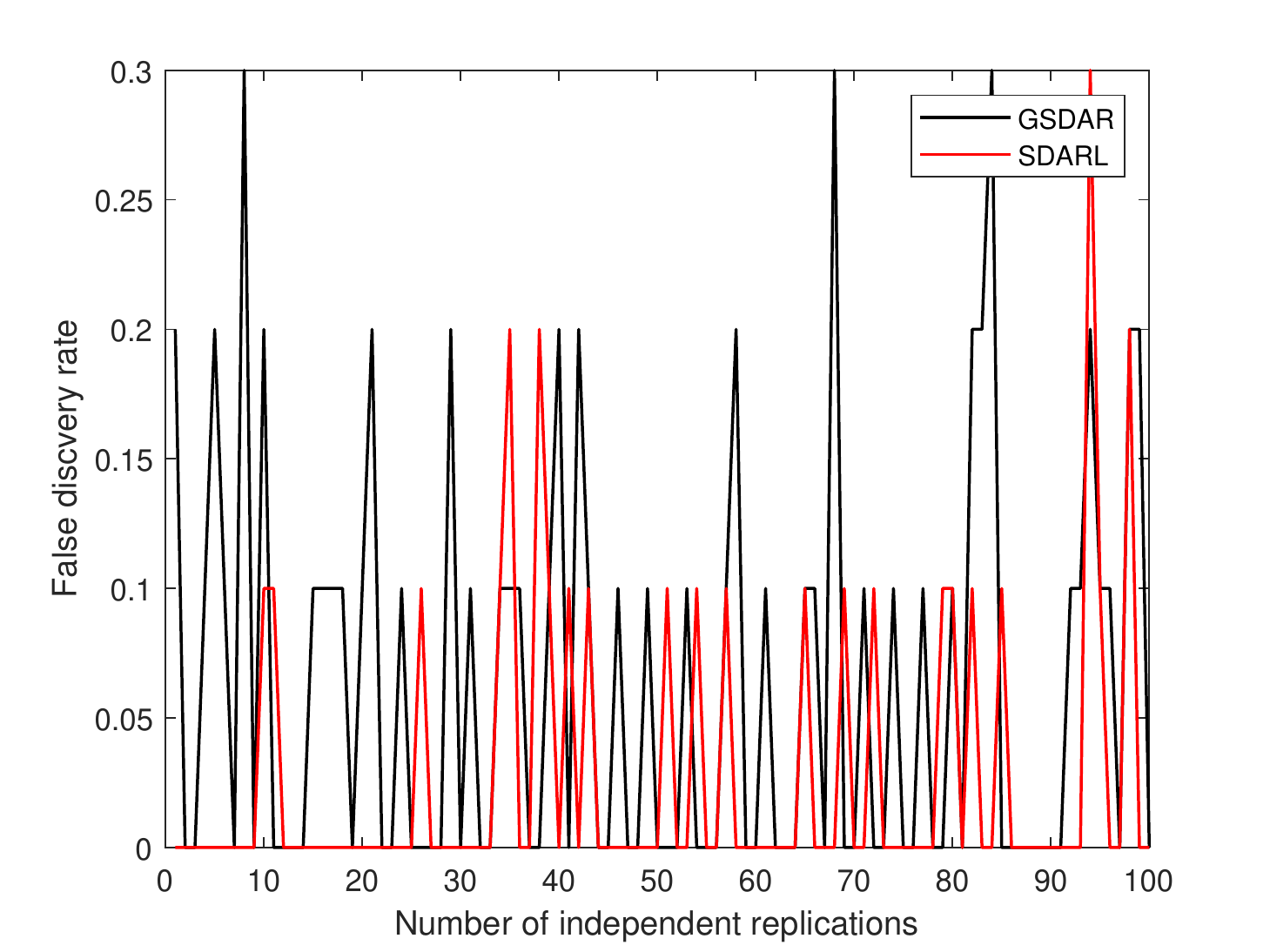}\hspace{-.4cm}
\includegraphics[width=0.45\textwidth,height=4cm]{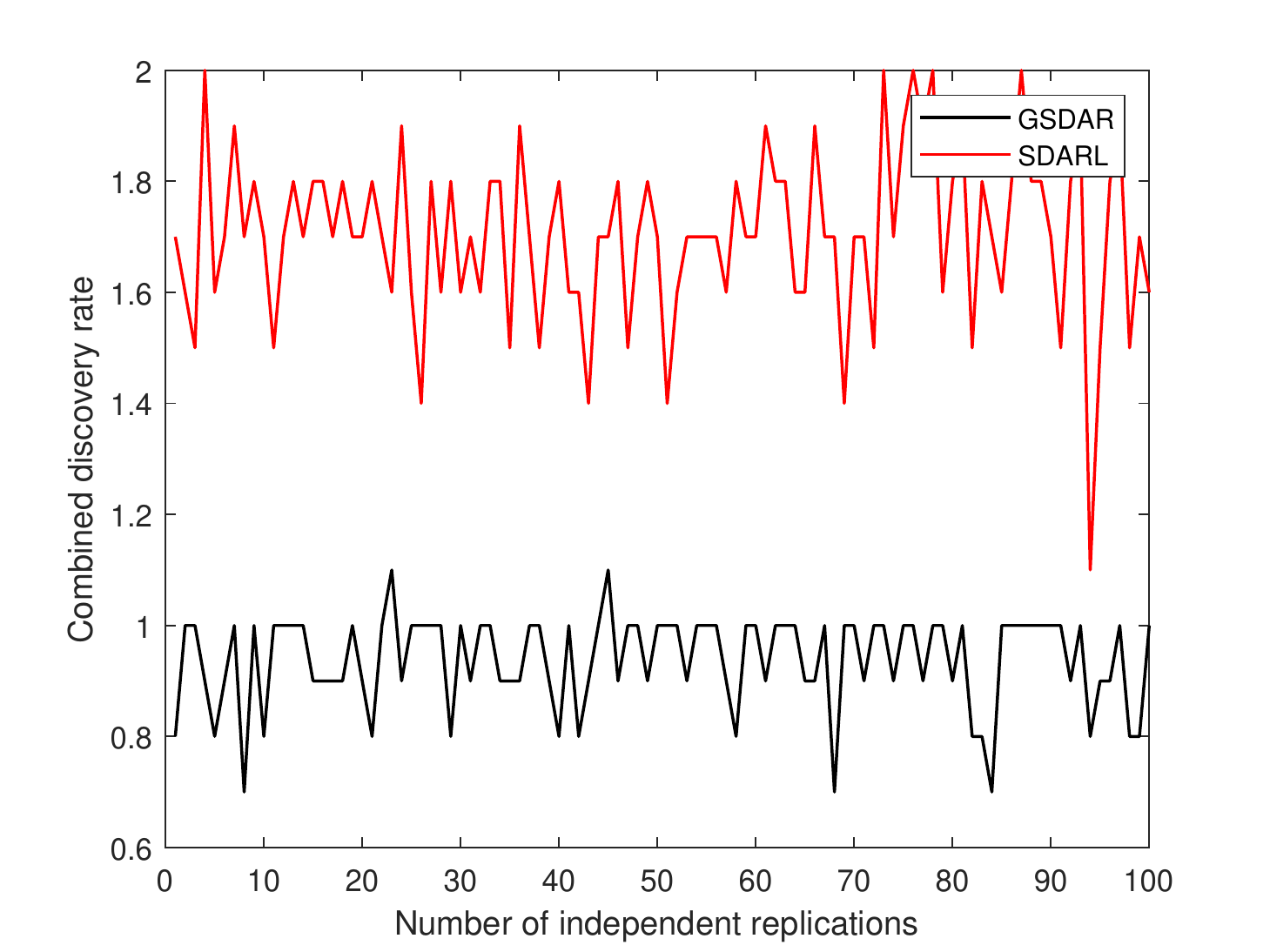}
\caption{{\small The comparison between SDARL and its version with a fixed step size $\tau= 1$ in logistic regression problems}}
\label{fig:7}
\end{figure}

In addition, in order to test the effectiveness of SDARL algorithm for logistic regression problems, we give the average classification accuracy rate and average number of iterations with different sparsity levels $K=5:5:50$ in Figure \ref{fig:8}. Here we also consider the influence of matrix correlation and take $\rho=0.2:0.3:0.8$. We can see that, as the sparsity level $K$ increases, the average classification accuracy rates of Algorithm SDARL are all above 80\%, and the average numbers of iterations are all below 11 for each $\rho$. These are sufficient to illustrate the effectiveness and rapid convergence of SDARL for logistic regression problems.

\begin{figure}
\centering
\includegraphics[width=0.5\textwidth,height=5cm]{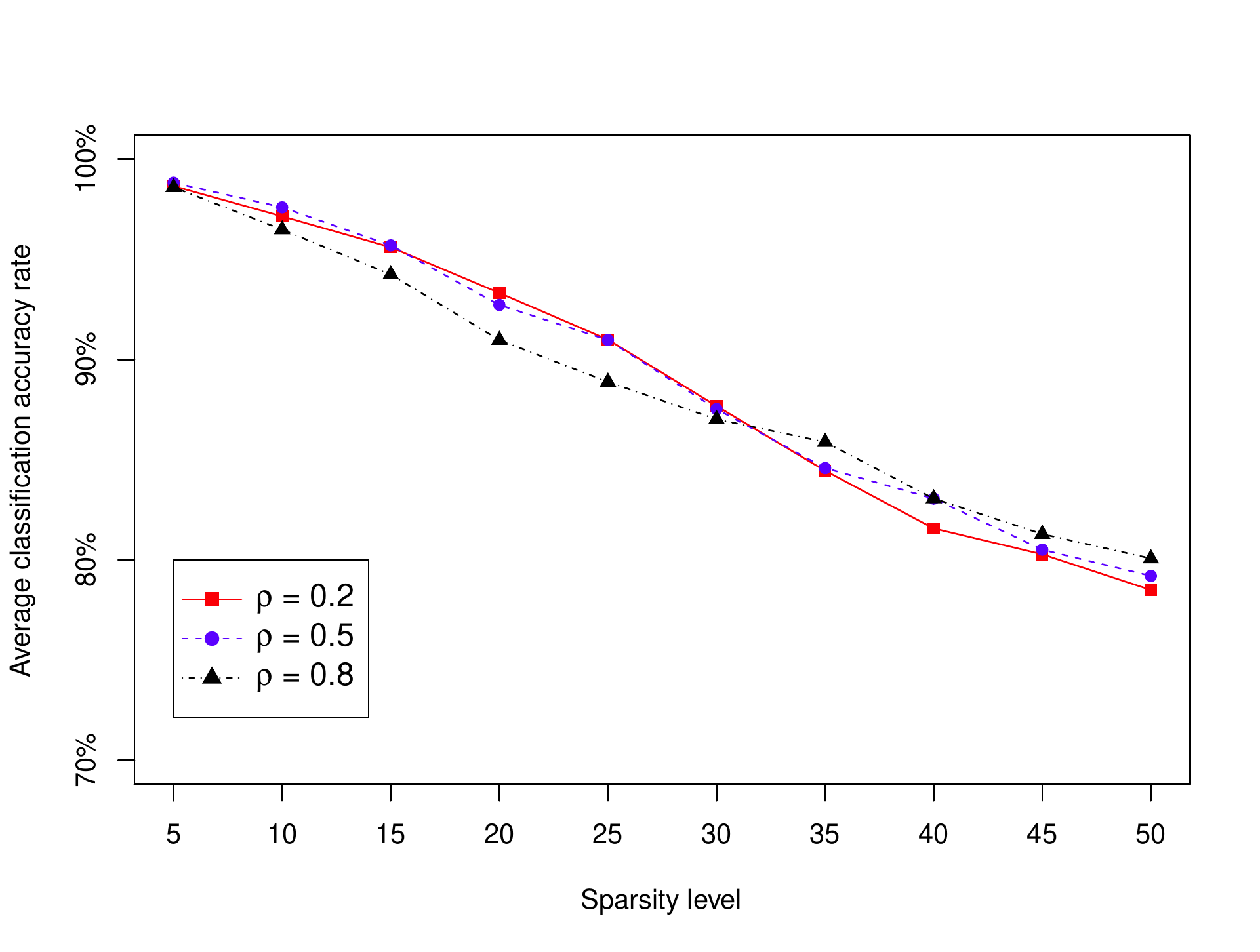}\hspace{-.2cm}
\includegraphics[width=0.5\textwidth,height=5cm]{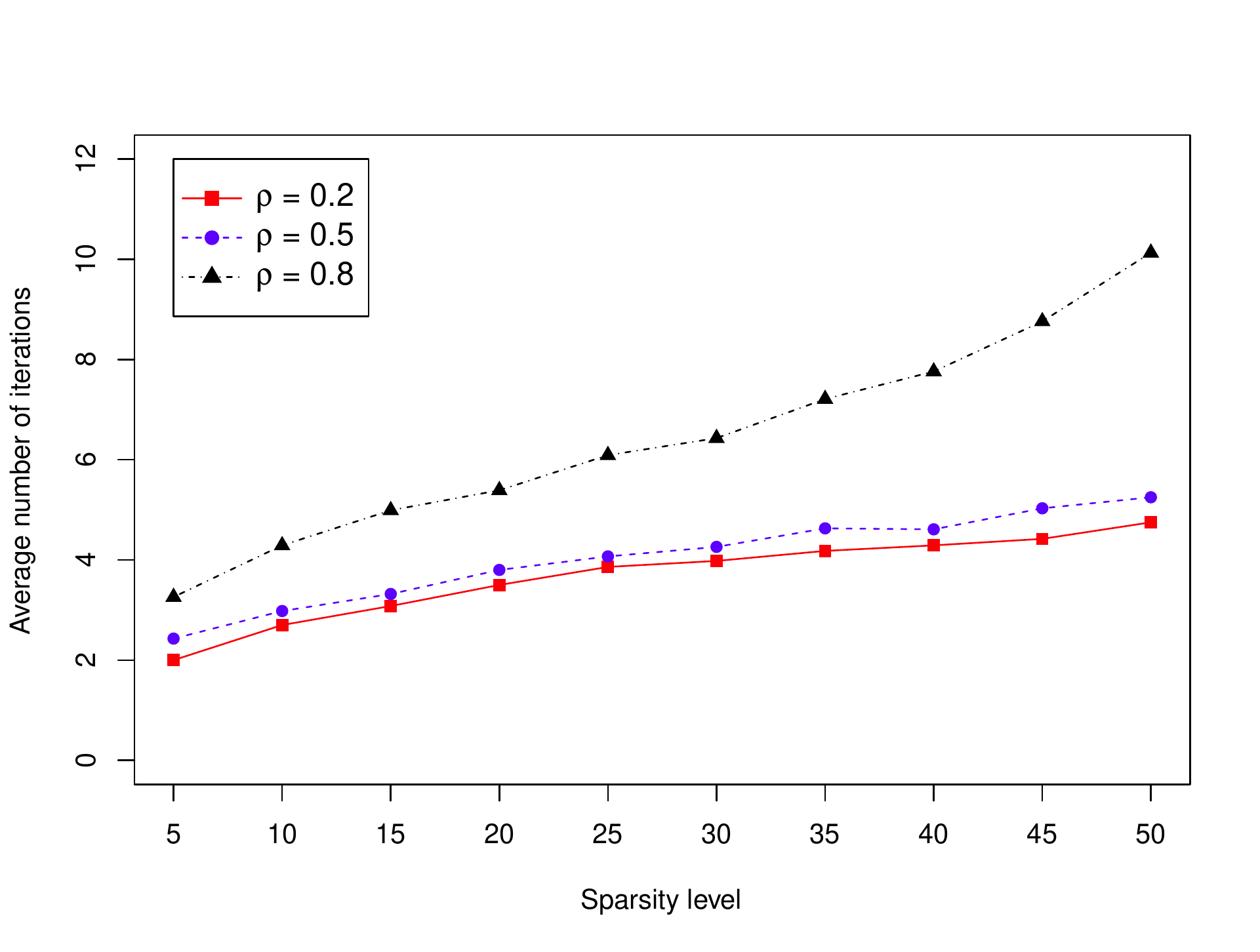}
\caption{{\small From left to right: the average classification accuracy rate and average number of iterations of SDARL for logistic regression as sparsity level $K$ increases.}}
\label{fig:8}
\end{figure}

\subsubsection{Influence of the model parameters}

In this part, we also consider the influence of the model parameters $\{n,p,K,\rho\}$ on the performance of ASDARL, LASSO, MCP and SCAD methods for logistic regression problems. We generate the data as above subsection and show the simulation results in Figure \ref{fig:9}- Figure \ref{fig:12} based on 10 independent replications. We can see that the values of APDR for LASSO, MCP and SCAD are higher than that of SDARL, but the gap is not big. However, the values of AFDR and ACDR for ASDARL algorithm are closer to the requirements of variable selection. Therefore, ASDARL can simultaneously select the relevant variables and avoid the irrelevant variables, thus reduce the complexity of the model.

\begin{figure}
\centering
\includegraphics[width=0.34\textwidth,height=4cm]{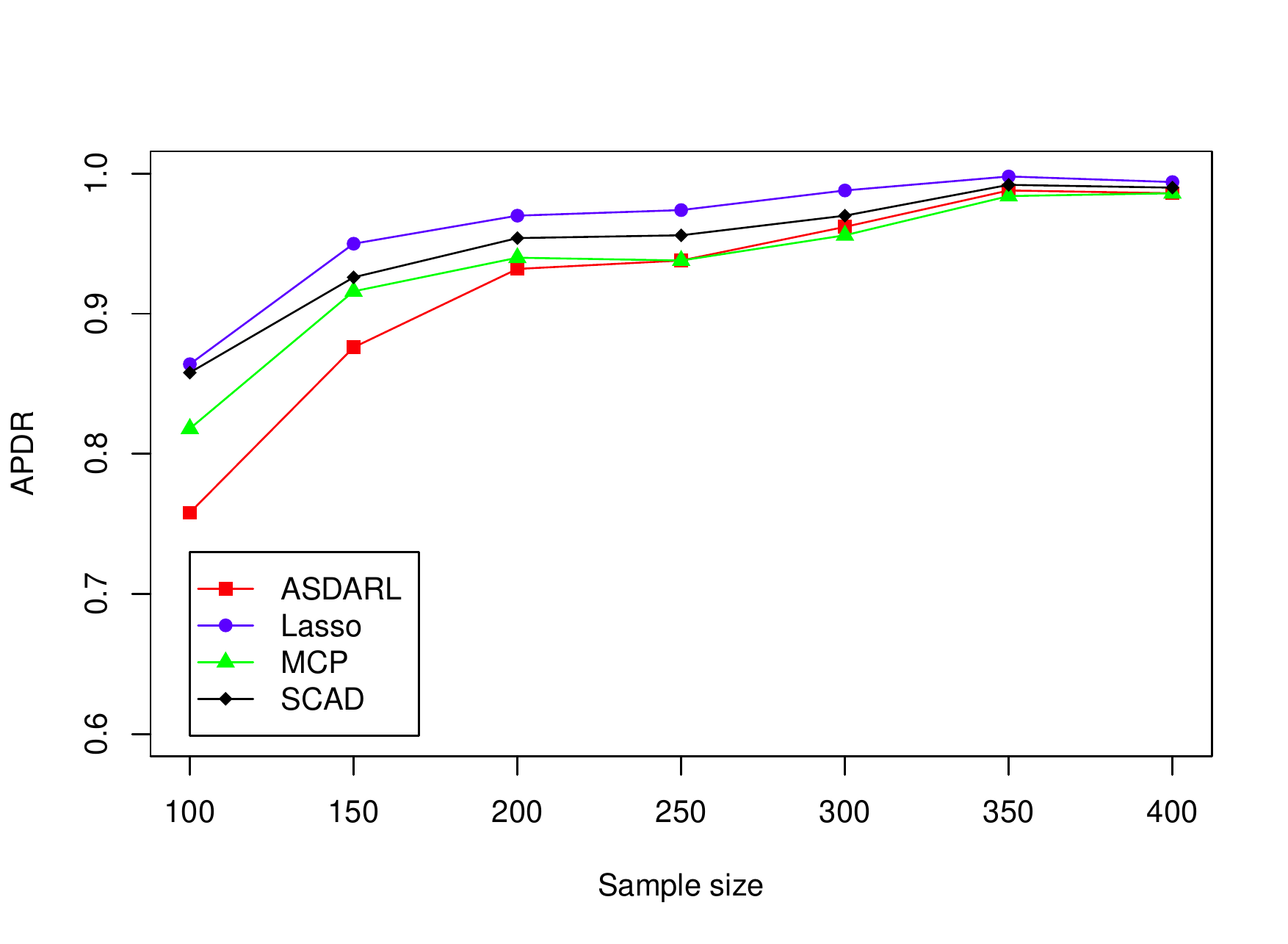}\hspace{-.3cm}
\includegraphics[width=0.34\textwidth,height=4cm]{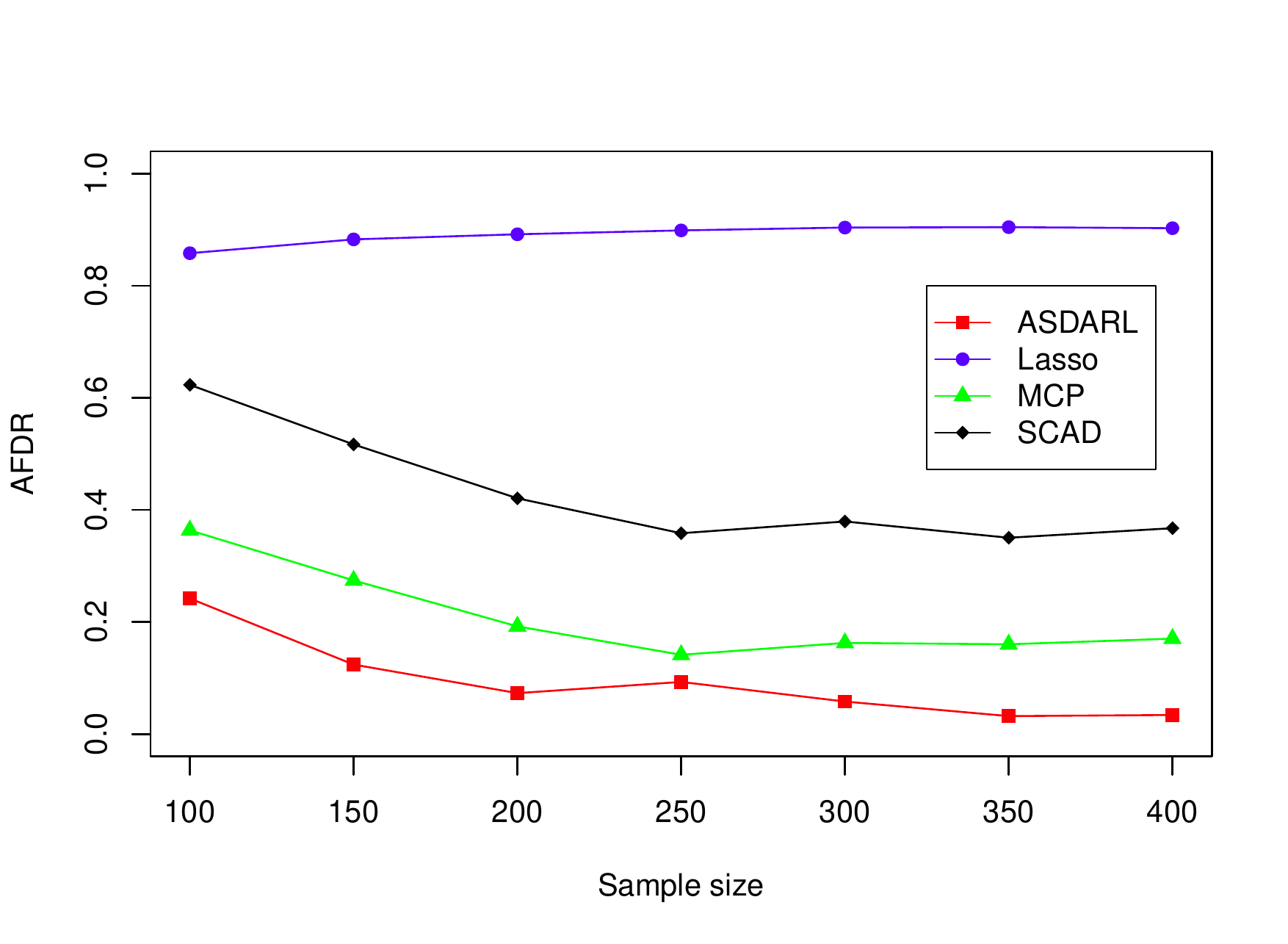}\hspace{-.3cm}
\includegraphics[width=0.34\textwidth,height=4cm]{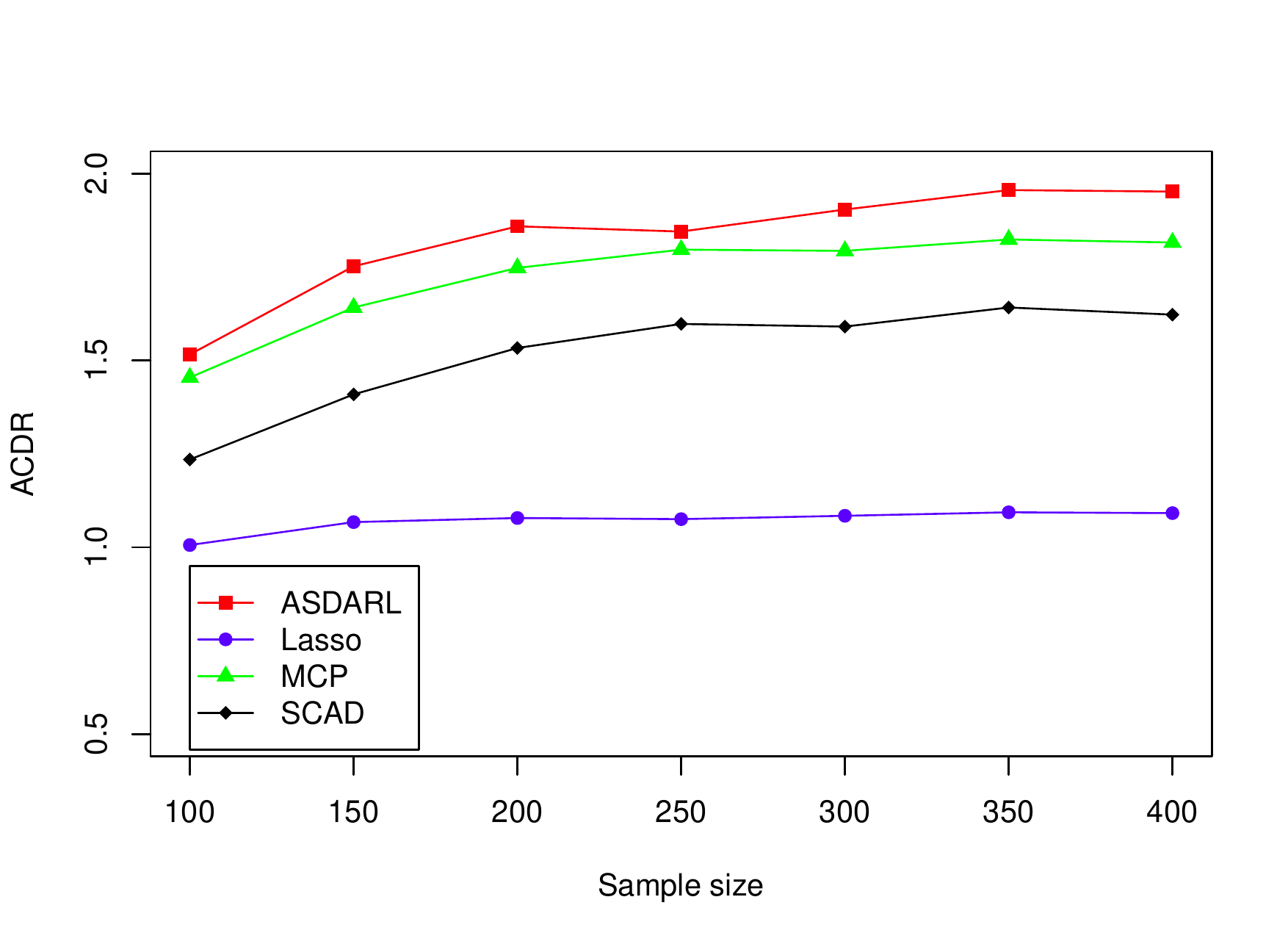}
\caption{{\small Numerical results (ARDR, AFDR, ACDR) of the influence of sample size in logistic regression problems with $n=100:50:400$, $p=500$, $K=5$, $R=10$, $\rho=0.2$.}}
\label{fig:9}
\end{figure}

\begin{figure}
\centering
\includegraphics[width=0.34\textwidth,height=4cm]{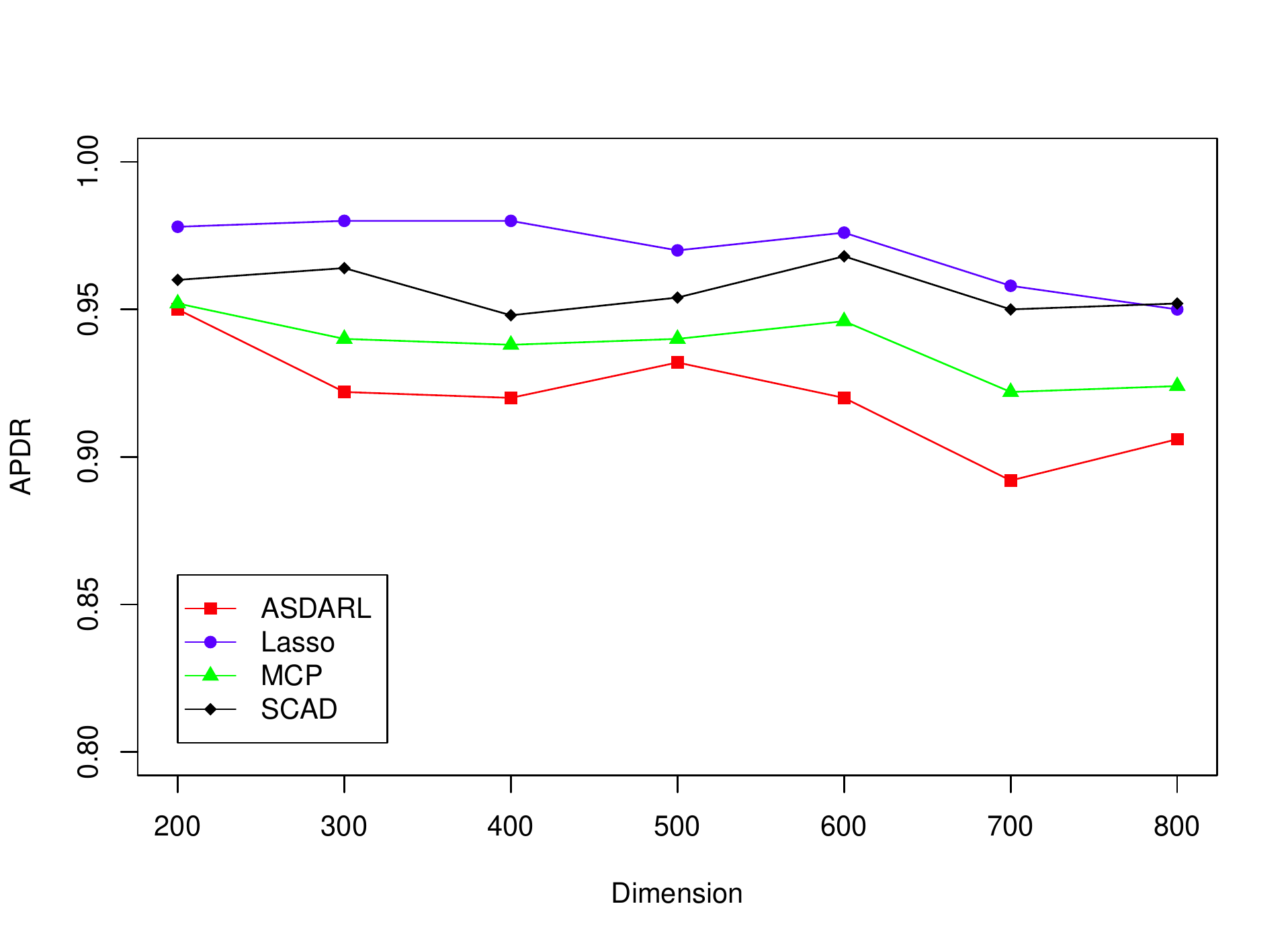}\hspace{-.3cm}
\includegraphics[width=0.34\textwidth,height=4cm]{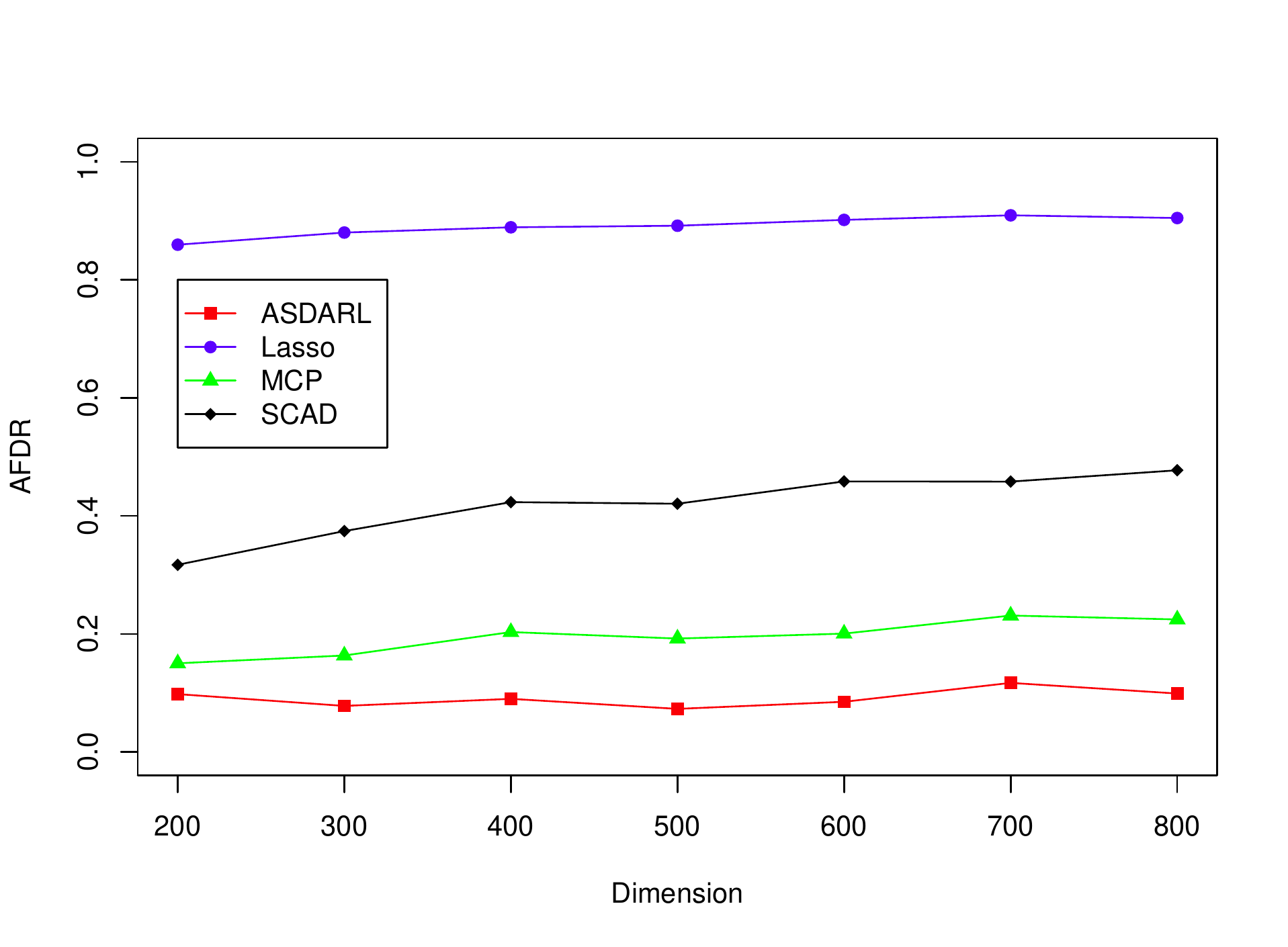}\hspace{-.3cm}
\includegraphics[width=0.34\textwidth,height=4cm]{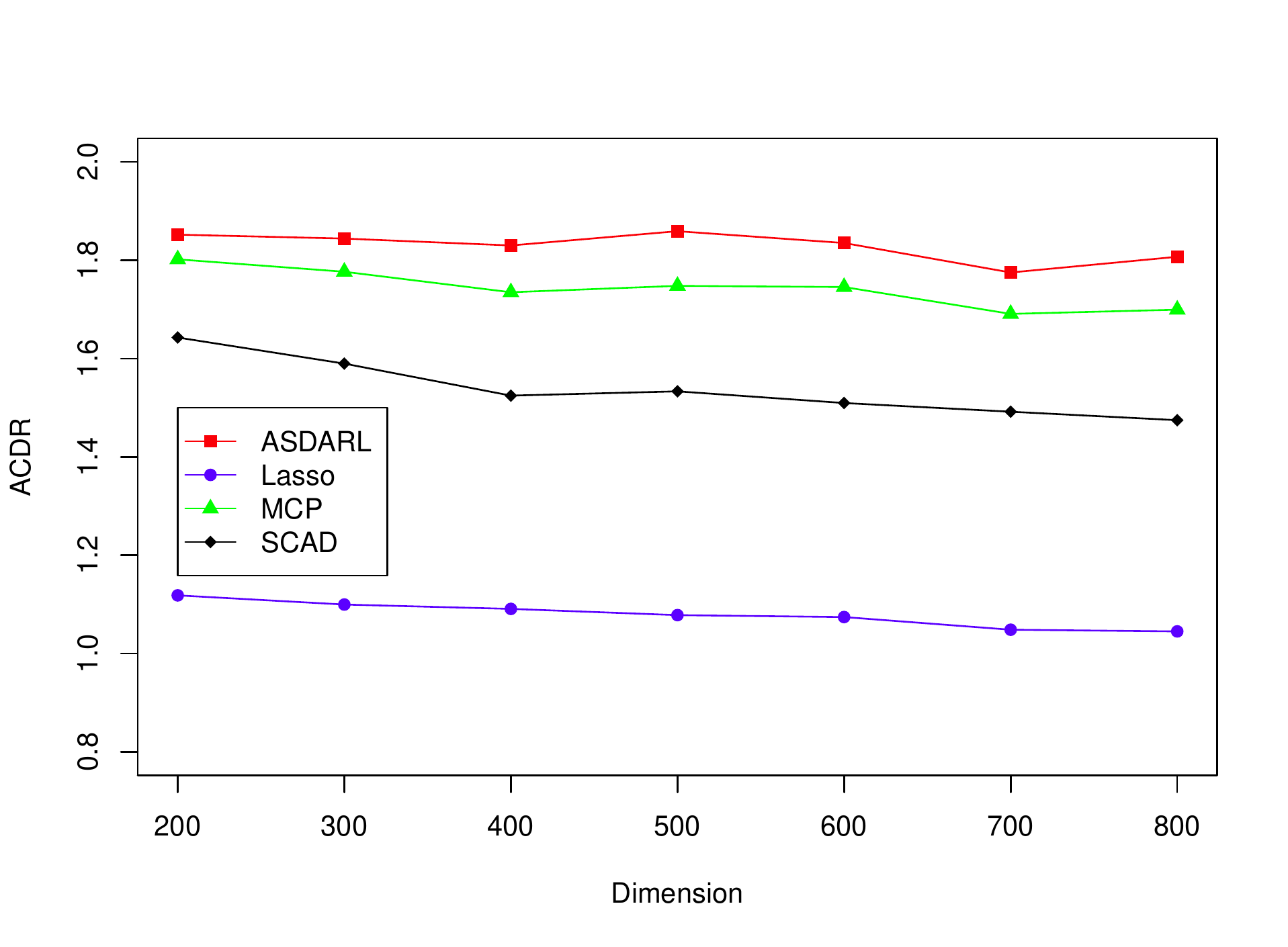}
\caption{{\small Numerical results (ARDR, AFDR, ACDR) of the influence of ambient dimension in logistic regression problems with $n=200$, $p=200:100:800$, $K=5$, $R=10$, $\rho=0.2$.}}
\label{fig:10}
\end{figure}

\begin{figure}
\centering
\includegraphics[width=0.34\textwidth,height=4cm]{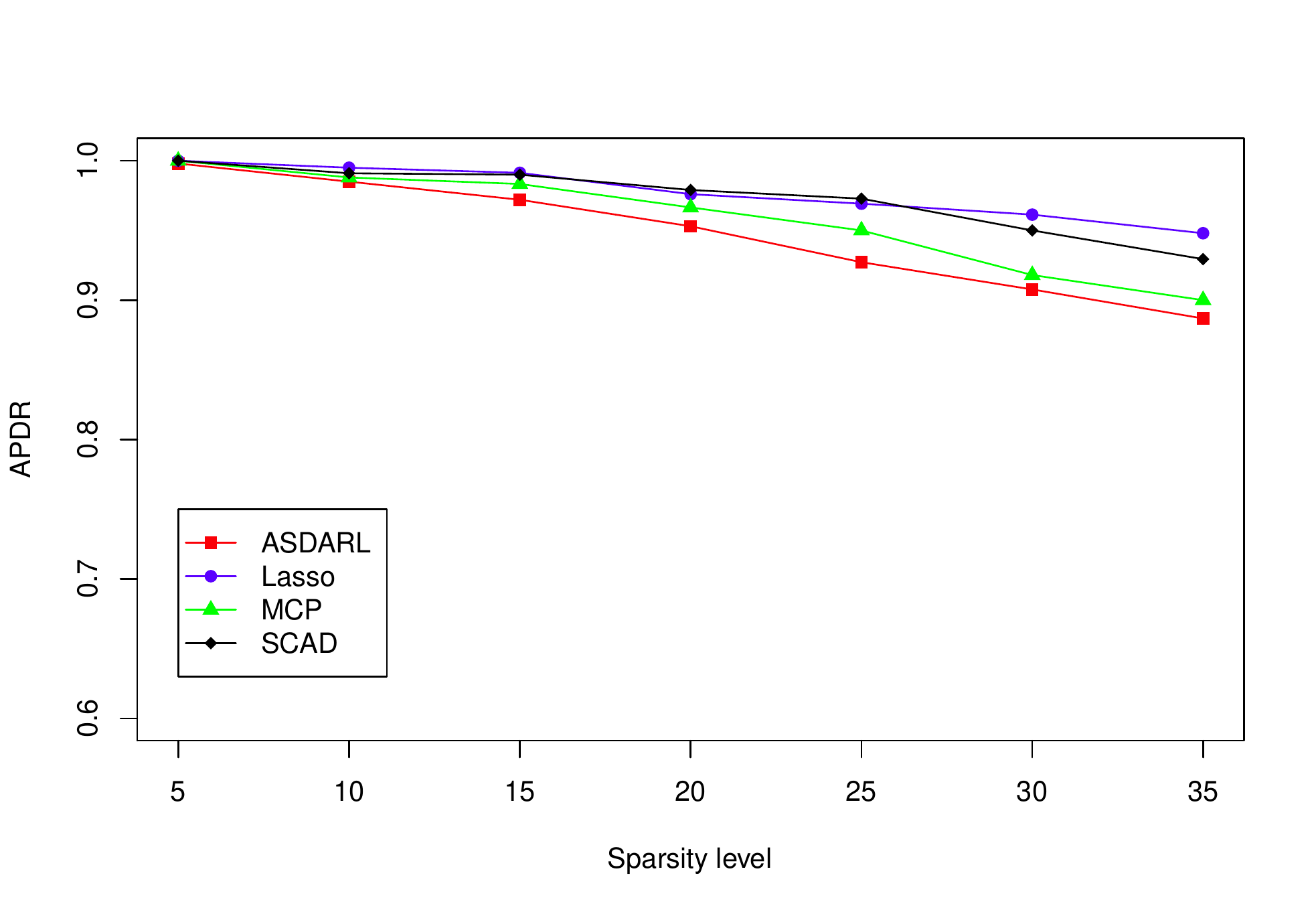}\hspace{-.3cm}
\includegraphics[width=0.34\textwidth,height=4cm]{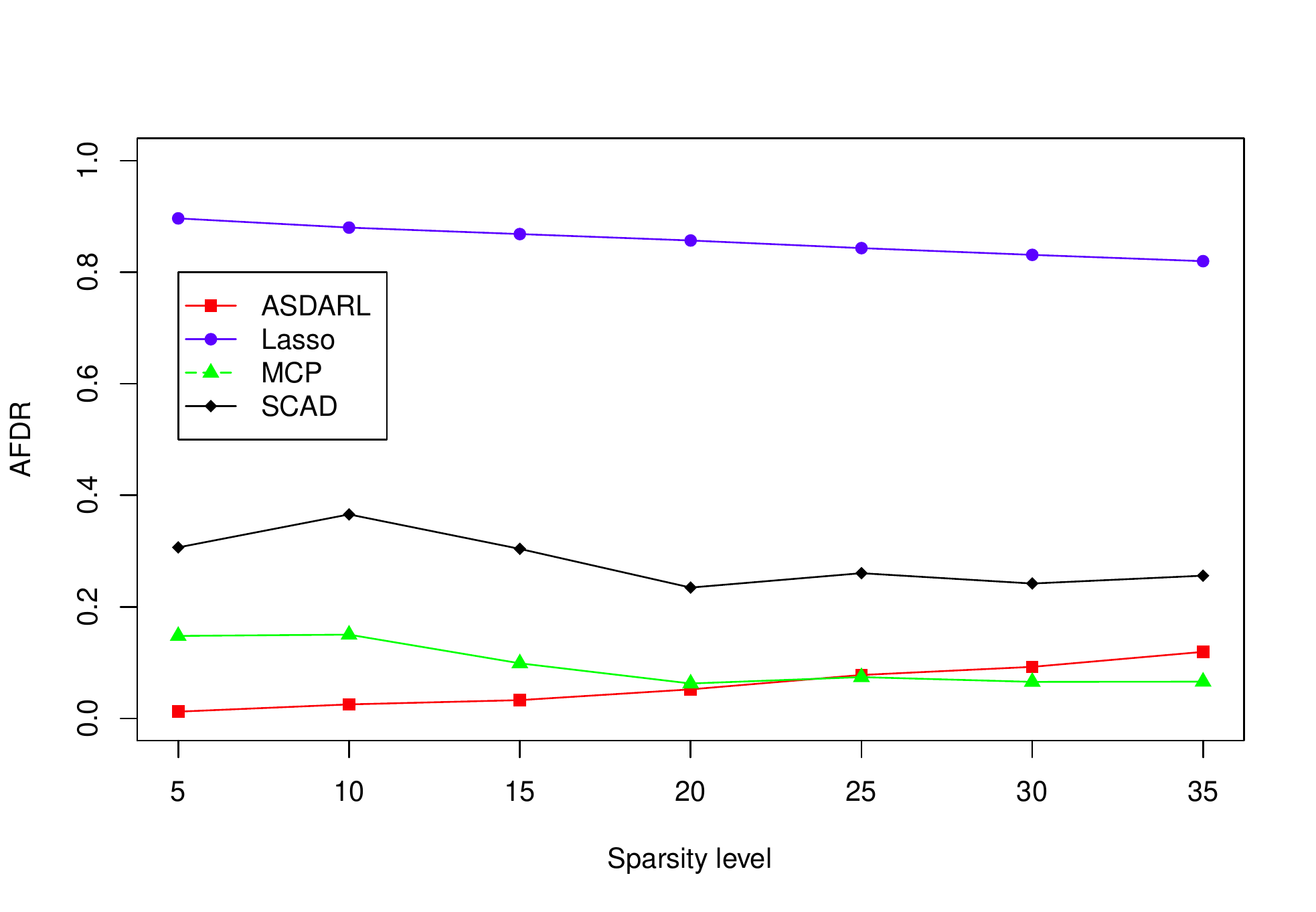}\hspace{-.3cm}
\includegraphics[width=0.34\textwidth,height=4cm]{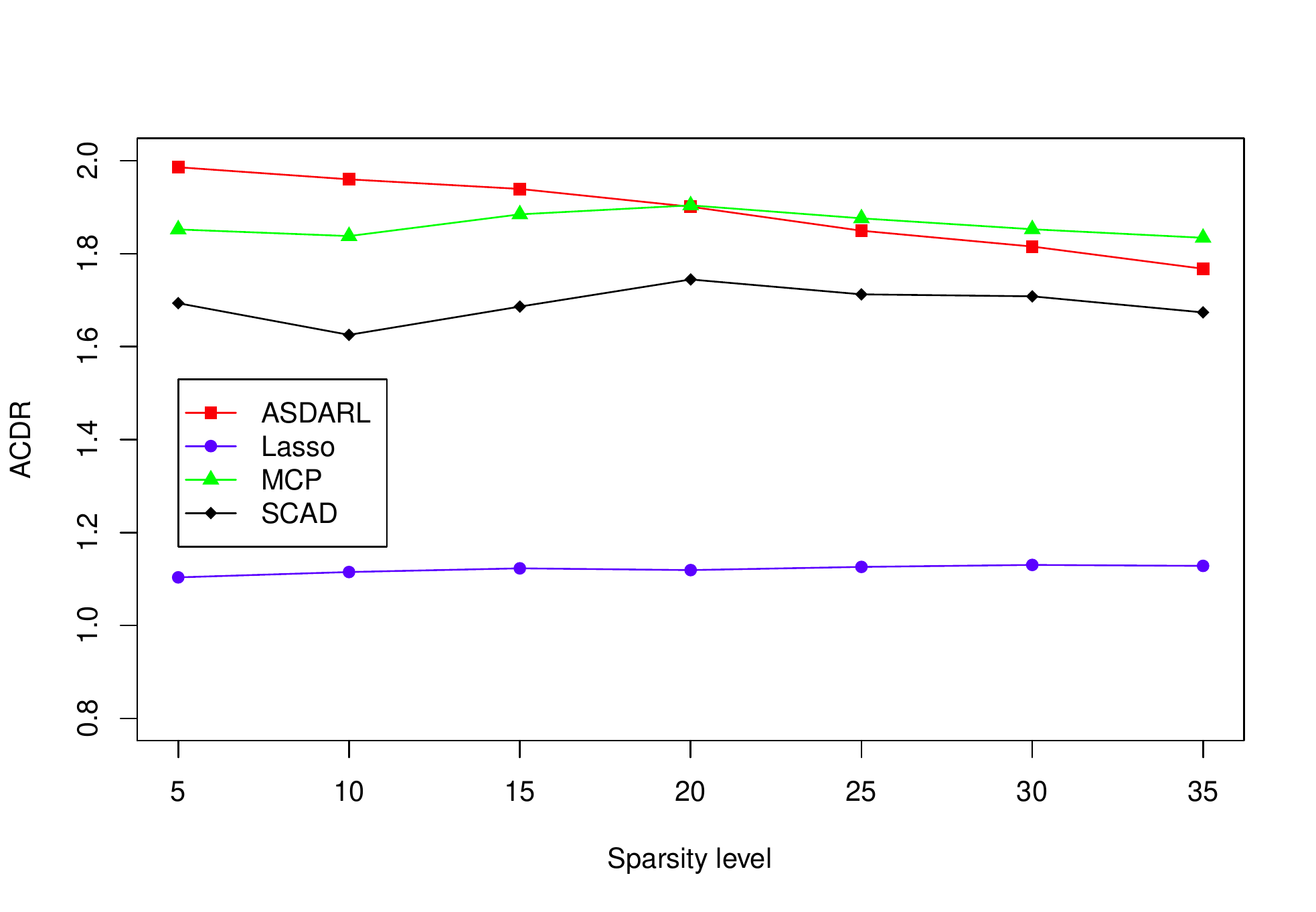}
\caption{{\small Numerical results (ARDR, AFDR, ACDR) of the influence of sparsity level in logistic regression problems with $n=800$, $p=1000$, $K=5:5:35$, $R=10$, $\rho=0.2$.}}
\label{fig:11}
\end{figure}

\begin{figure}
\centering
\includegraphics[width=0.34\textwidth,height=4cm]{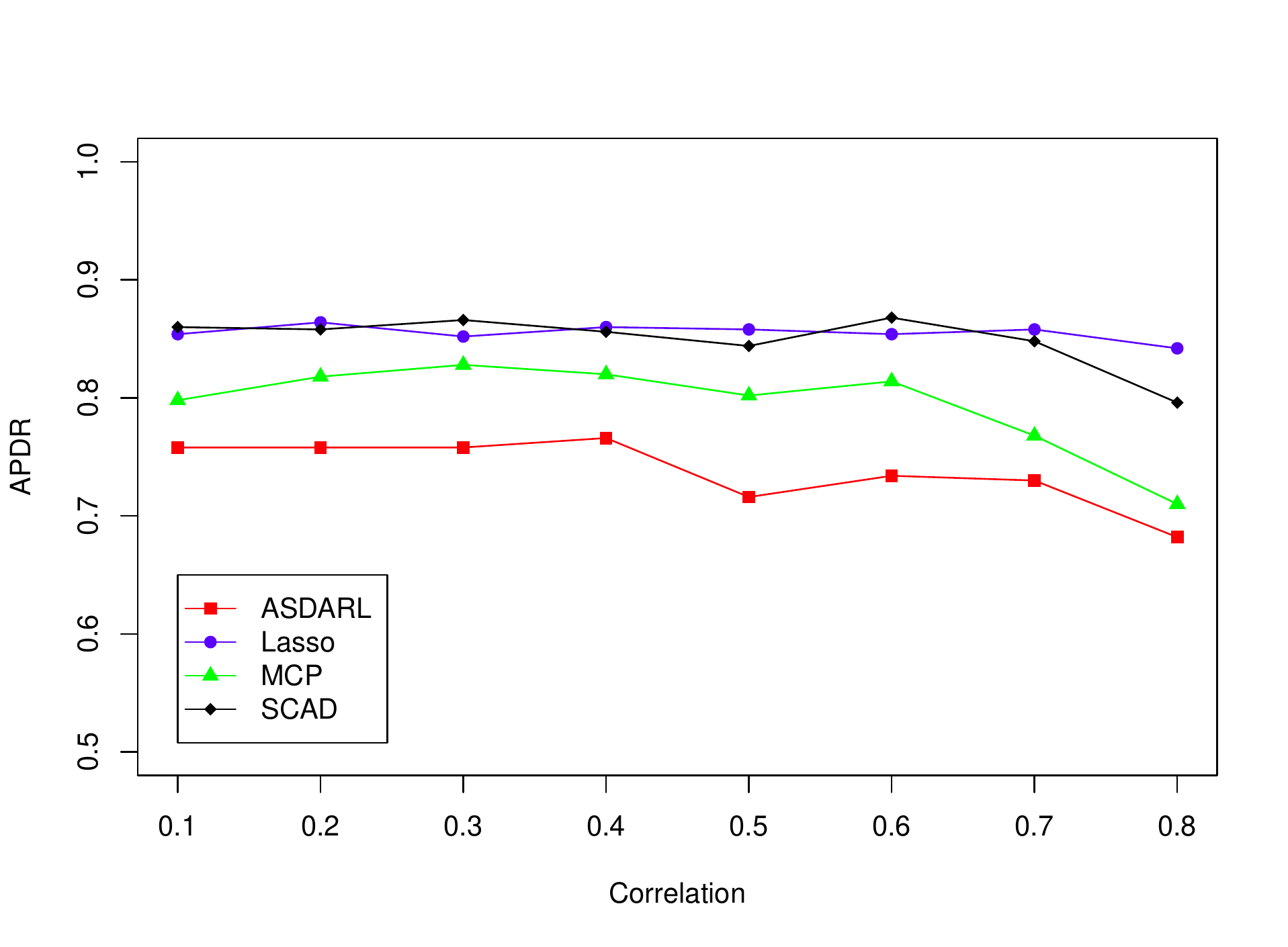}\hspace{-.3cm}
\includegraphics[width=0.34\textwidth,height=4cm]{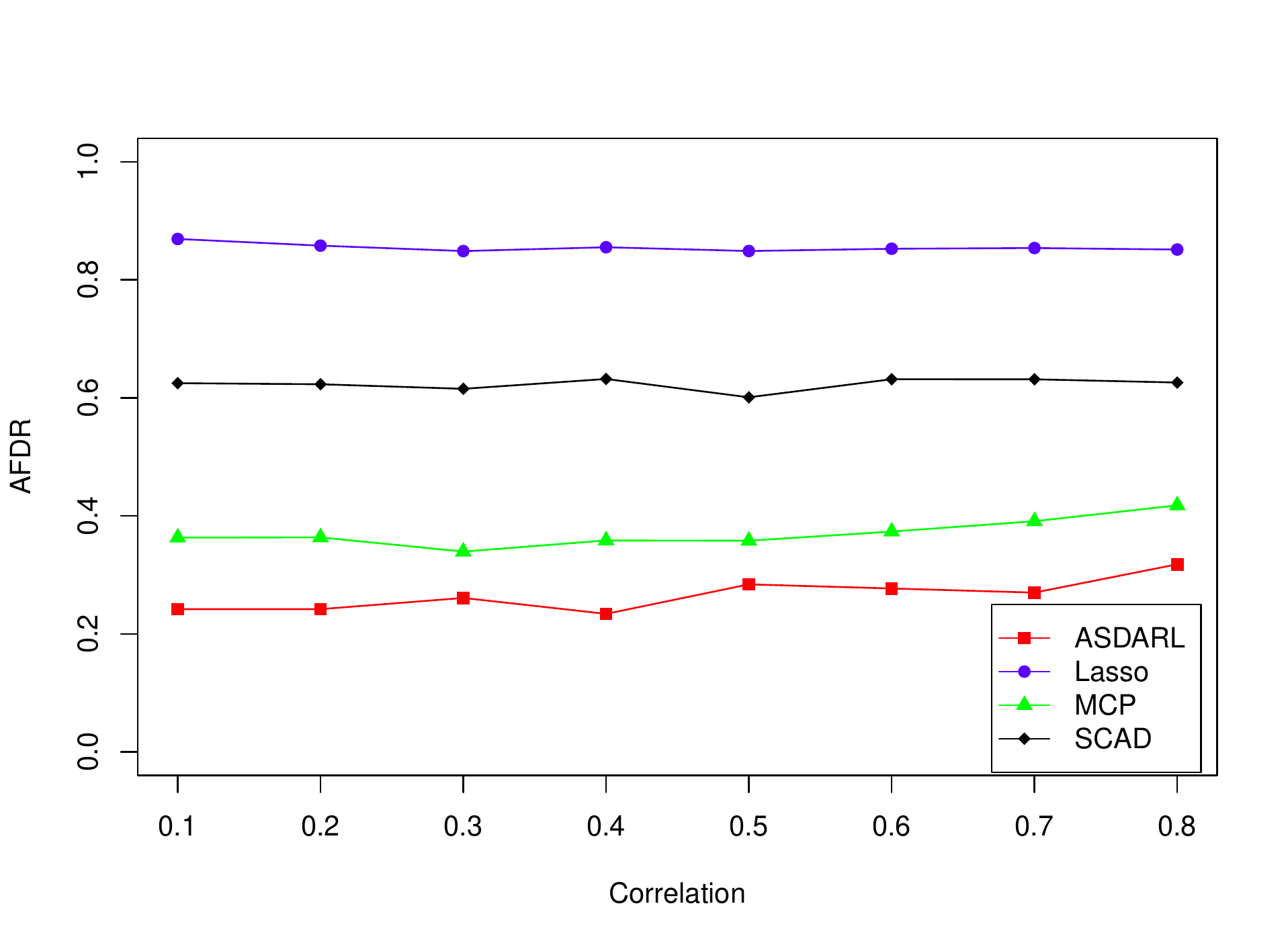}\hspace{-.3cm}
\includegraphics[width=0.34\textwidth,height=4cm]{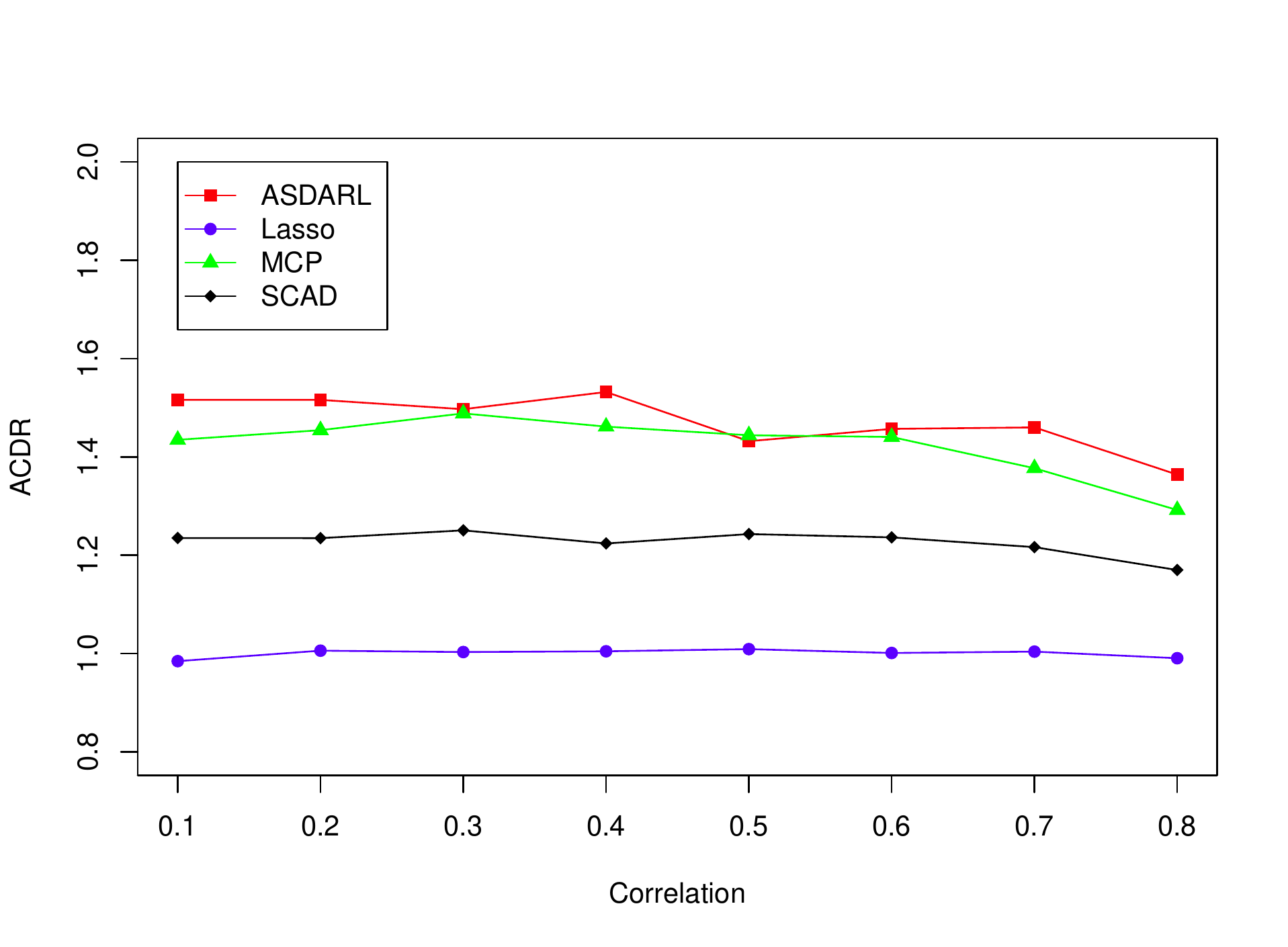}
\caption{{\small Numerical results (ARDR, AFDR, ACDR) of the influence of correlation in logistic regression problems with $n=500$, $p=1000$, $K=10$, $R=10$, $\rho=0.1:0.1:0.8$.}}
\label{fig:12}
\end{figure}

\subsubsection{Numerical comparison with synthetic data}

Based on 100 independent replications, we here compare the performance of LASSO, MCP and SCAD with the proposed methods in terms of ACAR, ARE, Time(s), APDR, AFDR and ACDR with synthetic data. We generate matrix $X$ and the underlying regression coefficient $\beta^*$ in the same way as described in Section \ref{linecom}. We give the numerical results in Table \ref{table2} with problem setting $p=5000$, $n=300$, $K=10$, $R=100$ for three kinds of correlation $\rho=0.2:0.3:0.8$. We can see that the five algorithms considered in this paper can effectively solve the simulation problems. A closer look shows that SDARL and its adaptive version ASDARL have exactly the same results except for the CPU time. This is because the ASDARL algorithm needs to run with different $T$, this will inevitably increase the CPU time. Then the above phenomenon in the table is reasonable. In addition, from the perspective of comparing, it can be seen that MCP always has a relatively higher classification accuracy rate with different $\rho$. SDARL has obvious advantages in average relative error and CPU time. From the results in the last column, we can see that the five algorithms have similar ability to select relevant variables, while MCP, SDARL and ASDARL tend to select fewer irrelevant variables than LASSO and SCAD.

\begin{table}[h]\tiny
\renewcommand\arraystretch{1.5}
\centering {
\setlength{\tabcolsep}{4mm}{
\begin{tabular}{|c||c|c|c|c|c|c|}
\Xhline{1.2pt}
$\rho$ & Algorithm & ACAR & ARE  & Time(s)& (APDR, AFDR, ACDR)\\
\Xhline{0.8pt}
\multirow{5}{*}{0.2} & LASSO & 86.91\% (4.71e-2) & 9.89e-1 (2.96e-3) & 2.21 (1.26e-1) & (0.7790, 0.9161, 0.8629) \\
& MCP & \textbf{93.88\% (3.87e-2)} & 9.48e-1 (2.57e-2)  & 3.68 (5.38e-1) & (0.779, \textbf{0.2114}, \textbf{1.5676}) \\
& SCAD & 93.68\% (4.08e-2)& 9.42e-1 (4.59e-2) & 4.28 (9.17e-1) & (\textbf{0.8030}, 0.5193, 1.2837) \\
& SDARL & 92.08\% (4.85e-2) & \textbf{6.46e-1 (2.07e-1)} & \textbf{0.56 (1.84e-1)} & (0.7180, 0.2820, 1.4360) \\
& ASDARL & 92.08\% (4.85e-2) & \textbf{6.46e-1 (2.07e-1)} & 1.61 (5.64e-1) & (0.7180, 0.2820, 1.4360) \\
\Xhline{0.8pt}
\multirow{5}{*}{0.5}  & LASSO & 87.43\% (5.01e-2)&  9.91e-1 (2.52e-3) & 2.19 (1.13e-1)  & (0.7670, 0.9148, 0.8522) \\
& MCP & \textbf{94.46\% (3.56e-2)} & 9.54e-1 (2.81e-2)  & 3.77 (5.62e-1) & (0.765, \textbf{0.2374}, \textbf{1.5276}) \\
& SCAD & 94.05\% (3.85e-2)  & 9.54e-1 (2.39e-2) & 4.41 (1.09e+0) & (\textbf{0.7870}, 0.5190, 1.268) \\
& SDARL & 92.53\% (4.65e-2) & \textbf{7.30e-1 (1.61e-1)} & \textbf{0.73 (2.19e-1)} & (0.7090, 0.2910, 1.4180) \\
& ASDARL & 92.53\% (4.65e-2) & \textbf{7.30e-1 (1.61e-1)} & 1.89 (6.44e-1) & (0.7090, 0.2910, 1.4180) \\
\Xhline{0.8pt}
\multirow{5}{*}{0.8}  & LASSO & 86.31\% (4.95e-2) & 9.93e-1 (2.01e-3) & 2.31 (1.29e-1)  & (0.7550, 0.9129, 0.8421) \\
& MCP & \textbf{93.55\% (4.45e-2)} & 9.64e-1 (2.31e-2)  & 3.95 (6.41e-1) & (0.7610, \textbf{0.2234}, \textbf{1.5376}) \\
& SCAD & 93.31\% (4.54e-2) & 9.62e-1 (2.50e-2)   & 4.29 (1.04e+0) & (\textbf{0.7980}, 0.5208, 1.2772) \\
& SDARL & 91.60\% (5.60e-2) & \textbf{8.38e-1 (1.06e-1)}  & \textbf{0.96 (2.89e-1)} & (0.6870, 0.3130, 1.3740) \\
& ASDARL & 91.60\% (5.60e-2) & \textbf{8.38e-1 (1.06e-1)}  & 2.07 (5.68e-1) & (0.6870, 0.3130, 1.3740) \\
\Xhline{1.2pt}
\end{tabular}}
\scriptsize\caption{Numerical comparison in logistic regression with $n = 300$, $p = 5000$, $K = 10$, $R=100$ and $\rho=0.2:0.3:0.8$.}
\label{table2}
}
\end{table}

\subsubsection{Numerical comparison with real data}

In this section, we carry out numerical comparisons with five real data sets: duke breast-cancer, colon-cancer, leukemia, madelon and splice. These data sets can be downloaded from \url{https://www.csie.ntu.edu.tw/~cjlin/libsvmtools/datasets/}. We first delete the vacant items, and then replace the values $-1$ with $0$ in response variable $y$. We compare SDARL/ASDARL with LASSO, MCP and SCAD in terms of classification accuracy rate ($\text{CAR}$), CPU time ($\text{Time(s)}$) and the number of selected variables ($\text{NSV}$). Let $T = \gamma n/\log(n)$ in SDARL, where $\gamma$ is a positive and finite constant. When the data set has no testing data, we get the classification accuracy rate through the training set itself. The numerical results are showed in Table \ref{table3}. Below each data name, we give the data information in the form of (features dimensionality, training size, testing size). Numerical results show that the proposed algorithms in this paper are comparable to LASSO, MCP and SCAD for the considered real data.

\begin{table}[h]\scriptsize
\renewcommand\arraystretch{1.5}
\centering {
\setlength{\tabcolsep}{4mm}{
\begin{tabular}{|c||c|c|c|c|c|}
\Xhline{1.2pt}
Data name & Algorithm & CAR  & Time(s)& NSV\\
\Xhline{0.8pt}
\multirow{5}{*}{\makecell[c]{Duke breast-cancer \\ (7219, 38, 4)}} & LASSO & 75\%  & 1.32 & 18 \\
& MCP & 25\%   & 1.81 & 6 \\
& SCAD & 75\%  & 1.42 & 13 \\
& SDARL & \textbf{1} & \textbf{0.02} & 9 \\
& ASDARL & \textbf{1}  & 0.03 & 9 \\
\Xhline{0.8pt}
\multirow{5}{*}{\makecell[c]{Colon-cancer \\ (2000, 62, 0)}} & LASSO & 88.71\%  & 0.53 & 8 \\
& MCP & 85.48\%   & 1.01 & 2 \\
& SCAD & 90.32\%  & 0.72 & 10 \\
& SDARL & \textbf{1} & \textbf{0.05} & 6 \\
& ASDARL & \textbf{1}  & 0.06 & 5 \\
\Xhline{0.8pt}
\multirow{5}{*}{\makecell[c]{Leukemia \\ (7219, 38, 34)}} & LASSO & 91.17\%  & 0.93 & 13 \\
& MCP & \textbf{94.11\%}   & 1.73 & 4 \\
& SCAD & \textbf{94.11\%}  & 1.44 & 10 \\
& SDARL & 91.17\% & \textbf{0.02} & 12 \\
& ASDARL & 91.17\% & 0.03 & 12 \\
\Xhline{0.8pt}
\multirow{5}{*}{\makecell[c]{Madelon \\ (500, 2000, 600)}} & LASSO & \textbf{54.50\%}  & 36.79 & 34 \\
& MCP & 53.16\%   & 12.61 & 14 \\
& SCAD & 53.50\%  & 12.54 & 15 \\
& SDARL & 52.16\% & \textbf{1.01} & 52 \\
& ASDARL & \textbf{54.50\%}  & 21.66 & 14 \\
\Xhline{0.8pt}
\multirow{5}{*}{\makecell[c]{Splice \\ (60, 1000, 2175)}} & LASSO & 56.87\%  & 1.25 & 39 \\
& MCP & 68.78\%   & 1.33 & 20 \\
& SCAD & 62.75\%  & 1.16 & 32 \\
& SDARL & 52.64\% & \textbf{0.29} & 28 \\
& ASDARL & \textbf{74.71\%}  & 1.28 & 11 \\
\Xhline{1.2pt}
\end{tabular}}
\scriptsize\caption{Numerical results of the real data.}
\label{table3}
}
\end{table}

\section{Conclusion}\label{con}

This paper mainly focuses on high-dimensional data analysis with sparse assumption and considers regression models with $\ell_0$-penalty. Based on support detection using primal and dual information and root finding, we propose a data-driven line search rule to adaptively update the step size, and refer to the algorithm as SDARL for brevity. In addition, we also propose the ASDARL algorithm, an adaptive version of SDARL, to deal with the situation that the true sparsity level is unknown in advance. Theoretically, without any restrictions on the parameters of restricted strong convexity and smoothness for the loss function, we establish the $\ell_2$ error bound between the iteration sequence and the target regression coefficient. This weakens the conditions of theoretical analysis for existing literature, so that the numerical calculation does not need to manually adjust the step size, but can directly follow the line search, thereby expanding the application range of SDARL/ASDARL algorithm. We also analyze the necessity and effectiveness of the novel line search method in this paper from both theoretical and numerical aspects. In addition, the numerical comparisons with LASSO, MCP and SCAD in linear and logistic regression problems illustrate the stability and effectiveness of the SDARL/ASDARL algorithm.

\section*{Acknowledgements}
The work of Xiliang Lu is partially supported by the National Key Research and Development Program of China (No. 2020YFA0714200), the National Science Foundation of China (No. 11871385) and the Open Research Fund of KLATASDS2005. The work of Yuling Jiao is supported in part by the National Science Foundation of China (No. 11871474) and the research fund of KLATASDSMOE of China.

\bigskip
\begin{appendix}
\setcounter{equation}{0}
\setcounter{subsection}{0}
\renewcommand{\theequation}{A.\arabic{equation}}
\renewcommand{\thesubsection}{A.\arabic{subsection}}
\renewcommand{\thelemma}{\arabic{lemma}}
\begin{flushleft}
\textbf{\Large APPENDIX}
\end{flushleft}

\subsection{Proof of Lemma \ref{le1}}\label{leproof}
\begin{proof}
The proof is similar to that of \cite[Lemma 1]{HJK2020}, thus is omitted here.
\end{proof}

\subsection{Proof of Theorem \ref{welldef}}\label{thwel}

\begin{proof}
Let $\theta^k=\beta^k-\tau^k \nabla F(\beta^k)$, we have
\begin{align*}
F(\theta^{k+1}|_{\A^{k+1}})-F(\beta^{k+1})\leq \langle \nabla F(\beta^{k+1}), \theta^{k+1}|_{\A^{k+1}}-\beta^{k+1}\rangle+\frac{M_{2T}}{2}\|\theta^{k+1}|_{\A^{k+1}}-\beta^{k+1}\|^2.
\end{align*}
Then on the one hand,
\begin{align*}
&\langle \nabla F(\beta^{k+1}), \theta^{k+1}|_{\A^{k+1}}-\beta^{k+1} \rangle\\&=\langle\nabla F(\beta^{k+1}), \theta^{k+1}|_{\A^{k+1}} \rangle\\
&=\langle\nabla_{\A^{k+1}} F(\beta^{k+1}), \theta^{k+1}_{\A^{k+1}} \rangle\\
&=\langle\nabla_{\A^{k+1}\backslash \A^k} F(\beta^{k+1}), \theta^{k+1}_{\A^{k+1}\backslash \A^k} \rangle.
\end{align*}
Because $\theta^{k+1}_{\A^{k+1}\backslash \A^k}=\beta_{\A^{k+1}\backslash \A^k}^{k+1}-\tau^{k+1} \nabla_{\A^{k+1}\backslash \A^k} F(\beta^{k+1})=-\tau^{k+1}  \nabla_{\A^{k+1}\backslash \A^k} F(\beta^{k+1})$, so we have
\begin{align}\label{eq3}
\langle \nabla F(\beta^{k+1}), \theta^{k+1}|_{\A^{k+1}}-\beta^{k+1} \rangle=-\tau^{k+1} \|\nabla_{\A^{k+1}\backslash \A^k} F(\beta^{k+1})\|^2.
\end{align}

On the other hand,
\begin{align*}
&\|\theta^{k+1}|_{\A^{k+1}}-\beta^{k+1}\|^2=\|\theta^{k+1}|_{\A^{k+1}}-\beta^{k+1}|_{\A^k}\|^2\\
&=\|\theta^{k+1}|_{\A^{k+1}\backslash \A^k}+\theta^{k+1}|_{\A^{k+1}\bigcap \A^k}-\beta^{k+1}|_{\A^{k+1}\bigcap \A^k}-\beta^{k+1}|_{\A^k\backslash \A^{k+1}}\|^2\\
&=\|\theta^{k+1}_{\A^{k+1}\backslash \A^k}\|^2+\|\theta^{k+1}_{\A^{k+1}\bigcap \A^k}-\beta^{k+1}_{\A^{k+1}\bigcap \A^k}\|^2+\|\beta^{k+1}_{\A^k\backslash \A^{k+1}}\|^2.
\end{align*}
Because
\begin{align*}
&\|\theta^{k+1}_{\A^{k+1}\bigcap \A^k}-\beta^{k+1}_{\A^{k+1}\bigcap \A^k}\|^2\\
&=\|\beta^{k+1}_{\A^{k+1}\bigcap \A^k}-\tau^{k+1} \nabla _{\A^{k+1}\bigcap \A^k}F(\beta^{k+1})-\beta^{k+1}_{\A^{k+1}\bigcap \A^k}\|^2\\
&=(\tau^{k+1})^2 \|\nabla _{\A^{k+1}\bigcap \A^k}F(\beta^{k+1})\|^2\\
&=0,
\end{align*}
so we can get $\|\theta^{k+1}|_{\A^{k+1}}-\beta^{k+1}\|^2=\|\theta^{k+1}_{\A^{k+1}\backslash \A^k}\|^2+\|\beta^{k+1}_{\A^k\backslash \A^{k+1}}\|^2$.
And because $|\A^k\backslash \A^{k+1}|=|\A^{k+1}\backslash \A^k|$ and $\theta^{k+1}_{\A^{k}\backslash \A^{k+1}}=\beta_{\A^{k}\backslash \A^{k+1}}^{k+1}$, then $\|\beta^{k+1}_{\A^k\backslash \A^{k+1}}\|^2=\|\theta^{k+1}_{\A^{k}\backslash \A^{k+1}}\|^2\leq\|\theta^{k+1}_{\A^{k+1}\backslash \A^k}\|^2$. Then
\begin{align}\label{eq4}
\|\theta^{k+1}|_{\A^{k+1}}-\beta^{k+1}\|^2\leq 2 \|\theta^{k+1}_{\A^{k+1}\backslash \A^k}\|^2.
\end{align}
Summing up (\ref{eq3}) and (\ref{eq4}), we have
\begin{align*}
F(\theta^{k+1}|_{\A^{k+1}})-F(\beta^{k+1})&\leq -\tau^{k+1} \|\nabla_{\A^{k+1}\backslash \A^k} F(\beta^{k+1})\|^2+M_{2T} \|\theta^{k+1}_{\A^{k+1}\backslash \A^k}\|^2\\
&=-\tau^{k+1} \|\nabla_{\A^{k+1}\backslash \A^k} F(\beta^{k+1})\|^2+(\tau^{k+1})^2 M_{2T}\|\nabla_{\A^{k+1}\backslash \A^k} F(\beta^{k+1})\|^2\\
&=-\tau^{k+1} (1-\tau^{k+1} M_{2T})\|\nabla_{\A^{k+1}\backslash \A^k} F(\beta^{k+1})\|^2.
\end{align*}

After simple calculation, we can get that the following formula holds when $0<\tau^{k+1}\leq\frac{1-\sigma}{M_{2T}}$
$$F((\beta^{k+1}+\tau^{k+1} d^{k+1})|_{\A^{k+1}})-F(\beta^{k+1})\leq -\sigma \tau^{k+1}||\nabla_{\A^{k+1}\backslash \A^{k}}F(\beta^{k+1})||^2.$$
Therefore, instead of running endlessly, the line search in SDARL algorithm will definitely end in the continuous decrease of the step size.
In addition, the obtained step size must be greater than $\frac{\nu(1-\sigma)}{M_{2T}}$. So far, we have proved the well-defined character of the novel data driven line search rule (\ref{line}).
\end{proof}

The proof of Theorem \ref{le4} is built on the following lemma.

\begin{lemma}\label{lere}
Assumption \ref{ass2} holds and $\|\beta^*\|_0=K\leq T$, then
\begin{align}\label{re1}
\|\nabla_{\A^k\backslash \A^{k-1}}F(\beta^k)\|^2\geq \frac{2 m_{K+T} |\A^k\backslash \A^{k-1}|}{|\A^k\backslash \A^{k-1}|+|\A^*\backslash \A^{k-1}|} [F(\beta^k)-F(\beta^*)].
\end{align}
\end{lemma}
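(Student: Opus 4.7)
The plan has two essentially independent pieces: an upper bound on the objective gap $F(\beta^k)-F(\beta^*)$ in terms of $\|\nabla_{\A^*\setminus\A^{k-1}}F(\beta^k)\|^2$, and a comparison that replaces $\A^*\setminus\A^{k-1}$ by $\A^k\setminus\A^{k-1}$.

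First I would exploit the fact that the algorithm sets $\beta^k_{\I^{k-1}}=\mathbf{0}$ and defines $\beta^k_{\A^{k-1}}$ as the unconstrained minimizer of $\widetilde F(\cdot)$ on $\A^{k-1}$; the associated first-order condition gives $\nabla_{\A^{k-1}}F(\beta^k)=\mathbf{0}$. Since $\mathrm{supp}(\beta^k)\subseteq\A^{k-1}$ and $\mathrm{supp}(\beta^*)\subseteq\A^*$, the support of $\beta^*-\beta^k$ is contained in $\A^{k-1}\cup\A^*$, which has cardinality $\le K+T$, so Assumption \ref{ass2} applies. Using $m_{K+T}$-RSC and the gradient vanishing on $\A^{k-1}$, I would compute
\begin{align*}
\langle\nabla F(\beta^k),\beta^*-\beta^k\rangle=\langle\nabla_{\A^*\setminus\A^{k-1}}F(\beta^k),\beta^*_{\A^*\setminus\A^{k-1}}\rangle,
\end{align*}
then invoke Cauchy--Schwarz and the elementary inequality $ab-\tfrac{m}{2}b^2\le \tfrac{a^2}{2m}$ to conclude
\begin{align*}
F(\beta^k)-F(\beta^*)\le \frac{\|\nabla_{\A^*\setminus\A^{k-1}}F(\beta^k)\|^2}{2\,m_{K+T}}.
\end{align*}

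The second step is where the combinatorial factor $|\A^k\setminus\A^{k-1}|/(|\A^k\setminus\A^{k-1}|+|\A^*\setminus\A^{k-1}|)$ comes from. The key observation is that on $\A^{k-1}$ we have $d^k=\mathbf{0}$ and on $\I^{k-1}$ we have $\beta^k=\mathbf{0}$, so $|\beta^k+\tau^k d^k|$ equals $|\beta^k_j|$ on $\A^{k-1}$ and $\tau^k|\nabla_jF(\beta^k)|$ on $\I^{k-1}$. Because $\A^k$ takes the $T$ largest such values, the set $\A^k\setminus\A^{k-1}\subseteq\I^{k-1}$ is exactly the top $|\A^k\setminus\A^{k-1}|$ indices of $|\nabla F(\beta^k)|$ inside $\I^{k-1}$; in particular, any index $j\in\I^{k-1}\setminus\A^k$ satisfies $|\nabla_jF(\beta^k)|\le \min_{i\in\A^k\setminus\A^{k-1}}|\nabla_iF(\beta^k)|$. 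Since $\A^*\setminus\A^{k-1}\subseteq\I^{k-1}$, splitting $\A^*\setminus\A^{k-1}$ into its intersection with $\A^k\setminus\A^{k-1}$ and its complement, and bounding the complement by $|S_2|$ copies of the minimum (which itself is bounded by the average $\|\nabla_{\A^k\setminus\A^{k-1}}F(\beta^k)\|^2/|\A^k\setminus\A^{k-1}|$), yields
\begin{align*}
\|\nabla_{\A^*\setminus\A^{k-1}}F(\beta^k)\|^2\le \frac{|\A^k\setminus\A^{k-1}|+|\A^*\setminus\A^{k-1}|}{|\A^k\setminus\A^{k-1}|}\,\|\nabla_{\A^k\setminus\A^{k-1}}F(\beta^k)\|^2.
\end{align*}

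Plugging this comparison into the RSC bound of the first step gives (\ref{re1}) after clearing denominators. The main obstacle I anticipate is the second step: one has to correctly identify that $\A^k\setminus\A^{k-1}$ really is a ``top-$s$'' set \emph{inside} $\I^{k-1}$ (not a top-$T$ set globally), which hinges on the vanishing of $\nabla_{\A^{k-1}}F(\beta^k)$ and on the definition of the active set used in the line search; once this is established, the ``min dominates average'' trick is routine.
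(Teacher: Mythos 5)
Your proposal is correct and follows essentially the same route as the paper: it uses $\nabla_{\A^{k-1}}F(\beta^k)=\mathbf{0}$ to reduce $\langle\nabla F(\beta^k),\beta^*-\beta^k\rangle$ to the coordinates $\A^*\setminus\A^{k-1}$, an AM--GM-type step with $m_{K+T}$-RSC to bound $F(\beta^k)-F(\beta^*)$ by $\|\nabla_{\A^*\setminus\A^{k-1}}F(\beta^k)\|^2/(2m_{K+T})$, and the same combinatorial comparison between $\A^*\setminus\A^{k-1}$ and $\A^k\setminus\A^{k-1}$ via the top-$s$ property of the active set inside $\I^{k-1}$. Your split of $\A^*\setminus\A^{k-1}$ into its intersection with $\A^k\setminus\A^{k-1}$ and the remainder is in fact slightly more careful than the paper's stated ``min $\geq$ max'' claim (which is imprecise when the two sets overlap), but it lands on the identical cardinality factor and the identical bound.
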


\begin{proof}
Obviously, this lemma holds when $\A^k=\A^{k-1}$ or $F(\beta^k)\leq F(\beta^*)$. Therefore, it is assumed that $\A^k\neq \A^{k-1}$ and $F(\beta^k)> F(\beta^*)$. From Assumption \ref{ass2}, we have
$$F(\beta^*)-F(\beta^k)-\langle\nabla F(\beta^k), \beta^*-\beta^k \rangle\geq \frac{m_{K+T}}{2}||\beta^*-\beta^k||^2.$$
On the one hand,
\begin{align}\label{eq1}
-\langle\nabla F(\beta^k), \beta^*-\beta^k \rangle
&=\langle \nabla F(\beta^k), -\beta^* \rangle\\ \notag
&\geq \frac{m_{K+T}}{2}||\beta^*-\beta^k||^2+F(\beta^k)-F(\beta^*)\\ \notag
&\geq \sqrt{2 m_{K+T}}||\beta^*-\beta^k||\cdot \sqrt{F(\beta^k)-F(\beta^*)}.
\end{align}
Then on the other hand, from the definitions of $\A^k$ and $\A^*$, we get $supp(\nabla F(\beta^k))\bigcap supp(\beta^*)=$ $\A^*\backslash \A^{k-1}$.
Thus, we have
\begin{align}\label{formF}
\langle \nabla F(\beta^k), -\beta^* \rangle=\langle \nabla_{\A^*\backslash \A^{k-1}}F(\beta^k),-\beta_{\A^*\backslash \A^{k-1}}^*\rangle
\leq ||\nabla_{\A^*\backslash \A^{k-1}}F(\beta^k)||\cdot ||\beta_{\A^*\backslash \A^{k-1}}^*||.
\end{align}
Because $\beta^k_{\A^k \backslash \A^{k-1}}=\beta^k_{\A^* \backslash \A^{k-1}}=\text{0}$, so we have $$\min\{|(\nabla_{\A^k \backslash \A^{k-1}}F(\beta^k))_i|, i=1,\cdots,|\A^k \backslash \A^{k-1}|\}\geq \max\{|(\nabla_{\A^* \backslash \A^{k-1}}F(\beta^k))_j|, j=1,\cdots, |\A^* \backslash \A^{k-1}|\}.$$ Then
$$(|\A^k \backslash \A^{k-1}|+|\A^* \backslash \A^{k-1}|)\|\nabla_{\A^k\backslash \A^{k-1}}F(\beta^k)\|^2 \geq |\A^k \backslash \A^{k-1}|\|\nabla_{\A^*\backslash \A^{k-1}}F(\beta^k)\|^2,$$
which yields $$||\nabla_{\A^*\backslash \A^{k-1}}F(\beta^k)||^2 \leq \frac{|\A^k\backslash \A^{k-1}|+|\A^*\backslash \A^{k-1}|}{|\A^k\backslash \A^{k-1}|} \cdot ||\nabla_{\A^k\backslash \A^{k-1}}F(\beta^k)||^2,$$
i.e., $||\nabla_{\A^*\backslash \A^{k-1}}F(\beta^k)|| \leq \sqrt{\frac{|\A^k\backslash \A^{k-1}|+|\A^*\backslash \A^{k-1}|}{|\A^k\backslash \A^{k-1}|}}\|\nabla_{\A^k\backslash \A^{k-1}}F(\beta^k)\|$. So (\ref{formF}) can launch
\begin{align}\label{eq2}
\langle \nabla F(\beta^k), -\beta^* \rangle &\leq \sqrt{\frac{|\A^k\backslash \A^{k-1}|+|\A^*\backslash \A^{k-1}|}{|\A^k\backslash \A^{k-1}|}} ||\nabla_{\A^k\backslash \A^{k-1}}F(\beta^k)|| \cdot ||(\beta^*-\beta^k)_{\A^*\backslash \A^{k-1}}^*||\\ \notag
&\leq \sqrt{\frac{|\A^k\backslash \A^{k-1}|+|\A^*\backslash \A^{k-1}|}{|\A^k\backslash \A^{k-1}|}}\cdot ||\nabla_{\A^k\backslash \A^{k-1}}F(\beta^k)||\cdot ||\beta^*-\beta^k||.
\end{align}
Combining (\ref{eq1}) and (\ref{eq2}),
$$\sqrt{2 m_{K+T}}||\beta^*-\beta^k||\cdot \sqrt{F(\beta^k)-F(\beta^*)}\leq \sqrt{\frac{|\A^k\backslash \A^{k-1}|+|\A^*\backslash \A^{k-1}|}{|\A^k\backslash \A^{k-1}|}}\cdot \|\nabla_{\A^k\backslash \A^{k-1}}F(\beta^k)\|\cdot \|\beta^*-\beta^k\|,$$
i.e., $$\|\nabla_{\A^k\backslash \A^{k-1}}F(\beta^k)\|^2\geq \frac{2 m_{K+T} |\A^k\backslash \A^{k-1}|}{|\A^k\backslash \A^{k-1}|+|\A^*\backslash \A^{k-1}|} [F(\beta^k)-F(\beta^*)].$$
\end{proof}

\subsection{Proof of Theorem \ref{le4}}\label{thel2}

\begin{proof}
Obviously, (\ref{form3}) holds if $\|\beta^k-\beta^*\|< \frac{2}{m_{K+T}}\|\nabla F(\beta^*)\|$, then we consider $\|\beta^k-\beta^*\|\geq$ $\frac{2}{m_{K+T}}\|\nabla F(\beta^*)\|$ below. From the known Assumption \ref{ass2}, we have
$$F(\beta^k)-F(\beta^*)\geq \langle \nabla F(\beta^*), \beta^k-\beta^* \rangle+\frac{m_{K+T}}{2}\|\beta^k-\beta^*\|^2\geq -\|\nabla F(\beta^*)\|\cdot \|\beta^k-\beta^*\|+\frac{m_{K+T}}{2}\|\beta^k-\beta^*\|^2.$$
Regrouping,
$$\frac{m_{K+T}}{2}\|\beta^k-\beta^*\|^2-\|\nabla F(\beta^*)\|\cdot \|\beta^k-\beta^*\|-[F(\beta^k)-F(\beta^*)]\leq0,$$
which is an univariate quadratic inequality about $\|\beta^k-\beta^*\|$. Therefore, we have
\begin{align}\label{ine1}
\|\beta^k-\beta^*\|& \leq  \frac{\|\nabla F(\beta^*)\|+\sqrt{\|\nabla F(\beta^*)\|^2+2m_{K+T}[F(\beta^k)-F(\beta^*)]}}{m_{K+T}}\notag \\
&\leq \frac{2\|\nabla F(\beta^*)\|}{m_{K+T}}+\sqrt{\frac{2\max\{F(\beta^k)-F(\beta^*),0\}}{m_{K+T}}}.
\end{align}

Then reviewing and analyzing the SDARL algorithm, we can get $\tau^k M_{2T}\leq 1-\sigma$. From $m_{K+T}\leq M_{2T}$, we have $$0<\tau^k m_{K+T}\leq \tau^k M_{2T}\leq 1-\sigma<1.$$ Combining the line search with Lemma \ref{lere}, then
$$F(\beta^{k+1})-F(\beta^k)\leq -\frac{2\sigma \tau^k m_{K+T} |\A^k \backslash \A^{k-1}|}{|\A^k \backslash \A^{k-1}|+|\A^* \backslash \A^{k-1}|} [F(\beta^k)-F(\beta^*)].$$
We know $\frac{|\A^k \backslash \A^{k-1}|}{|\A^k \backslash \A^{k-1}|+|\A^*\backslash \A^{k-1}|}\geq \frac{1}{K+1}$ on account of $\frac{|\A^*\backslash \A^{k-1}|}{|\A^k \backslash \A^{k-1}|}\leq K$. Then we have
$$F(\beta^{k+1})-F(\beta^k)\leq - \frac{2 \sigma \tau^k m_{K+T}}{K+1}[F(\beta^k)-F(\beta^*)].$$
Reorganizing,
\begin{align}\label{conresult}
F(\beta^{k+1})-F(\beta^*)\leq \eta_{\tau}[F(\beta^k)-F(\beta^*)], \quad \eta_{\tau}:=1-\frac{2 \sigma \tau^k m_{K+T}}{K+1}\in (0,1).
\end{align}

Then we have
$$F(\beta^k)-F(\beta^*)\leq \eta_{\tau} [F(\beta^{k-1})-F(\beta^*)]\leq \eta_{\tau}^k [F(\beta^{0})-F(\beta^*)].$$
While
\begin{align*}
F(\beta^0)-F(\beta^*)
&\leq \langle \nabla F(\beta^*), \beta^0-\beta^* \rangle+\frac{M_{K+T}}{2}\|\beta^0-\beta^*\|^2\\
&\leq \|\nabla F(\beta^*)\|\cdot \|\beta^0-\beta^*\|+\frac{M_{K+T}}{2}\|\beta^0-\beta^*\|^2\\
&=\|\beta^0-\beta^*\|\cdot\big[\|\nabla F(\beta^*)\|+\frac{M_{K+T}}{2}\|\beta^0-\beta^*\|\big].
\end{align*}
Then we get
$$
F(\beta^k)-F(\beta^*)
\leq \eta_{\tau}^k \|\beta^0-\beta^*\|\cdot\big[\|\nabla F(\beta^*)\|+\frac{M_{K+T}}{2}\|\beta^0-\beta^*\|\big].
$$
Because of the presupposition $\|\beta^k-\beta^*\|\geq \frac{2}{m_{K+T}}\|\nabla F(\beta^*)\|$ for all $k\geq0$, we have
$$F(\beta^k)-F(\beta^*)\leq \eta_{\tau}^k \|\beta^0-\beta^*\|\cdot \frac{m_{K+T}+M_{K+T}}{2}\|\beta^0-\beta^*\|=\frac{m_{K+T}+M_{K+T}}{2}\eta_{\tau}^k \|\beta^0-\beta^*\|^2.$$
Combining with (\ref{ine1}), we can get
\begin{align*}
\|\beta^k-\beta^*\|\leq \frac{2}{m_{K+T}}\|\nabla F(\beta^*)\|+\sqrt{1+\frac{M_{K+T}}{m_{K+T}}}(\sqrt{\eta_{\tau}})^k\|\beta^0-\beta^*\|.
\end{align*}
\end{proof}

\subsection{Proof of Theorem \ref{le5}}\label{thsupp}
\begin{proof}
Considering the Theorem \ref{le4} and the preset conditions in Theorem \ref{le5}, we have
\begin{align*}
\|\beta^k-\beta^*\| &\leq \frac{2}{m_{K+T}}\|\nabla F(\beta^*)\| +\sqrt{1+\frac{M_{K+T}}{m_{K+T}}}(\sqrt{\eta_{\tau}})^k\|\beta^*\|\\
&\leq \frac{2}{3}\|\beta_{\A^*}^{*}\|_{\min}+\sqrt{1+\frac{M_{K+T}}{m_{K+T}}}(\sqrt{\eta_{\tau}})^k\|\beta^*\|\\
& <\|\beta_{\A^*}^{*}\|_{\min},
\end{align*}
if $k>\log_{\frac{1}{\eta_{\tau}}}9(1+\frac{M_{K+T}}{m_{K+T}})\frac{\|\beta^*\|^2}{\|\beta_{\A^*}^{*}\|_{\min}^2}$. This implies that $\A^*\subseteq \A^k$.
\end{proof}
\end{appendix}

\bibliographystyle{Chicago}
\bibliography{GLR}

\end{document}